\documentclass{article} 
\usepackage{iclr2026_conference,times}
\iclrfinalcopy

\usepackage{amsmath,amsthm,amsfonts,amssymb,bm}
\usepackage{algorithm,algpseudocode}
\usepackage{bbm,dsfont}
\usepackage{booktabs}
\usepackage{enumitem}
\usepackage{float}
\usepackage{hyperref,cleveref}
\usepackage{makecell}
\usepackage{pgfplots,pgfplotstable}
\usepackage{tikz}
\usepackage{pgfplots}
\usepackage{xcolor}
\usepackage{listings}

\usetikzlibrary{automata,arrows.meta,positioning}
\pgfplotsset{compat=1.17}
\usepgfplotslibrary{fillbetween}

\newtheorem{definition}{Definition}
\newtheorem{assumption}{Assumption}
\newtheorem{theorem}{Theorem}
\newtheorem{lemma}{Lemma}

\title{Frozen Policy Iteration: Computationally Efficient RL under Linear $Q^{\pi}$ Realizability for Deterministic Dynamics}

\newcommand{\affiliate}[1]{\textsuperscript{\textnormal{#1}}}
\author{Yijing Ke\affiliate{1},\quad Zihan Zhang\affiliate{2},\quad Ruosong Wang\affiliate{3} \\[1ex]
\affiliate{1}School of EECS, Peking University \\
\affiliate{2}Department of Computer Science and Engineering, HKUST \\
\affiliate{3}CFCS and School of
Computer Science, Peking University \\
\texttt{keyijing@stu.pku.edu.cn} \\
\texttt{zihanz@cse.ust.hk} \\
\texttt{ruosongwang@pku.edu.cn}
}

\allowdisplaybreaks[4]

\def\A{\mathcal{A}}
\def\D{\mathcal{D}}
\def\E{\mathbb{E}}
\def\F{\mathcal{F}}
\def\N{\mathbb{N}}
\def\S{\mathcal{S}}
\def\P{\mathbb{P}}
\def\R{\mathbb{R}}

\newcommand{\argmin}{\mathop{\arg\min}}
\newcommand{\argmax}{\mathop{\arg\max}}
\newcommand{\cover}{\operatorname{Cover}}
\newcommand{\poly}{\operatorname{poly}}
\newcommand{\numberthis}{\addtocounter{equation}{1}\tag{\theequation}}

\counterwithin*{ALG@line}{algorithm}

\begin{document}

\maketitle

\begin{abstract}
We study computationally and statistically efficient reinforcement learning under the linear $Q^{\pi}$ realizability assumption, where any policy's $Q$-function is linear in a given state-action feature representation. Prior methods in this setting are either computationally intractable, or require (local) access to a simulator. In this paper, we propose a computationally efficient online RL algorithm, named \emph{Frozen Policy Iteration}, under the linear $Q^{\pi}$ realizability setting that works for Markov Decision Processes (MDPs) with stochastic initial states, stochastic rewards and deterministic transitions. Our algorithm achieves a regret bound of $\widetilde{O}(\sqrt{d^2H^6T})$, where $d$ is the dimensionality of the feature space, $H$ is the horizon length, and $T$ is the total number of episodes. Our regret bound is optimal for linear (contextual) bandits which is a special case of our setting with $H = 1$.

Existing policy iteration algorithms under the same setting heavily rely on repeatedly sampling the same state by access to the simulator, which is not implementable in the online setting with stochastic initial states studied in this paper.  In contrast, our new algorithm circumvents this limitation by strategically using only high-confidence part of the trajectory data and freezing the policy for well-explored states, which ensures that all data used by our algorithm remains effectively \emph{on-policy} during the whole course of learning. 
We further demonstrate the versatility of our approach by extending it to the Uniform-PAC setting and to  function classes with bounded eluder dimension.  

\end{abstract}
\section{Introduction}
In modern reinforcement learning (RL), function approximation schemes are often employed to handle large state spaces. During the past decade, significant progress has been made towards understanding the theoretical foundation of RL with function approximation. By now, a rich set of tools and techniques have been developed~\citep{jiang2017contextual, sun2019model, jin2020provably, wang2020reinforcement, du2021bilinear, jin2021bellman, foster2021statistical}, which have led to statistically efficient RL with function approximation under various structural assumptions. 
However, despite these advancements, a remaining challenge is that many of these statistically efficient RL algorithms are computationally inefficient, which severely limits the practical applicability of these approaches. 
Recently, these computational-statistical gaps have drawn significant interest from the RL theory community, leading to computationally efficient algorithms~\citep{zanette2020provably, wu2024computationally, golowich2024linear} and computational hardness results~\citep{kane2022computational}. 

For the special case of RL with linear function approximation, a structural assumption common in the literature is the linear bellman completeness assumption~\citep{zanette2020learning, zanette2020provably, wu2024computationally, golowich2024linear}, which, roughly speaking, assumes that the Bellman backups of linear state-action value functions are still linear w.r.t.~a fixed feature representation given to the learner.
This assumption naturally arises when analyzing value-iteration type RL algorithms (e.g., FQI~\citep{munos2005error}) with linear function approximation. 
It is by now well-understood that RL with linear bellman completeness is statistically tractable~\citep{zanette2020learning, du2021bilinear, jin2021bellman}, in the sense that a near-optimal policy can be learned with polynomial sample complexity. 
However, the algorithm employed to achieve such polynomial sample complexity require solving computationally intractable optimization problems, and recently a number of algorithms have been proposed to fill this computational-statistical gap, leading to statistically and computationally efficient RL under  linear bellman completeness for MDPs satisfying the explorability assumption~\citep{zanette2020provably}, MDPs with deterministic dynamics, stochastic rewards and stochastic initial states~\citep{wu2024computationally}, and MDPs with constant number of actions~\citep{golowich2024linear}. 

A major drawback of the linear bellman completeness assumption, as observed by~\citet{chen2019information}, is that it is \emph{not monotone} in the function class under consideration. For the linear function case, adding more features into the feature representation may break the assumption. 
This makes the bellman completeness much less desirable for practical scenarios, as in modern RL applications, neural networks with billions of parameters are employed as the function approximators, and feature selection could be intractable in such cases.

For RL with linear function approximation, another common assumption is the linear $Q^{\pi}$ realizability~\citep{du2019good, lattimore2020learning, yin2022efficient, weisz2022confident, weisz2023online}, which assumes that any policy's $Q$-function is linear in a fixed state-action feature representation given to the learner. 
This assumption naturally arises when analyzing policy-iteration type RL algorithms (e.g., LSPI~\citep{lagoudakis2003least}) with linear function approximation, and it has the desired monotonicity property, in the sense that adding more features never hurts realizability. 
However, unlike linear bellman completeness, the computational-statistical gap under linear $Q^{\pi}$ realizability is largely unexplored.
Under this assumption, the first method with polynomial sample complexity is due to~\citet{weisz2023online}, and similar to linear bellman completeness, existing algorithms~\citep{weisz2023online,mhammedi2025sample} with such sample complexity requires either computationally intractable optimization problems or oracles. 
Other algorithms~\citep{du2019good, lattimore2020learning, yin2022efficient, weisz2022confident} with polynomial running time further require (local) access to a simulator, which enables the algorithm to restart from any visited states.
In contrast, in the standard online RL setting, there is no known RL algorithm that is both statistically and computationally efficient.
In fact, under the linear $Q^{\pi}$ realizability, even when the transition dynamics are deterministic, it is unclear if  computationally efficient RL is possible.

In terms of methodology, the algorithm by~\citet{weisz2023online} employs the {\em global optimism} approach and relies on maintaining large and complex version spaces. It is unclear how to implement such approach in a computationally efficient manner, and obtaining  an algorithm with polynomial running time under linear $Q^{\pi}$ realizability was left as an open problem by~\citet{weisz2023online}. 
\citet{mhammedi2025sample} achieves sample and oracle efficient under linear $Q^{\pi}$ realizability by relying on a cost-sensitive classification oracle. However, such oracle could be NP-hard to implement in the worst case. 
Meanwhile, the algorithms by~\cite{yin2022efficient, weisz2022confident} employ the standard approximate policy-iteration framework and assume local access to a simulator. In these algorithms, before adding a state-action pair $(s, a)$ into the dataset, the algorithms use multiple rollouts starting from $(s, a)$ to ensure all successor states are well-explored, so that the policy value associated with $(s, a)$ is accurate. 
Once a new under-explored state appears, the whole algorithm restarts from that state and perform additional rollouts to make it well-explored. 
Consequently, these algorithms critically rely on the simulator to revisit the same state for multiple times. 
However, such resampling mechanism is not implementable in the standard online RL setting when the state-space is large.
Even with deterministic dynamics, as long as the initial state is stochastic, we might not even encounter the same state twice during the whole learning process. 

In this paper, we provide the first computationally efficient algorithm under the linear $Q^{\pi}$ realizability assumption with stochastic initial states, stochastic rewards and deterministic dynamics in the online RL setting. 
Our algorithm, named \emph{Frozen Policy Iteration} (FPI), achieves a regret bound of $\widetilde{\mathcal{O}}(\sqrt{d^2H^6T})$\footnote{Throughout the paper, we use $\widetilde{\mathcal{O}}(\cdot)$to suppress logarithm factors. }, where $d$ is the dimensionality of the feature space, $H$ is the horizon length, and $T$ is the total number of episodes, which is optimal for linear (contextual) bandits~\citep{dani2008stochastic}, a special case with $H = 1$. 
Unlike existing policy iteration algorithms~\citep{du2019good, lattimore2020learning, yin2022efficient, weisz2022confident}, our new algorithm circumvents the resampling issue by strategically using only high-confidence part of the trajectory data and {\em freezing} the policy for well-explored states, which ensures that all data used by our algorithm remains effectively {\em on-policy} during the whole course of learning. 
We further demonstrate the versatility of our approach by extending it to the Uniform-PAC setting~\citep{dann2017unifying} and to  function classes with bounded eluder dimension~\citep{russo2013eluder}.  
Due to the simplicity of our algorithm, we are able to give a proof-of-concept implementation on standard control tasks to illustrate practicality and to ablate the role of freezing.

\section{Related Work}

\paragraph{Linear $Q^{\pi}$ Realizability. } The connection and difference between our work and prior work under Linear $Q^{\pi}$ Realizability~\citep{du2019good, lattimore2020learning, yin2022efficient, weisz2022confident, weisz2023online} have already been discussed in the introduction. 
In Table~\ref{table:compare}, we compare our new result with prior works to make the difference clear. 
We also note that the algorithms by~\citet{lattimore2020learning, yin2022efficient, weisz2022confident} work in the discounted setting, while our new algorithm and the algorithm by~\citet{du2019good, weisz2023online} work under the finite-horizon setting.  

\paragraph{Linear $Q^*$ Realizability. } The Linear $Q^*$ Realizability assumes that the optimal $Q$-function is linear in a given state-action feature representation. 
Since Linear $Q^{\pi}$ Realizability implies Linear $Q^*$ Realizability, algorithms under Linear $Q^*$ Realizability could also be relevant to our setting. 
However, existing algorithms that work under this assumption either require lower bounded suboptimality gap and deterministic dynamics with deterministic initial states~\citep{du2019provably, du2020agnostic}, fully deterministic system (i.e., deterministic rewards and initial states)~\citep{wen2013efficient}, or lower bounded suboptimality gap with local access to a simulator~\citep{li2021sample}. 
All these assumptions are much stronger than those assumed by prior algorithms under Linear $Q^{\pi}$ Realizability.
\paragraph{Uniform-PAC. } \citet{dann2017unifying} propose the Uniform-PAC setting as a bridge between the  PAC setting and the regret minimization setting, and develop the first tabular RL algorithm with Uniform-PAC guarantees. 
Uniform-PAC guarantees are also achieved for RL with function approximation~\citep{he2021uniform, wu2023uniform}. 
To achieve the Uniform-PAC guarantee, we use an accuracy level framework similar to that of~\citet{he2021uniform}, though the details are substantially different since our algorithm is based on policy-iteration, while their algorithm is based on value-iteration. 

\begin{table}[htbp]
    \centering
    \begin{tabular}{cccccc}
    \toprule
     & Access model & \makecell{Stochastic \\ transitions} & \makecell{PAC \\ guarantee} & \makecell{Regret \\ guarantee} \\
    \hline
    \citet{du2019good} & generative model & $\checkmark$ & $\checkmark$ & $\times$ \\
    \hline
    \citet{lattimore2020learning} & generative model & $\checkmark$ & $\checkmark$ & $\times$ \\
    \hline
    \citet{yin2022efficient} & local access & $\checkmark$ & $\checkmark$ & $\times$ \\
    \hline
    \citet{weisz2022confident} & local access & $\checkmark$ & $\checkmark$ & $\times$ \\
    \hline
    \citet{weisz2023online} & \makecell{online RL \\ (computationally \\ intractable)} & $\checkmark$ & $\checkmark$ & $\times$ \\
    \hline
    This work & online RL & $\times$ & $\checkmark$ & $\checkmark$ \\
    \bottomrule
    \end{tabular}
    \caption{Comparison with prior works.}
    \label{table:compare}
\end{table}

\section{Preliminaries}

A finite-horizon Markov Decision Process (MDP) is defined by the tuple $(\S,\A,H,P,R,\mu)$. $H\in\N^+$ is the horizon. 
$\S=\S_1\cup\cdots\cup\S_H$ is the state space, partitioned by the horizon $H$, with $\S_h$ denoting the set of states at stage $h\in[H]:=\{1,\cdots,H\}$. 
We assume that $\S_1,\cdots,\S_H$ are disjoint sets. 
$\A$ is the action space. 
$P:\S_h\times\A\to\Delta(\S_{h+1})$ is the transition function, where $\Delta(\S_{h+1})$ denotes the set of probability distributions over $S_{h+1}$. 
$R:\S\times\A\to\Delta([0,1])$ is the reward distribution, where $R(s,a)$ is a distribution on $[0,1]$ with mean $r(s,a)$, indicating the stochastic reward received by taking action $a$ on state $s$. 
$\mu\in\Delta(S_1)$ is the probability distribution of the initial state. 
Given a policy $\pi:\S\to\A$, for each $s\in\S_h$ and $a\in\A$, we define the state-action value function of policy $\pi$ as $Q^{\pi}(s,a)=\E\left[\left.\sum_{i=h}^Hr_i\right|s_h=s,a_h=a,\pi\right]$ 
and the state value function as $V^{\pi}(s)=Q^{\pi}(s,\pi(s))$. Let $\pi^*$ be the optimal policy.

For an RL algorithm, define its regret in the first $T$ episodes as the sum of the suboptimality gap of the first $T$ trajectories,
$\operatorname{Reg}(T)=\sum_{t=1}^T\left(V^{\pi^*}(s_1^{(t)})-\sum_{h=1}^H r(s_h^{(t)},a_h^{(t)})\right)$, 
where $s_1^{(t)},a_1^{(t)},\cdots,s_H^{(t)},a_H^{(t)}$ is the trajectory in the $t$-th episode.

Let $V\in\R^{d\times d}$ be symmetric positive definite. For any $x\in\R^d$ define the elliptical norm induced by $V$ as $\|x\|_V = \sqrt{x^\top V x}$. 
A zero-mean random variable $X$ is called $\sigma$-subGaussian if $\E\left[e^{\lambda X}\right]\le e^{\lambda^2\sigma^2/2}$ for all $\lambda\in\R$.

In this work, we assume the Linear $Q^{\pi}$ Realizability assumption, formally stated below. 
\begin{assumption}[Linear $Q^{\pi}$ Realizability]\label{assump:realizability}
    MDP $(\S,\A,H,P,R,\mu)$ is $\kappa$-approximate $Q^\pi$ realizable with a feature map $\phi:\S\times\A\to\R^d$ for some $\kappa>0$. That is, for any policy $\pi$, there exists $\theta_1^\pi,\cdots,\theta_H^\pi\in\R^d$ such that for any $h\in[H],s\in\S_h$ and $a\in\A$,
    $
    \left|Q^\pi(s,a)-\left\langle\phi(s,a),\theta_h^\pi\right\rangle\right|\le\kappa.
    $
\end{assumption}
We assume that the feature map $\phi$ is given to learner. Since $Q^\pi(s,a)\in[0,H]$, we also make the following assumption, which ensures that $\phi(s, a)$ and $\theta_h^\pi$ have bounded norm for all $(s, a) \in \S \times \A$ and $h \in [H]$. 
\begin{assumption}[Boundedness]\label{assump:bound}
    For any $h\in[H]$, $\Vert\phi(s,a)\Vert_2\le 1$ for all $s\in\S_h,a\in\A$, and $\left\Vert\theta_h^\pi\right\Vert_2\le\sqrt{d}H$ for any policy $\pi$.
\end{assumption}
We further require deterministic state transition.
\begin{assumption}[Deterministic Transitions]
    For any $s\in\S$ and $a\in\A$, $P(s,a)$ is a one-point distribution, i.e., the system transitions to a unique state after taking action $a$ on state $s$.
\end{assumption}

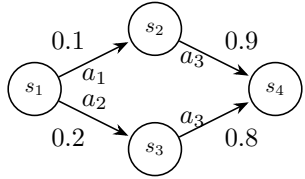
\begin{figure}[htbp]
    \centering
    \begin{tikzpicture}[->,>=Stealth,shorten >=1pt,auto,node distance=2cm,semithick]
        \node[state,scale=0.8] (1) {$s_1$};
        \node[state,scale=0.8,right of=1,yshift=1cm] (2) {$s_2$};
        \node[state,scale=0.8,right of=1,yshift=-1cm] (3) {$s_3$};
        \node[state,scale=0.8,right of=2,yshift=-1cm] (4) {$s_4$};
        \draw (1) edge node {$0.1$} node[below] {$a_1$} (2);
        \draw (1) edge node[below left] {$0.2$} node[above] {$a_2$} (3);
        \draw (2) edge node {$0.9$} node[left] {$a_3$} (4);
        \draw (3) edge node[below right] {$0.8$} node[left] {$a_3$} (4);
    \end{tikzpicture}
    \caption{An example of MDP satisfying linear $Q^{\pi}$ realizability, where $\phi(s_1,a_1)=\phi(s_1,a_2)=e_1$, $\phi(s_2,a_3)=e_2$, $\phi(s_3,a_3)=e_3$.}
    \label{example:lin_q_pi}
\end{figure}

We note that linear $Q^\pi$ realizability does not imply linear bellman completeness even in MDPs with deterministic transitions.
Figure~\ref{example:lin_q_pi} shows an example satisfying linear $Q^\pi$ realizability which is not linear bellman complete.
Moreover, although we assume the transition to be deterministic, the initial state distribution is stochastic (in fact, our algorithm allows for adversarially chosen initial states), and the reward signal could be stochastic. Therefore, learning is still challenging in this case.
Also, stochastic initial state could in fact cover a number of standard RL benchmarks including classical control tasks (e.g., CartPole, Acrobot)~\citep{towers2024gymnasium} and MuJoCo Environments~\citep{todorov2012mujoco}, where randomness primarily arises from the reset distribution rather than process noise. For Atari Games, when sticky actions are disabled, the simulator yields deterministic dynamics; stochasticity is typically injected only at reset. 

\newcommand{\frozen}{\texttt{Frozen}}

\section{Warmup: Frozen Policy Iteration in the PAC Setting}\label{sec:pac}
In this section, we present the probably approximately correct (PAC) version of our main algorithm, which serves as a warmup of the more complicated regret minimization algorithm in Section~\ref{sec:regret}.

\subsection{The Algorithm}

\begin{algorithm}[htbp]
\caption{Frozen Policy Iteration-PAC (FPI-PAC)}
\label{alg:pac}
\begin{algorithmic}[1]
\State Initialize $\D_{1, h} = \{\}$ for all $h \in [H]$
\For{$t$ = $1,\cdots,T$}
   \State For each $h \in [H]$ and $s \in \S_h $, define 
	\[
	\pi_t(s)=\begin{cases}
    \argmax_{a\in\A}Q_t(s,a) & \left(s,a\right)\in\cover\left(\D_{t, h},\varepsilon\right)\text{ for all }a\in\A\\
    \text{any $a\in\A$ such that $(s,a)\notin\cover\left(\D_{t, h},\varepsilon\right)$} & \text{otherwise}
\end{cases},
    \]
    where $\cover\left(\D_{t, h},\varepsilon\right)$ is as defined in \eqref{eq:cover} and $Q_t(s,a)$ is as defined in \eqref{eq:Qt}
    \State Receiving $s_1^{(t)}, a_1^{(t)}, r_1^{(t)} ,s_{2}^{(t)},  a_2^{(t)}, r_2^{(t)}, \ldots, s_H^{(t)}, a_H^{(t)}, r_H^{(t)}$ by executing $\pi_t$
    \State $h_t\gets\max\left\{h\in[H]:(s_h^{(t)},a_h^{(t)})\notin\cover(\D_{t, h},\varepsilon)\right\}$ \label{step:h_t}
    \If{$h_t$ exists}
        \State $\hat{q}_t\gets\sum_{h=h_t}^H r_h^{(t)}$
        \State Update $\D_{t + 1, h_t} \gets \D_{t, h_t}  \cup \left \{ \left(s_{h_t}^{(t)},a_{h_t}^{(t)},\hat{q}_t\right) \right\}$ and $\D_{t + 1, h} \gets \D_{t, h}$ for any $h \neq h_t$
    \EndIf
\EndFor
\end{algorithmic}
\end{algorithm}

\paragraph{Datasets.} In our algorithm, for each step $h \in [H]$, we maintain a dataset $\D_h$ which is an ordered sequence consisting of $\left(s_{h,i},a_{h,i},q_{h,i}\right)$, where $(s_{h,i},a_{h,i}) \in \S_h \times \A$ is a state-action pair, and $q_{h,i}$ is total reward obtained by following a policy $\pi_t$ starting from $(s_{h,i},a_{h,i})$.
We further define $\phi_{h,i}=\phi\left(s_{h,i},a_{h,i}\right)$ to be the feature of $(s_{h,i},a_{h,i})$. 
In the description of Algorithm~\ref{alg:pac}, for clarity, we use $\D_{t, h}$ to denote the snapshot of $\D_h$ {\em before} the $t$-th round, and we initialize $\D_{1, h}$, i.e., the dataset before the first round, to be the empty set for all $h \in [H]$. 

\paragraph{Exploration Mechanism.} For each round $t$, we first define a policy $\pi_t$, where for each state $s$ at step $h \in [H]$, we first test if for all actions $a \in \mathcal{A}$, $(s, a)$ could be covered by existing data in $\D_{t, h}$. Concretely, for a dataset $\D=\left\{(s_i,a_i, q_i)\right\}_{i=1}^n$, define
\begin{equation}\label{eq:cover}
\cover(\D,\varepsilon)=\left\{(s,a)\in\S\times\A:\Vert\phi(s,a)\Vert_{\Sigma^{-1}}\le\varepsilon\right\}
\end{equation}
where $\Sigma=\lambda I+\sum_{i=1}^n\phi(s_i,a_i)\phi(s_i,a_i)^\top$ is the regularized empirical feature covariance matrix.
Equivalently, $\cover(\D,\varepsilon)$ contains all state-action pairs $(s, a)$ for which the least squares estimate has error upper bounded by (roughly) $\varepsilon$. 
If for all actions $a \in \mathcal{A}$, $(s, a)$ lies in $\cover(\D,\varepsilon)$, we define $\pi_t(s)$ to be the greedy policy with respect to a $Q$-function $Q_t$ (defined later). 
Otherwise, there must be an action $a \in \A$ satisfying $(s, a) \notin \cover(\D,\varepsilon)$, and we define $\pi_t(s)$ to be any such $a$ to encourage exploration. 
We break tie in a consistent manner when there are multiple such $a$.

\paragraph{Avoiding Off-Policy Data and Resampling. }After defining a policy $\pi_t$, we execute $\pi_t$ to receive a trajectory sample,  forms a new dataset $\D_{t + 1, h}$ based on $\D_{t, h}$ and the collected data, and proceeds to the next round $t + 1$.
So far, the design of our algorithm does not differ significantly from the standard policy iteration framework. 
Indeed, the main novelty of Algorithm~\ref{alg:pac} lies in how we form the new dataset $\D_{t + 1, h}$ and define a $Q$-function $Q_{t + 1}$ based on $\D_{t + 1, h}$.
If not handled properly, our datasets could contain {\em off-policy} data once $\pi_t$ is updated.
In particular, for each data $\left(s_{h,i},a_{h,i},q_{h,i}\right)$, $q_{h, i}$ might deviate significantly from the $Q$-value of $(s_{h,i},a_{h,i})$ once the policy is updated. 
We note that assuming access to a simulator~\citep{du2019good, lattimore2020learning, yin2022efficient, weisz2022confident, weisz2023online}, one could simply recollect samples by executing the updated policy
starting from $(s_{h,i},a_{h,i})$ to estimate its $Q$-value with respect to the updated policy. 
Unfortunately,  in the online setting with stochastic initial states, such resampling is not implementable as we might not even encounter the same state twice during the whole learning process. 

\paragraph{Updating Datasets.} In the design of Algorithm~\ref{alg:pac}, we only include the high-confidence part of the trajectory data into the new datasets.
In particular, in Step~\ref{step:h_t} of Algorithm~\ref{alg:pac}, we define $h_t$ to be last step $h$, so that $(s_h^{(t)}, a_h^{(t)})$ could be not covered by existing data. We then add $(s_{h_t}^{(t)}, a_{h_t}^{(t)}, \hat{q}_t)$ into $\D_{t, h_t}$ to form $\D_{t + 1, h_t}$ where $\hat{q}_t$ is the total reward starting from step $h_t$. 
For all other step $h \neq h_t$, we keep $\D_{t, h}$ unchanged. 
This design choice is due to the following reason: for all steps $h > h_t$, we must have $(s_h^{(t)}, a) \in \cover\left(\D_{t, h},\varepsilon\right)$ for all $a \in \A$, since otherwise an exploratory action would have been chosen (cf.~the definition of $\pi_t$), which further implies that $\pi_t$ is a near-optimal policy for $s_{h_t + 1}^{(t)}, s_{h_t + 2}^{(t)}, \ldots, s_{H}^{(t)}$. 
On the other hand, since $(s_{h_t}^{(t)}, a_{h_t}^{(t)}) \notin \cover\left(\D_{t, h_t},\varepsilon\right)$, $a_{h_t}^{(t)}$ could be a suboptimal action for $s_{h_t}^{(t)}$. 
Because of this, we only include $(s_{h_t}^{(t)}, a_{h_t}^{(t)})$ into the new dataset, and discard all other state-action pairs on the trajectory. 

\paragraph{Freezing High-Confidence States.} It remains to define the $Q$-function $Q_t$ based on the dataset $\D_{t, h}$. 
Here, a na\"ive choice is to use all data in $\D_{t, h}$ to obtain a least-squares estimate and define $Q_t$ accordingly, but by doing so, eventually our datasets could still contain {\em off-policy} data once the policy $\pi_t$ is updated. 
In particular, although $s_h^{(t)}$ lies in the high-confidence region coverd by $\D_{t, h}$ for all $h > h_t$, the policy on state $s_h^{(t)}$ could still be changed in later rounds once we update $\pi_t$, which means our dataset contains reward sums from different policies. 
Here, our new idea is to {\em freeze} the updates on $\pi_t(s)$, once our datasets could cover $(s, a)$ for all $a \in \A$. 
Concretely, for each round $t$, for each $s \in \S_h$, we define 
\begin{equation}
    k_t(s)=\min\left\{k\in\N:(s,a)\in\cover\left(\left\{(s_{h,i},a_{h,i},q_{h, i})\right\}_{i=1}^k,\varepsilon\right)\text{ for all }a\in\A\right\}\wedge |\D_{t,h}|.\label{eq:frozen}
\end{equation}
Recall that $\D_{t, h} = \{(s_{h,i},a_{h,i},q_{h, i})\}_{i = 1}^{|\D_{t, h}|}$ is the snapshot of our dataset at step $h$ before the $t$-th round. Essentially, $k_t(s)$ is the first time we add a data into $\D_{t, h}$, after which $(s, a)$ could be covered by $\D_{t, h}$ for all $a \in \A$. 
Therefore, by defining $Q_t(s, \cdot )$ using only the first $k_t(s)$ data in $\D_{t, h}$, we effectively freeze the updates on $\pi_t(s)$ once our datasets could cover $(s, a)$ for all $a \in \A$. Therefore, for each $h \in [H]$ and $s\in\S_h$, we define 
\begin{equation}\label{eq:Qt}
    Q_t(s,a)=\tilde{Q}_{k_t(s)}(s,a),
\end{equation}
where $k_t(s)$ is as defined in \eqref{eq:frozen} and 
\[
    \tilde{Q}_k(s,a)=\left\langle\phi(s,a),\Sigma_{h,k}^{-1}\sum_{i=1}^k\phi_{h,i}q_{h,i}\right\rangle
\]
is the least squares estimate by using only the first $k$ data in $\D_h$ with 
\begin{equation}\label{eq:cov_h_k}
\Sigma_{h,k}=\lambda I+\sum_{i=1}^k\phi_{h,i}\phi_{h,i}^\top.
\end{equation}

\subsection{The Analysis}
In this section, we outline the analysis of Algorithm~\ref{alg:pac}. 
In our analysis, define 
\begin{equation}\label{eq:D_bound}
D=\frac{2d}{\varepsilon^2}\ln\left(1+\frac{4\varepsilon^{-4}}{\lambda^2}\right) .
\end{equation}
Our first lemma shows that 
for any round $t$ and $h \in [H]$, $|\D_{t,h}|$ is always upper bounded by $D$.
\begin{lemma}\label{lem:pac_dataset_size_ub}
    For any $t\ge 1,h\in[H]$, it holds that $|\D_{t,h}|\le D$.
\end{lemma}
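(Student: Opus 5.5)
The plan is the standard elliptical-potential argument. Every element added to $\D_h$ is, at the time it is added, not in $\cover(\D_{t,h},\varepsilon)$. Concretely, the $i$-th element $(s_{h,i},a_{h,i},q_{h,i})$ of $\D_h$ was added in some round $t$ with $h_t=h$ and $|\D_{t,h}|=i-1$. By Step~\ref{step:h_t}, we have $(s_{h,i},a_{h,i})\notin\cover(\D_{t,h},\varepsilon)$, which means
\[
\|\phi_{h,i}\|_{\Sigma_{h,i-1}^{-1}}>\varepsilon,
\]
where $\Sigma_{h,i-1}$ is as in \eqref{eq:cov_h_k}. Fix $h$ and set $n=|\D_{t,h}|$. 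Then all $n$ features satisfy $\|\phi_{h,i}\|^2_{\Sigma_{h,i-1}^{-1}}>\varepsilon^2$.

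Next we compare two bounds on $\ln\det\Sigma_{h,n}$. By the matrix determinant lemma,
\[
\det\Sigma_{h,i}=\det\Sigma_{h,i-1}\bigl(1+\|\phi_{h,i}\|^2_{\Sigma_{h,i-1}^{-1}}\bigr),
\]
so
\[
\ln\frac{\det\Sigma_{h,n}}{\det(\lambda I)}=\sum_{i=1}^n\ln\bigl(1+\|\phi_{h,i}\|^2_{\Sigma_{h,i-1}^{-1}}\bigr)\ge n\ln(1+\varepsilon^2).
\]
On the other hand, by AM--GM on the eigenvalues together with $\|\phi\|_2\le 1$ (Assumption~\ref{assump:bound}),
\[
\ln\frac{\det\Sigma_{h,n}}{\lambda^d}\le d\ln\Bigl(1+\frac{n}{d\lambda}\Bigr).
\]
Combining the two gives
\[
n\ln(1+\varepsilon^2)\le d\ln\Bigl(1+\frac{n}{d\lambda}\Bigr).
\]

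The remaining step, which is the only mildly delicate one, is to solve this implicit inequality for $n$ and match the specific constant in \eqref{eq:D_bound}. Assuming $\varepsilon\le1$, we have $\ln(1+\varepsilon^2)\ge\varepsilon^2/2$, hence
\[
n\le\frac{2d}{\varepsilon^2}\ln\Bigl(1+\frac{n}{d\lambda}\Bigr).
\]
Suppose for contradiction that $n>D$. Since $x\mapsto x-\frac{2d}{\varepsilon^2}\ln(1+x/(d\lambda))$ is increasing for $x$ beyond the point where $1+x/(d\lambda)=2x\cdot$const, it suffices to check the inequality fails at $x=D$, i.e.\ that
\[
D>\frac{2d}{\varepsilon^2}\ln\Bigl(1+\frac{D}{d\lambda}\Bigr),
\]
equivalently $1+D/(d\lambda)<1+4\varepsilon^{-4}/\lambda^2$. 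This reduces to bounding $D/(d\lambda)=\frac{2}{\varepsilon^2\lambda}\ln(1+4\varepsilon^{-4}\lambda^{-2})$ using $\ln(1+y)\le\sqrt{y}$, which gives $\frac{2}{\varepsilon^2\lambda}\cdot\frac{2}{\varepsilon^2\lambda}=4\varepsilon^{-4}\lambda^{-2}$. That yields equality, so a little slack is needed: use the strict inequality $\ln(1+y)<\sqrt y$ for $y>0$, or handle the $+1$ terms carefully. I expect this constant-chasing to be the main place where care is needed, and I would first verify the monotonicity regime, since $D$ is at least the stationary point for the relevant parameter ranges. If $\varepsilon>1$ is allowed, use instead that $\|\phi\|_{\Sigma^{-1}}\le\lambda^{-1/2}$, so that for $\varepsilon^2\ge1/\lambda$ no point is ever added; otherwise adjust with $\ln(1+\varepsilon^2)\ge\varepsilon^2/(1+\varepsilon^2)$.
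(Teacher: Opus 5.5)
Your proof is correct and is essentially the paper's argument: both use the log-determinant (elliptical potential) bound together with $\varepsilon\le 1$ to reach $n\le\frac{2d}{\varepsilon^2}\ln\bigl(1+\frac{n}{d\lambda}\bigr)$, and the only difference is how this is solved — instead of your contradiction-plus-monotonicity argument, the paper first applies $\ln(1+x)\le\sqrt{x}$ to get the crude bound $n\le 4d\varepsilon^{-4}/\lambda$ and then plugs that back into the logarithm, which gives exactly $D$ from \eqref{eq:D_bound} with no strict inequality or stationary-point check needed. Your closing $\varepsilon>1$ patch cannot work, however, because for $1<\varepsilon^2<1/\lambda$ the stated bound itself can fail (e.g.\ $d=1$, $\lambda=10^{-6}$, $\varepsilon^2=100$ gives $D\approx 0.4$, yet the first unit-norm feature is uncovered and gets added), so the lemma should be read with $\varepsilon\le 1$, which is exactly the regime the paper's proof uses (there $\varepsilon=2^{-l}$).
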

The proof of Lemma~\ref{lem:pac_dataset_size_ub} is based on the observation that if we add a data $(s_{h, i}, a_{h, i}, q_{h, i})$ into $\D_{t + 1, h}$ in the $t$-th round, we must have $(s_{h, i}, a_{h, i}) \notin\cover(\D_{t, h},\varepsilon)$. Therefore, by the standard elliptical potential lemma~\citep{lattimore2020bandit}, Lemma~\ref{lem:pac_dataset_size_ub} holds.

Our second lemma shows that for a state-action pair $(s_{h,i},a_{h,i})$ added into $\D_h$ in the $t$-th round, the $Q$-function of $(s_{h,i},a_{h,i})$ with respect to any later policy $\pi_{t'}$ (i.e., $t' \ge t$) will be unchanged. 
\begin{lemma}\label{lem:on_policy}
For any $h\in[H]$ and $i\ge 1$, let $t$ be the round when $(s_{h,i},a_{h,i},q_{h,i})$ is appended to $\D_h$, i.e., $(s_{h,i},a_{h,i},q_{h,i}) \in \D_{t + 1, h}$ while $(s_{h,i},a_{h,i},q_{h,i}) \notin \D_{t, h}$.
Then $Q^{\pi_t}(s_{h,i},a_{h,i})=Q^{\pi_{t^\prime}}(s_{h,i},a_{h,i})$ for all $t^\prime\ge t$.
\end{lemma}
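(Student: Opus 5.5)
Because transitions are deterministic, $Q^{\pi}(s_{h,i},a_{h,i})$ depends on $\pi$ only through the actions $\pi$ takes along the fixed path $s_{h,i}, s_{h+1}, s_{h+2},\ldots,s_H$ generated from $(s_{h,i},a_{h,i})$. Since $h=h_t$ for this data point, this path is exactly $s_{h+1}^{(t)},\ldots,s_H^{(t)}$ from round $t$. It therefore suffices to show $\pi_{t'}(s_{g}^{(t)})=\pi_t(s_{g}^{(t)})$ for every $g>h$ and every $t'\ge t$. Given that, the trajectory from $(s_{h,i},a_{h,i})$ under $\pi_{t'}$ coincides with the one under $\pi_t$, the reward distributions agree, and the two $Q$-values are equal.

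Fix $g>h_t$ and write $s=s_g^{(t)}$. By the maximality of $h_t$ we have $(s,a_g^{(t)})\in\cover(\D_{t,g},\varepsilon)$. Moreover, every pair $(s,a)$ with $a\in\A$ is covered by $\D_{t,g}$. Otherwise $\pi_t(s)$ would be an uncovered action, which contradicts $(s,\pi_t(s))=(s,a_g^{(t)})$ being covered. Hence the set defining $k_t(s)$ in \eqref{eq:frozen} contains $|\D_{t,g}|$, so $k_t(s)=k^\star(s):=\min\{k:(s,a)\in\cover(\{(s_{g,j},a_{g,j},q_{g,j})\}_{j\le k},\varepsilon)\ \forall a\}$, and $k^\star(s)\le|\D_{t,g}|$. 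We also have $\pi_t(s)=\argmax_a\tilde Q_{k^\star(s)}(s,a)$.

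Now take $t'\ge t$. The dataset $\D_g$ only grows by appending, so $\D_{t,g}$ is a prefix of $\D_{t',g}$ and $|\D_{t',g}|\ge|\D_{t,g}|\ge k^\star(s)$. The minimum $k^\star(s)$ depends only on the fixed prefix of $\D_g$, so $k_{t'}(s)=k^\star(s)$ as well. The first $k^\star(s)$ entries are unchanged, so $\tilde Q_{k^\star(s)}(s,\cdot)$ is the same function at both times. Covering is monotone: adding data only increases $\Sigma$ and decreases $\|\phi\|_{\Sigma^{-1}}$. So all actions at $s$ remain covered by $\D_{t',g}$, and $\pi_{t'}(s)$ is defined by the greedy branch. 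With consistent tie-breaking this gives $\pi_{t'}(s)=\pi_t(s)$.

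The main subtlety is bookkeeping rather than technique. One point is showing that the covering condition defining $k_t$ uses the prefix datasets, so that the minimum is stable as later data are appended. The other is the step ruling out the case where $\pi_t(s)$ is an exploratory action, which is where the definition of $h_t$ as the last uncovered step is used. Once these are checked, the lemma follows by applying the argument to each $g=h+1,\ldots,H$ along the deterministic path.
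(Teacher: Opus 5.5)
Your proof is correct and follows essentially the same route as the paper (Lemmas~\ref{stay_same} and~\ref{Q_same}). Every state on the round-$t$ trajectory after $h_t$ has all actions covered, so its freezing index satisfies $k_{t'}(s)=k_t(s)$ and the greedy action there never changes; by determinism the trajectory from $(s_{h,i},a_{h,i})$, and hence its $Q$-value, is then the same under $\pi_{t'}$ and $\pi_t$, and your explicit use of monotonicity of $\cover$ to ensure $\pi_{t'}$ still takes the greedy branch is a detail the paper leaves implicit.
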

The proof of Lemma~\ref{lem:on_policy} is based on the following observation: suppose we add a state-action pair $(s_{h_t}^{(t)}, a_{h_t}^{(t)})$ into the dataset in the $t$-th round, for all $h > h_t$, we must have $(s_h^{(t)}, a) \in \cover\left(\D_{t, h},\varepsilon\right)$ for all $a \in \A$, since otherwise an exploratory action $a$ with $(s_h^{(t)}, a) \notin \cover\left(\D_{t, h},\varepsilon\right)$ would have been chosen. 
Therefore, by the way we define $Q_t$ (cf.~\eqref{eq:Qt}), the action of $s_h^{(t)}$ will be frozen for policies $\pi_{t'}$ defined in later rounds $t' > t$. In short, Lemma~\ref{lem:on_policy} formalizes the intuition that all data used by our algorithm remain effectively {\em on-policy} even if the policy $\pi_t$ is updated.  

 For each round $t\ge 1$, let $\xi_t = \hat{q}_t-Q^{\pi_t}(s_{h_t}^{(t)},a_{h_t}^{(t)})$. 
 Because both $\pi_t$ and the MDP transitions are deterministic, $s_h^{(t)}$ and $a_h^{(t)}$ are determined given $\pi_t$. Consequently, $\xi_t$ is the sum of at most $H$ independent $1$-subGaussians, and is therefore $\sqrt{H}$-subGaussian. 
 Our third lemma defines a high probability event, which we will condition on in later parts of the proof. 
 Its proof is based on standard concentration inequalities for self-normalized processes~\citep{abbasi2011improved}. 
\begin{lemma}\label{lem:e}
	Define event $\mathfrak{E}_\text{high}$ as
	\[
	\left\{\text{$\forall h\in[H],k\ge 1$, $\left\Vert\sum_{i=1}^k\phi_{h,i}\xi_{t_{h,i}}\right\Vert_{{\Sigma_{h,k}}^{-1}}^2\le 2H\left(\frac{d}{2}\ln\left(1+\frac{k}{\lambda d}\right)+\ln\frac{H}{\delta}\right)$}\right\} .
	\]
	Then $\P[\mathfrak{E}_\text{high}]\ge 1-\delta$.
\end{lemma}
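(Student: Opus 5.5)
We prove Lemma~\ref{lem:e} by applying the self-normalized martingale bound of \citet{abbasi2011improved} separately for each $h\in[H]$ with confidence $\delta/H$, and then taking a union bound over $h$.

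\textbf{Setting up the filtration.} Fix $h$. The $i$-th element of $\D_h$ is appended in round $t_{h,i}$. We index a process by $i$. Let $\mathcal{F}_{i-1}$ be the $\sigma$-field generated by everything observed up to the start of round $t_{h,i}$ together with the initial state $s_1^{(t_{h,i})}$. The two facts we need are the following.
\begin{itemize}
\item $\phi_{h,i}$ is $\mathcal{F}_{i-1}$-measurable. The policy $\pi_{t_{h,i}}$ is determined by past data, and the transitions are deterministic, so the whole state-action sequence of round $t_{h,i}$, including $h_{t_{h,i}}$ and $(s_{h,i},a_{h,i})$, is a deterministic function of $s_1^{(t_{h,i})}$ and past data.
\item Conditioned on $\mathcal{F}_{i-1}$, the noise $\xi_{t_{h,i}}=\sum_{h'\ge h}\bigl(r_{h'}^{(t)}-r(s_{h'}^{(t)},a_{h'}^{(t)})\bigr)$ is a sum of at most $H$ independent, zero-mean, $[0,1]$-bounded variables. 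It is therefore $\sqrt{H}$-subGaussian, using the paper's convention that each bounded term is $1$-subGaussian. Here we use that $Q^{\pi_t}(s_{h_t}^{(t)},a_{h_t}^{(t)})$ equals the sum of the mean rewards along the deterministic path.
\end{itemize}

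There is one subtlety. Whether round $t$ contributes to $\D_h$ at all, i.e.\ whether $h_t=h$, is decided only from the states and actions of that round, not from the rewards. So the event "round $t$ contributes at step $h$" is also measurable before the rewards are revealed. This lets us treat the random times $t_{h,i}$ as stopping times. Concretely, we can define the process over rounds $t$, with increment $\phi\,\xi_t\,\mathbb{1}[h_t=h]$. Skipped rounds then add zero vectors, which change neither the self-normalized sum nor $\Sigma_{h,k}$ (apart from the reindexing).

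\textbf{Applying the bound.} Theorem~1 of \citet{abbasi2011improved}, with noise scale $\sigma=\sqrt{H}$ and regularizer $\lambda I$, gives with probability at least $1-\delta/H$, simultaneously for all $k\ge1$,
\[
\Bigl\|\sum_{i=1}^k\phi_{h,i}\xi_{t_{h,i}}\Bigr\|_{\Sigma_{h,k}^{-1}}^2\le 2H\ln\frac{\det(\Sigma_{h,k})^{1/2}\det(\lambda I)^{-1/2}}{\delta/H}.
\]
Since $\|\phi\|\le1$, AM--GM on the eigenvalues gives $\det\Sigma_{h,k}\le(\lambda+k/d)^d$. Hence the log-determinant ratio is at most $\frac d2\ln\bigl(1+\frac{k}{\lambda d}\bigr)$, which yields exactly the stated bound. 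A union bound over the $H$ values of $h$ then gives $\P[\mathfrak{E}_\text{high}]\ge1-\delta$.

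\textbf{Main obstacle.} The only delicate step is verifying the conditional subGaussianity and the measurability under the data-dependent selection of which rounds and which step enter $\D_h$. The key point is that $h_t$ and $(s_{h_t}^{(t)},a_{h_t}^{(t)})$ depend only on $s_1^{(t)}$ and on history. This holds because of the deterministic transitions and because $\pi_t$ depends only on past data. Rewards within the current round never affect the selection, so conditioning on the selection does not bias the noise.
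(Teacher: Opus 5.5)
Your proposal is correct and follows the same route as the paper's proof (given in the appendix for the multi-level version). For each $h$ you apply the self-normalized bound of \citet{abbasi2011improved} with $\sigma=\sqrt{H}$ at confidence $\delta/H$, bound $\ln\bigl(\det(\Sigma_{h,k})^{1/2}\det(\lambda I)^{-1/2}\bigr)\le\frac{d}{2}\ln\bigl(1+\frac{k}{\lambda d}\bigr)$ via the trace, and take a union bound over $h$. Your handling of measurability is more careful than the paper's: you put $s_1^{(t)}$ into the filtration and realize the random subsequence through the zero-padded increments $\phi\,\xi_t\mathds{1}[h_t=h]$, whereas the paper's filtration is generated by rewards alone and it simply asserts that $\{\xi_{t_{h,i}}\}$ is a subsequence of a martingale difference sequence.
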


Define
\[
	\alpha=\sqrt{2H\left(\frac{d}{2}\ln\left(1+\frac{D}{\lambda d}\right)+\ln\frac{H}{\delta}\right)}+\sqrt{D}\kappa+\sqrt{\lambda d}H, 
\]
where $D$ is as defined in \eqref{eq:D_bound}. Lemma~\ref{lem:tilde_Q} show that under the event $\mathfrak{E}_\text{high}$ defined in Lemma~\ref{lem:e}, $|\tilde{Q}_k(s,a)-Q^{\pi_t}(s,a)|$ is upper bounded by $\alpha\Vert\phi(s,a)\Vert_{{\Sigma_{h,k}}^{-1}}+\kappa$.

\begin{lemma}\label{lem:tilde_Q}
	Under $\mathfrak{E}_\text{high}$, for any $t\ge 1,h\in[H],0\le k\le |\D_{t,h}|,s\in\S_h,a\in\A$, it holds that
	$
	|\tilde{Q}_k(s,a)-Q^{\pi_t}(s,a)|\le\alpha\Vert\phi(s,a)\Vert_{{\Sigma_{h,k}}^{-1}}+\kappa,
	$
	where $\Sigma_{h,k}$ is as defined in~\eqref{eq:cov_h_k}.
\end{lemma}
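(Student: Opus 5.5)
The plan is to use Lemma~\ref{lem:on_policy} to turn every label in the first $k$ data of $\D_h$ into a noisy, approximately linear observation of $Q^{\pi_t}$, and then run the standard ridge-regression error decomposition. Fix $t$, $h$ and $k\le|\D_{t,h}|$. For $i\le k$, let $t_{h,i}$ be the round in which the $i$-th datum was appended, so $t_{h,i}<t$. By the definition of $\xi$,
\[
q_{h,i}=Q^{\pi_{t_{h,i}}}(s_{h,i},a_{h,i})+\xi_{t_{h,i}},
\]
and Lemma~\ref{lem:on_policy} gives $Q^{\pi_{t_{h,i}}}(s_{h,i},a_{h,i})=Q^{\pi_t}(s_{h,i},a_{h,i})$. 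Let $\theta=\theta_h^{\pi_t}$ be the parameter from Assumption~\ref{assump:realizability}. Then
\[
q_{h,i}=\langle\phi_{h,i},\theta\rangle+\eta_i+\xi_{t_{h,i}},\qquad |\eta_i|\le\kappa .
\]
This on-policy reduction is the only nonstandard step. It requires that the single parameter $\theta_h^{\pi_t}$ serves all earlier data at once, which is exactly what Lemma~\ref{lem:on_policy} delivers.

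Next, expand the estimator with $\Sigma=\Sigma_{h,k}$:
\[
\Sigma^{-1}\sum_{i\le k}\phi_{h,i}q_{h,i}-\theta=-\lambda\Sigma^{-1}\theta+\Sigma^{-1}\sum_{i}\phi_{h,i}\eta_i+\Sigma^{-1}\sum_i\phi_{h,i}\xi_{t_{h,i}} .
\]
Take the inner product with $\phi(s,a)$ and bound each of the three terms by Cauchy--Schwarz in the $\Sigma^{-1}$ geometry.
\begin{enumerate}
\item The bias term satisfies $|\lambda\phi^\top\Sigma^{-1}\theta|\le\|\phi\|_{\Sigma^{-1}}\,\lambda\|\theta\|_{\Sigma^{-1}}\le\|\phi\|_{\Sigma^{-1}}\sqrt{\lambda}\|\theta\|_2\le\sqrt{\lambda d}H\|\phi\|_{\Sigma^{-1}}$. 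This uses $\Sigma\succeq\lambda I$ and Assumption~\ref{assump:bound}.
\item The noise term is at most $\|\phi\|_{\Sigma^{-1}}$ times the square root of the self-normalized bound in $\mathfrak{E}_\text{high}$. Since $k\le|\D_{t,h}|\le D$ by Lemma~\ref{lem:pac_dataset_size_ub}, this gives the first summand of $\alpha$.
\item The misspecification term is $|\phi^\top\Sigma^{-1}\sum_i\phi_{h,i}\eta_i|\le\kappa\sum_i|\phi^\top\Sigma^{-1}\phi_{h,i}|$. Applying Cauchy--Schwarz over $i$ bounds this by $\kappa\sqrt{k}\,\bigl(\sum_i(\phi^\top\Sigma^{-1}\phi_{h,i})^2\bigr)^{1/2}$. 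The inner sum equals $\phi^\top\Sigma^{-1}\bigl(\sum_i\phi_{h,i}\phi_{h,i}^\top\bigr)\Sigma^{-1}\phi\le\|\phi\|_{\Sigma^{-1}}^2$, so the whole term is at most $\sqrt{D}\kappa\|\phi\|_{\Sigma^{-1}}$.
\end{enumerate}

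Finally, $|\langle\phi(s,a),\theta\rangle-Q^{\pi_t}(s,a)|\le\kappa$ contributes the additive $\kappa$. Summing the three coefficients gives exactly $\alpha$.

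One bookkeeping point needs care. The event $\mathfrak{E}_\text{high}$ is stated for the noise sequence $\xi_{t_{h,i}}$ indexed by the datasets. That sequence is well defined independently of $t$, because $\D_h$ is only ever appended to. So the single event covers every $t$ and every $k\le|\D_{t,h}|$ simultaneously, with no additional union bound.
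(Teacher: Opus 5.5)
Your proposal is correct and follows the paper's proof (written out as Lemma~\ref{Q_concentration} in the appendix) essentially step by step. The shared steps are:
\begin{itemize}
\item the on-policy substitution via Lemma~\ref{lem:on_policy};
\item the decomposition of the ridge-regression error into noise, misspecification and regularization-bias terms;
\item Cauchy--Schwarz in the $\Sigma_{h,k}^{-1}$ norm, together with $k\le D$.
\end{itemize}
The only cosmetic difference is that you derive the misspecification bound inline, whereas the paper cites Lemma~8 of \citet{zanette2020learning}, namely $\Vert\sum_i\phi_{h,i}\delta_i\Vert_{\Sigma_{h,k}^{-1}}^2\le k\kappa^2$. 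Both give the same $\sqrt{D}\kappa$ coefficient in $\alpha$.
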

The proof of Lemma~\ref{lem:tilde_Q} uses the high probability event in Lemma~\ref{lem:e}, and also critically relies on Lemma~\ref{lem:on_policy} which shows that all data in our dataset remain on-policy during the whole algorithm. 

The following lemma, which is a direct implication of Lemma~\ref{lem:tilde_Q} and the definition of $\cover(\D_{t,h},\varepsilon)$ (cf.~\eqref{eq:cover}) and $Q_t$ (cf.~\eqref{eq:Qt}), shows that for those state-action pairs $(s,a)\in\cover(\D_{t,h},\varepsilon)$, we have $\left|Q_t(s,a)-Q^{\pi_t}(s,a)\right|\le\alpha\varepsilon+\kappa$. 
\begin{lemma}\label{lem:Qt}
	Under $\mathfrak{E}_\text{high}$, for any $t\ge 1,h\in[H],s\in\S_h,a\in\A$, if $(s,a)\in\cover(\D_{t,h},\varepsilon)$, then 
	$
	\left|Q_t(s,a)-Q^{\pi_t}(s,a)\right|\le\alpha\varepsilon+\kappa .
	$
\end{lemma}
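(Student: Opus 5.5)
We prove Lemma~\ref{lem:Qt} by plugging $k = k_t(s)$ into Lemma~\ref{lem:tilde_Q} and showing that $\Vert\phi(s,a)\Vert_{\Sigma_{h,k_t(s)}^{-1}}\le\varepsilon$. By \eqref{eq:Qt} we have $Q_t(s,a)=\tilde{Q}_{k_t(s)}(s,a)$. By definition \eqref{eq:frozen} we also have $0\le k_t(s)\le|\D_{t,h}|$, so Lemma~\ref{lem:tilde_Q} applies with $k=k_t(s)$ and gives
\[
\left|Q_t(s,a)-Q^{\pi_t}(s,a)\right|\le\alpha\Vert\phi(s,a)\Vert_{\Sigma_{h,k_t(s)}^{-1}}+\kappa .
\]
It therefore suffices to show $\Vert\phi(s,a)\Vert_{\Sigma_{h,k_t(s)}^{-1}}\le\varepsilon$ whenever $(s,a)\in\cover(\D_{t,h},\varepsilon)$.

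For this we split according to which term attains the minimum in \eqref{eq:frozen}.

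\emph{Case 1: $k_t(s)=|\D_{t,h}|$.} Then $\Sigma_{h,k_t(s)}$ is exactly the regularized covariance matrix of $\D_{t,h}$. The hypothesis $(s,a)\in\cover(\D_{t,h},\varepsilon)$ is then literally the statement $\Vert\phi(s,a)\Vert_{\Sigma_{h,k_t(s)}^{-1}}\le\varepsilon$ by \eqref{eq:cover}.

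\emph{Case 2: $k_t(s)<|\D_{t,h}|$.} Then $k_t(s)$ equals the minimal $k$ in the set in \eqref{eq:frozen}, so this set is nonempty and its minimum belongs to it. Consequently $(s,a')\in\cover(\{(s_{h,i},a_{h,i},q_{h,i})\}_{i=1}^{k_t(s)},\varepsilon)$ for every $a'\in\A$. In particular this holds for $a'=a$, which gives $\Vert\phi(s,a)\Vert_{\Sigma_{h,k_t(s)}^{-1}}\le\varepsilon$.

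Combining the two cases with the display yields the claimed bound $\alpha\varepsilon+\kappa$.

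The only point needing care is the edge case of the minimum in \eqref{eq:frozen}. When the set of admissible $k$ is empty, its minimum is $+\infty$, so $k_t(s)=|\D_{t,h}|$ and we fall into Case 1. This is exactly where the hypothesis $(s,a)\in\cover(\D_{t,h},\varepsilon)$ is needed, since the frozen prefix then gives no coverage guarantee. All the substantive work, namely controlling the estimate uniformly over $k$ and using that the data stay on-policy, is already contained in Lemma~\ref{lem:tilde_Q}. That is why this lemma holds for the current policy $\pi_t$ even though $Q_t$ uses only the first $k_t(s)$ data points.
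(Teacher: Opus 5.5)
Your proof is correct and follows essentially the same argument as the paper (written out as Lemma~\ref{Q_acc} in the appendix): set $k=k_t(s)\le|\D_{t,h}|$, apply the uniform-in-$k$ bound of Lemma~\ref{lem:tilde_Q}, and use the definition of the frozen index to get $\Vert\phi(s,a)\Vert_{\Sigma_{h,k_t(s)}^{-1}}\le\varepsilon$. Your explicit case split is a small addition: the paper's proof assumes every relevant action at $s$ is covered, so the minimizing prefix covers $a$, whereas your Case 1 also handles the literal hypothesis of this lemma, in which only $(s,a)$ is covered.
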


Our final lemma characterizes the suboptimality of $\pi_t$ for those well-explored states $s\in\S$ satisfying $(s,a)\in\cover(\D_{t,h},\varepsilon)$ for all $a\in\A$. 
\begin{lemma}\label{lem:sub}
    Under $\mathfrak{E}_\text{high}$, given $t\ge 1,s\in\S_1$, if $(s,a)\in\cover(\D_{t,1},\varepsilon)$ for all $a\in\A$, then we have
    $
    V^{\pi_t}(s)\ge V^{\pi^*}(s)-2H(\alpha\varepsilon+\kappa) .
    $
\end{lemma}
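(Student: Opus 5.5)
We argue by the performance difference lemma along the trajectory of $\pi^*$, combined with an induction showing that every state on this trajectory is well-explored. Since transitions are deterministic, starting from $s=s_1$ the optimal policy generates a unique sequence $s_1^*=s, s_2^*,\ldots,s_H^*$ with $s_{h+1}^*=P(s_h^*,\pi^*(s_h^*))$. The performance difference lemma (which in the deterministic case is just a telescoping identity) gives
\[
V^{\pi^*}(s)-V^{\pi_t}(s)=\sum_{h=1}^H\Bigl(Q^{\pi_t}(s_h^*,\pi^*(s_h^*))-Q^{\pi_t}(s_h^*,\pi_t(s_h^*))\Bigr).
\]
Each summand is then bounded by $2(\alpha\varepsilon+\kappa)$, which yields the claim after summing over $h$.

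Fix $h$ and suppose that $(s_h^*,a)\in\cover(\D_{t,h},\varepsilon)$ for all $a\in\A$. By the definition of $\pi_t$, the action $\pi_t(s_h^*)$ maximizes $Q_t(s_h^*,\cdot)$. Applying Lemma~\ref{lem:Qt} to both $a=\pi^*(s_h^*)$ and $a=\pi_t(s_h^*)$ gives
\[
Q^{\pi_t}(s_h^*,\pi^*(s_h^*))\le Q_t(s_h^*,\pi^*(s_h^*))+\alpha\varepsilon+\kappa\le Q_t(s_h^*,\pi_t(s_h^*))+\alpha\varepsilon+\kappa\le Q^{\pi_t}(s_h^*,\pi_t(s_h^*))+2(\alpha\varepsilon+\kappa).
\]

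The main obstacle is justifying the coverage hypothesis at every step $h$, since the lemma assumes it only at $h=1$. A state $s_h^*$ reached by $\pi^*$ need not lie on $\pi_t$'s path, so coverage of all of its actions is not automatic. I would first try to reformulate the telescoping along the trajectory of $\pi_t$ rather than that of $\pi^*$. Along $\pi_t$'s trajectory, if some step is not fully covered, $\pi_t$ takes an exploratory action there, and the argument breaks.

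Instead, I would strengthen the claim by induction backward in $h$. The statement to prove is: for any $h$ and any $s\in\S_h$ with all actions covered, $V^{\pi_t}(s)\ge V^{*}(s)-2(H-h+1)(\alpha\varepsilon+\kappa)$, where $V^*$ denotes the optimal value. The inductive step needs $Q^{\pi_t}(s,\pi^*(s))\ge Q^{*}(s,\pi^*(s))-2(H-h)(\alpha\varepsilon+\kappa)$. This is exactly where coverage of successors enters, and to handle it I would avoid it by comparing only through $Q^{\pi_t}$. For covered $s$ we have $V^{\pi_t}(s)\ge \max_a Q^{\pi_t}(s,a)-2(\alpha\varepsilon+\kappa)$, so $\pi_t$ is approximately greedy with respect to its own $Q$-function at covered states. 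I would then try to exploit the realizability of $\pi^*$'s values, or the fact that policy improvement from an approximately self-greedy policy yields near-optimality only on states where it is self-greedy.

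If this route fails, I would verify whether the freezing construction guarantees that the successors $P(s,a)$ of covered pairs are themselves covered. This would follow from the fact that data at step $h$ is only added when all later steps are covered, and it is the fact I expect the intended proof to hinge on.
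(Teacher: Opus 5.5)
Your performance-difference decomposition along $\pi^*$'s path is the one the paper uses, and so is your bound for steps where $s_h^*$ is well-explored. There is a genuine gap in the other case. You flag it but do not resolve it, and the fact you expect the proof to hinge on is false. $(s,a)\in\cover(\D_{t,h},\varepsilon)$ only says $\Vert\phi(s,a)\Vert_{\Sigma^{-1}}\le\varepsilon$, i.e., that $\phi(s,a)$ is nearly spanned by data features; it says nothing about $P(s,a)$ having been visited. The property that data at step $h$ is added only when all later steps are covered concerns the successors of the \emph{data pairs themselves}, not of arbitrary covered pairs. For example, take a fresh initial state whose step-1 features coincide with features already in $\D_{t,1}$, but whose successor carries never-seen step-2 features. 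It is well-explored at step $1$, while its successor is not. So non-well-explored states $s_{h_2}^*$ on the optimal path genuinely occur, and $\pi_t$ plays an exploratory, possibly arbitrary, action there. Neither your backward induction over covered states nor the approximate self-greediness of $\pi_t$ controls the summand $Q^{\pi_t}(s_{h_2}^*,\pi^*(s_{h_2}^*))-Q^{\pi_t}(s_{h_2}^*,\pi_t(s_{h_2}^*))$.

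The missing idea (the paper's Lemma~\ref{Q_linear_dependence}) is that such states are nearly action-indifferent. Let $h_1<h_2$ be the last well-explored step before $h_2$ on $\pi^*$'s path; it exists because step $1$ is well-explored. Let $\pi_0$ equal $\pi_t$ on well-explored states and $\pi^*$ elsewhere. By freezing (Lemmas~\ref{stay_same} and~\ref{trajectory_explored}), the trajectory from every data pair in $\D_{t,h_1}$ under $\pi_0$ coincides with the one under $\pi_t$ and visits only well-explored states, so it avoids $s_{h_2}^*$. Meanwhile, from $(s_{h_1}^*,\pi^*(s_{h_1}^*))$, $\pi_0$ follows $\pi^*$ through the non-well-explored $s_{h_1+1}^*,\ldots,s_{h_2-1}^*$ and reaches $s_{h_2}^*$. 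Now let $\overline{\pi}_a$ play $a$ at $s_{h_2}^*$, follow $\pi_0$ at all other states of stage at most $h_2$, and follow $\pi_t$ afterwards. The $Q^{\overline{\pi}_a}$-values of the data pairs do not depend on $a$, while $Q^{\overline{\pi}_a}(s_{h_1}^*,\pi^*(s_{h_1}^*))=R+Q^{\pi_t}(s_{h_2}^*,a)$ with $R$ independent of $a$. Write $\phi(s_{h_1}^*,\pi^*(s_{h_1}^*))=\sum_i c_i\phi_{h_1,i}+e$ with $\sum_i c_i^2\le\varepsilon^2$ and $\Vert e\Vert_2\le\sqrt{\lambda}\varepsilon$ (Lemma~\ref{elliptical_coef}). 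Applying realizability to every $\overline{\pi}_a$ then shows that $Q^{\pi_t}(s_{h_2}^*,a)$ lies within $\kappa+\sqrt{\lambda d}H\varepsilon+\sqrt{D}\kappa\varepsilon\le\alpha\varepsilon+\kappa$ of an $a$-independent quantity. Hence the summand at $s_{h_2}^*$ is at most $2(\alpha\varepsilon+\kappa)$ whatever action $\pi_t$ takes. Realizability has to be invoked for this family of hybrid policies differing only at the unexplored state, not merely for $\pi_t$ or $\pi^*$; that is the step your proposal lacks.
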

To prove Lemma~\ref{lem:sub}, by the performance difference lemma~\citep{kakade2002approximately}, we only need to show that for the state-action pair $(s_{h}^*, a_{h}^*)$ at each step $h$ of the trajectory induced by the optimal policy $\pi^*$, switching from $a_{h}^* = \pi^*(s_{h}^*)$ to $\pi_t(s_{h}^*)$ reduces the policy value of $\pi_t$ by at most $2(\alpha\varepsilon+\kappa)$. If for all actions $a \in \A$, $(s_{h}^*, a)$ could be covered by $\D_{t, h}$, then the above claim is clearly true by Lemma~\ref{lem:Qt} and the definition of $\pi_t$ and $Q_t$ in~\eqref{eq:Qt}. 
Otherwise, we could find a well-explored state $s_{h'}^*$ for some $h' < h$.
Using the $Q^{\pi}$ realizability assumption (Assumption~\ref{assump:realizability}) for such $s_{h'}^*$, we further shows the policy value of any $a \in A$ at state $s_{h}^*$ differs by at most $\alpha\varepsilon+\kappa$. 

By taking $\lambda=H^{-1}$ and scaling $\varepsilon$ by a factor of $\Theta(H\alpha)$ (denoted by $\overline{\varepsilon}$), we arrive at the final guarantee of Algorithm~\ref{alg:pac}. 
\begin{theorem}\label{thm:pac_main}
    There is a constant $C$ so that for any $\overline{\varepsilon}>C\sqrt{d}H\kappa$, by picking certain $\varepsilon$, with probability at least $1-\delta$, the number of episodes with suboptimality gap greater than $\overline{\varepsilon}$ is at most $\widetilde{\mathcal{O}}\left(\frac{d^2H^4}{\overline{\varepsilon}^2}\right)$.
\end{theorem}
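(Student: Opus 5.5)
We work under $\mathfrak{E}_\text{high}$, which by Lemma~\ref{lem:e} holds with probability at least $1-\delta$. We then split the rounds into two kinds. In an \emph{explored} round, $(s_1^{(t)},a)\in\cover(\D_{t,1},\varepsilon)$ for all $a\in\A$. In an \emph{unexplored} round, this fails at step $1$.

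In an explored round, Lemma~\ref{lem:sub} gives $V^{\pi^*}(s_1^{(t)})-V^{\pi_t}(s_1^{(t)})\le 2H(\alpha\varepsilon+\kappa)$. So it suffices to choose $\varepsilon$ with
\[
2H(\alpha\varepsilon+\kappa)\le\overline{\varepsilon}.
\]
With $\lambda=H^{-1}$ we have $\sqrt{\lambda d}H=\sqrt{dH}$. Also $\sqrt{D}\kappa\lesssim \sqrt{d}\kappa\varepsilon^{-1}\sqrt{\log(\cdot)}$, so $\alpha\varepsilon \lesssim \varepsilon\sqrt{dH\log(\cdot)}+\sqrt{d}\kappa\sqrt{\log(\cdot)}$. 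Hence we take $\varepsilon=c\,\overline{\varepsilon}/(H\sqrt{dH}\,\iota)$ for a suitable logarithmic factor $\iota$ and small constant $c$. The condition $\overline{\varepsilon}>C\sqrt{d}H\kappa$ absorbs the $\kappa$ terms, namely $2H\kappa$ and $2H\sqrt{D}\kappa\varepsilon$. There is a mild circularity, since $D$ depends on $\varepsilon$ only through a logarithm. I expect handling this inside $\iota$ and the constant $C$, so that the $\kappa$ term gets only $\sqrt{d}H$ without extra logs, to be the fiddliest step. If needed, I would allow $C$ to hide polylog factors, consistent with the $\widetilde{\mathcal O}$ notation.

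In an unexplored round, $\pi_t$ picks at step $1$ an action not in $\cover(\D_{t,1},\varepsilon)$. Therefore $h_t$ exists, and exactly one datum is appended to some $\D_{h_t}$. By Lemma~\ref{lem:pac_dataset_size_ub}, each $|\D_{t,h}|\le D$, so the total number of such rounds is at most $HD$.

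Only unexplored rounds can have suboptimality gap greater than $\overline{\varepsilon}$. Their number is at most
\[
HD=\frac{2dH}{\varepsilon^2}\ln\Bigl(1+\frac{4\varepsilon^{-4}}{\lambda^2}\Bigr)=\widetilde{\mathcal O}\Bigl(\frac{dH\cdot dH^3}{\overline{\varepsilon}^2}\Bigr)=\widetilde{\mathcal O}\Bigl(\frac{d^2H^4}{\overline{\varepsilon}^2}\Bigr),
\]
which gives the claimed bound.
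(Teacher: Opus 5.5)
Your proposal is correct and follows essentially the same route as the paper: condition on $\mathfrak{E}_\text{high}$, take $\lambda=H^{-1}$ and $\varepsilon$ of order $\overline{\varepsilon}/(H\alpha)$ so that Lemma~\ref{lem:sub} bounds the gap of every round whose initial state is fully covered, and charge every other round to one of the at most $HD$ dataset insertions allowed by Lemma~\ref{lem:pac_dataset_size_ub}. Your caveat is also accurate: the term $2H\sqrt{D}\kappa\varepsilon$ is of order $\sqrt{d}H\kappa\sqrt{\log(H/\varepsilon)}$, so the threshold on $\overline{\varepsilon}$ really needs a polylogarithmic factor in $C$; the paper's sketch glosses over this in the same way, and its Uniform-PAC version (Theorem~\ref{thm:uniform_pac}) does carry an explicit $\log\frac{H}{\kappa}$.
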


To prove Theorem~\ref{thm:pac_main}, note that if $(s_1^{(t)},a)\in\cover(D_{t,1},\varepsilon)$ for all $a\in\A$, Lemma~\ref{lem:sub} guarantees that the suboptimality gap of such episode is at most $\overline{\varepsilon}$. 
In the other episodes, we must have increased the size of one dataset by $1$, whose total occurrence count is upper bounded by $HD$ because every dataset will be updated at most $D$ times by Lemma~\ref{lem:pac_dataset_size_ub}. 

\paragraph{Time and Space Complexity.} Assume that $\phi(s,a)$ can be computed in $\poly(d)$ time and space given any $s\in\S$ and $a\in\A$. In Algorithm~\ref{alg:pac} the size of every dataset is bounded by $D$, so its space complexity is $\widetilde{\mathcal{O}}\left(\frac{H\poly(d)}{\varepsilon^2}\right)$. 
We assume that the action space is finite, under which case $\pi_t(s)$ can be computed in time $\mathcal{O}(D|\A|\poly(d))$. Therefore, the time complexity of Algorithm~\ref{alg:pac} is $\widetilde{\mathcal{O}}\left(\frac{HT|\A|\poly(d)}{\varepsilon^2}\right)$.

\section{Regret Minimization via Frozen Policy Iteration}\label{sec:regret}
In this section, we present the regret minimization version of Frozen Policy Iteration.
See Algorithm~\ref{alg:regret} for the formal description. 

\begin{algorithm}[htbp]
\caption{Frozen Policy Iteration-Regret Minimization (FPI-Regret)}\label{alg:regret}
\begin{algorithmic}[1]
\State Initialize $\D_h^{(l)}$ as empty list for all $l \ge 0, h \in [H]$
\For{$t$ = $1,\cdots,T$}
    \State \label{step:l_init} $l\gets\overline{L}$ \Comment{$\overline{L}$ is defined in Section~\ref{sec:constants}}
    \State get $s_1^{(t)}$
    \For{$h$ = $1,\cdots,H$}
        \If{all of $\mathfrak{I}_t^{(1)}(s_h^{(t)}),\cdots,\mathfrak{I}_t^{(l)}(s_h^{(t)})$ are true} \Comment{$\mathfrak{I}_t^{(l)}$ is as defined in \eqref{eq:acc_ind}}
            \State $l_h^{(t)}\gets l$
            \State take action $a_h^{(t)}\gets\pi_t^{(l)}(s_h^{(t)})$ \Comment{$\pi_t^{(l)}$ is as defined in \eqref{eq:pi_regret}}
        \Else
            \State $h_t\gets h$
            \State \label{step:change_l}$l\gets\min\left\{l\ge 1:\mathfrak{I}_t^{(l)}(s_h^{(t)})\text{ is not true}\right\}$
            \State $l_h^{(t)}\gets l$
            \State pick any $a\in\A_t^{(l)}(s_h^{(t)})$ such that $(s_h^{(t)},a)\notin\cover(\D_h^{(l)},2^{-l})$
            \State take action $a_h^{(t)}\gets a$ \Comment{$\A_t^{(l)}$ is as defined in \eqref{eq:A_regret}}
        \EndIf
        \State get $r_h^{(t)},s_{h+1}^{(t)}$
    \EndFor
    \If{$l<\overline{L}$}
        \State $\hat{q}_t\gets\sum_{h=h_t}^H r_h^{(t)}$
        \State $\D_{h_t}^{(l)}\text{.append}(s_{h_t}^{(t)},a_{h_t}^{(t)},\hat{q}_t)$
    \EndIf
\EndFor
\end{algorithmic}
\end{algorithm}

\paragraph{Datasets and Accuracy Levels.} Algorithm~\ref{alg:regret} follows similar high-level framework as Algorithm~\ref{alg:pac}. 
However, Algorithm~\ref{alg:pac} uses a fixed accuracy parameter $\varepsilon$, which is inadequate for achieving $\sqrt{T}$-type regret bound. 
Instead, Algorithm~\ref{alg:regret} uses multiple accuracy levels $1 \le l < \overline{L}$, where level $l$ corresponds to an instance of Algorithm~\ref{alg:pac}  with accuracy $\varepsilon=2^{-l}$ and $\overline{L}$ is a fixed constant defined later (Section~\ref{sec:constants}).
For each $l\ge 1$ and $h\in[H]$, we maintain a dataset $\D_h^{(l)}$ which is an ordered sequence consisting of $(s_{h,i}^{(l)},a_{h,i}^{(l)},q_{h,i}^{(l)})$. 
As in Section~\ref{sec:pac}, for every $t\ge 1$, let $\D_{t,h}^{(l)}$ be the snapshot of $\D_h^{(l)}$ before the $t$-th round. We also write $\phi_{h,i}^{(l)}=\phi(s_{h,i}^{(l)},a_{h,i}^{(l)})$ and 
\(
	\Sigma_{h,k}^{(l)}=\lambda I+\sum_{i=1}^k\phi_{h,i}^{(l)}{\phi_{h,i}^{(l)}}^\top
\).

\paragraph{Adjusting Accuracy Level.} At the beginning of each episode, the accuracy level $l$ is initialized to be $\overline{L}$, i.e, the level with highest accuracy. 
For each step $h$, if there is an action $a$ such that $(s_h^{(t)}, a)$ cannot be covered by $\D_{t,h}^{(l')}$ with accuracy $2^{-l'}$ for some $l' \le l $, then we would replace $l$ with $l'$ (Step~\ref{step:change_l}), aiming at a lower accuracy of $2^{-l'}$. 
Concretely, for each $s\in\S_h$, define the indicator
\begin{equation}\label{eq:acc_ind}
\mathfrak{I}_t^{(l)}(s)=\mathbb{I}\left\{(s,a)\in\cover(\D_{t,h}^{(l)},2^{-l})\text{ for all }a\in\A_t^{(l)}(s)\right\}, 
\end{equation}
where $\A_t^{(l)}(s)$ is a subset of actions that will be defined later in this section. Here, $\cover(\D_{t,h}^{(l)},2^{-l})$ follows the same definition as in~\eqref{eq:cover}. 
If all of $\mathfrak{I}_t^{(1)}(s_h^{(t)}),\cdots,\mathfrak{I}_t^{(l)}(s_h^{(t)})$ are true, we keep $l$ unchanged and follows a greedy policy $\pi_t^{(l)}$ for the purpose of exploitation. 
Otherwise, we replace $l$ with the smallest $l'$ so that $\mathfrak{I}_t^{(l')}(s_h^{(t)})$ is false, and take an exploratory action $a$. 
Once an episode finished, we add a data $(s_{h_t}^{(t)},a_{h_t}^{(t)},\hat{q}_t)$ into $\D_{h_t}^{(l)}$, where $h_t$ is the last level where we take an exploratory action, and $\hat{q}_t$ is the sum of rewards starting from step $h_t$ on the trajectory, and proceed to the next round. 

\paragraph{Exploration under Accuracy Level Constraints.} A subtlety in Algorithm~\ref{alg:regret} is that how the exploratory action is chosen. Unlike Algorithm~\ref{alg:pac} where any action $a$ satisfying $(s,a)\notin\cover(\D_{t, h},\varepsilon)$ could be chosen, here we also need to make sure that the suboptimality incurred by the chosen exploratory action is upper bounded by (roughly) $2^{-l}$. To this end, for the $t$-th round, for each state $s$ and accuracy level $l$, we define a subset of actions, $\A_t^{(l)}(s)$, which include all actions with suboptimality upper bounded by $2^{-l}$. To estimate the suboptimality of actions, we use the estimated $Q$-value at accuracy level $l - 1$.
Finally, when deciding whether to the freeze the policy of a state $s$ at the $t$-th round and accuracy level $l$, we only test whether $(s, a)$ could be covered by the dataset for those actions in $\A_t^{(l)}(s)$, instead of the whole action space $\A$. 

Formally, we define $\A_t^{(l)}$, $k_t^{(l)}$ and $Q_t^{(l)}$ inductively on $l$ as follows: 
\begin{align*}
	&\A_t^{(l)}(s)=\begin{cases}
		\A & l=1\\
		\left\{a\in\A_t^{(l-1)}(s):Q_t^{(l-1)}(s,a)\ge Q_t^{(l-1)}(s,\pi_t^{(l-1)}(s))-2H\Delta_{l-1}\right\} & l>1\\
	\end{cases};\numberthis\label{eq:A_regret}\\
	&k_t^{(l)}(s)=\min\left\{k\in\N:(s,a)\in\cover\left(\left\{(s_{h,i}^{(l)},a_{h,i}^{(l)})\right\}_{i=1}^k,2^{-l}\right)\text{ for all }a\in\A_t^{(l)}(s)\right\}\wedge\left|\D_{t,h}^{(l)}\right|;\\
	&Q_t^{(l)}(s,a)=\tilde{Q}_{k_t^{(l)}(s)}^{(l)}(s,a);\\
	&\pi_t^{(l)}(s)=\argmax_{a\in\A_t^{(l)}(s)}Q_t^{(l)}(s,a),\numberthis\label{eq:pi_regret}
\end{align*}
where $\Delta_l$ is defined in Section~\ref{sec:constants} and 
$\tilde{Q}_k^{(l)}(s,a)=\left\langle\phi(s,a),{\Sigma_{h,k}^{(l)}}^{-1}\sum_{i=1}^k\phi_{h,i}^{(l)}q_{h,i}^{(l)}\right\rangle$
is the least squares estimate by using only the first $k$ data in $\D_h^{(l)}$ as in Section~\ref{sec:pac}. 

The formal guarantee of Algorithm~\ref{alg:regret} is summarized as the following theorem. 
\begin{theorem}\label{thm:main_regret}
	With probability $1-\delta$, for any $T\ge 1$, the regret incurred by Algorithm~\ref{alg:regret} satisfies
	$\operatorname{Reg}(T)=\widetilde{\mathcal{O}}\left(\sqrt{d^2H^6T}+\sqrt{d}H^2T\kappa\right)$.
\end{theorem}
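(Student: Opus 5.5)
The plan is to lift the PAC analysis of Section~\ref{sec:pac} to each accuracy level $l$ and then do a level-wise regret decomposition, as in the accuracy-level framework of \citet{he2021uniform}. We will take $\lambda=H^{-1}$, $\Delta_l \approx \alpha_l 2^{-l}+\kappa$, where $\alpha_l$ is the confidence radius of level $l$ (the $\alpha$ of Lemma~\ref{lem:tilde_Q} with $\varepsilon=2^{-l}$ and $D$ replaced by $D_l=\widetilde{O}(d4^l)$), and $\overline{L}\approx\log_2(\sqrt{T}/(dH))$.

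\emph{Step 1: per-level analogues of the earlier lemmas.} For each $l$, data is appended to $\D^{(l)}_{h}$ only when the chosen exploratory pair is outside $\cover(\D^{(l)}_{t,h},2^{-l})$. Lemma~\ref{lem:pac_dataset_size_ub} therefore gives $|\D^{(l)}_{t,h}|\le D_l=\widetilde O(d\,4^{l})$. For the analogue of Lemma~\ref{lem:on_policy}, note that after $h_t$ every step $h'>h_t$ satisfies $\mathfrak{I}^{(1)},\dots,\mathfrak{I}^{(l)}$ at the current level $l$ (levels only decrease along the trajectory), and the action is $\pi^{(l)}_t$. Freezing via $k_t^{(l)}$ keeps all $Q^{(l')}_t$ with $l'\le l$ fixed on these states, hence also keeps $\A^{(l)}_t$ and $\pi^{(l)}_t$ fixed. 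The inductive definition \eqref{eq:A_regret} requires showing that freezing at level $l$ implies freezing at all lower levels, so that the action sets are also frozen. The states visited later are then played identically forever, and the data remain on-policy. Lemmas~\ref{lem:e}, \ref{lem:tilde_Q} and \ref{lem:Qt} then carry over with a union bound over $l\le\overline L$, giving $|Q^{(l)}_t-Q^{\pi_t}|\le\Delta_l$ on covered pairs.

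\emph{Step 2: action sets retain good actions.} By induction on $l$, we show that $\A^{(l)}_t(s)$ contains an action whose value under $\pi_t$ is within $O(H\Delta_{l-1})$ of the best, and that every action in $\A^{(l)}_t(s)$ has gap at most $O(H\Delta_{l-1})$. This uses Lemma~\ref{lem:Qt} at level $l-1$ together with the threshold $2H\Delta_{l-1}$. Following the proof of Lemma~\ref{lem:sub}, we then show via performance difference that an episode whose final level is $l$ (started at level $\ge l$ and explored at level $l$) has suboptimality $O(H^2\Delta_{l-1})$. The argument handles states where some lower-level indicator fails through the realizability trick at an earlier well-explored state of the comparator trajectory. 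Episodes finishing at level $\overline L$ with no exploration have suboptimality $O(H^2\Delta_{\overline L})$.

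\emph{Step 3: summation.} An episode with final level $l<\overline L$ appends to some $\D^{(l)}_h$, so there are at most $HD_l=\widetilde O(dH4^l)$ such episodes. Their total regret is therefore
\[
\sum_{l<\overline L}\widetilde O\bigl(dH4^l\cdot H^2\Delta_{l-1}\bigr).
\]
Since $\alpha_l=\widetilde O(\sqrt{dH}+2^{l}\sqrt d\kappa)$, this is $\widetilde O(d^{1.5}H^{3.5}2^{\overline L})$ plus $\kappa$-terms. The remaining episodes contribute $T H^2\Delta_{\overline L}=\widetilde O(TH^{2.5}\sqrt d\,2^{-\overline L}+TH^2\sqrt d\kappa)$. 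Balancing with $2^{\overline L}\approx\sqrt{T/(dH)}$ gives $\widetilde O(\sqrt{d^2H^6T})$. Handling ``for any $T$'' is done via a doubling choice of $\overline L$ or a level cap depending only on the current episode index, together with the anytime form of Lemma~\ref{lem:e}. The $\kappa$-terms are bounded by $\sqrt d H^2T\kappa$, using $D_l\le T$.

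The main obstacle is Step 1 combined with Step 2: proving the on-policy property when action sets $\A^{(l)}_t$ themselves depend on estimates from lower levels. Freezing must be shown to be hierarchically consistent. My first attempt would be to prove, by induction on $l$, that if $k^{(l)}_t(s)<|\D^{(l)}_{t,h}|$ is attained then $k^{(l')}_t(s)$ is already stabilized for all $l'<l$. The route is via the requirement that $\mathfrak I^{(1)},\dots,\mathfrak I^{(l)}$ all hold before level $l$ is used, and the argument also needs that lower-level datasets are appended only when those indicators fail.
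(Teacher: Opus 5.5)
Your overall architecture (per-level versions of the PAC lemmas, freezing, counting episodes by the level at which data is appended) matches the paper, but Step~2 does not go through as stated. The invariant you propose, that $\A_t^{(l)}(s)$ retains \emph{some} action within $O(H\Delta_{l-1})$ of the best, is not inductive, for three reasons. First, $\A_t^{(l+1)}(s)\subseteq\A_t^{(l)}(s)$. So to put an action of gap $O(H\Delta_l)$ into $\A_t^{(l+1)}(s)$ it must already be in $\A_t^{(l)}(s)$, which is more than the level-$l$ hypothesis (gap $O(H\Delta_{l-1})$) gives, since the $\Delta_l$ decrease geometrically. Second, at a well-explored state $s'$ on the comparator trajectory, the performance-difference step $Q^{\pi_t^{(l)}}(s',\pi_t^{(l)}(s'))\ge Q_t^{(l)}(s',\pi_t^{(l)}(s'))-\Delta_l\ge Q_t^{(l)}(s',\pi^*(s'))-\Delta_l\ge Q^{\pi_t^{(l)}}(s',\pi^*(s'))-2\Delta_l$ needs $\pi^*(s')\in\A_t^{(l)}(s')$. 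With only a near-optimal surrogate of gap $g_l$, each of the $H$ steps pays $g_l$. Then $V^{\pi^*}-V^{\pi_t^{(l)}}$ is of order $H(g_l+\Delta_l)$, and the gap at the next level becomes $g_{l+1}\approx Hg_l+O(H\Delta_l)$, which compounds geometrically across levels. Third, the realizability trick at the earlier well-explored state $s^*_{h_1}$ requires $(s^*_{h_1},\pi^*(s^*_{h_1}))\in\cover(\D^{(l)}_{t,h_1},2^{-l})$. But $\mathfrak{I}^{(l)}_t(s^*_{h_1})$ only certifies coverage for actions in $\A^{(l)}_t(s^*_{h_1})$.

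The missing idea is to retain the optimal action itself, as in Lemma~\ref{Q_near_optim}. Prove by induction on $l$ that whenever $\mathfrak{I}^{(1)}_t(s),\dots,\mathfrak{I}^{(l)}_t(s)$ all hold for $s\in\S_h$, two things are true. (i) $Q^{\pi_t^{(l)}}(s,a)\ge Q^{\pi^*}(s,a)-2(H-h)\Delta_l$ for every $a\in\A_t^{(l)}(s)$. This comes from performance difference along the $\pi^*$-trajectory, with the realizability trick where some indicator fails; it uses (ii) at level $l-1$ both for the greedy comparison and for the coverage at $s^*_{h_1}$. (ii) $\pi^*(s)\in\A_t^{(l+1)}(s)$. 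This follows from (i) with $a=\pi^*(s)$, which is legitimate by (ii) at level $l-1$, plus two uses of the accuracy bound: $Q_t^{(l)}(s,\pi^*(s))\ge Q_t^{(l)}(s,\pi_t^{(l)}(s))-2(H-h+1)\Delta_l$. The threshold $2H\Delta_l$ is calibrated exactly for this. Then, at any state where the algorithm acts at level $l$ (so $\mathfrak{I}^{(1)}_t,\dots,\mathfrak{I}^{(l-1)}_t$ hold), every action in $\A_t^{(l)}(s)$ has gap at most $4H\Delta_{l-1}$ by the chain in Lemma~\ref{suboptim_gap}, and your counting finishes.

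Two smaller points. The hierarchical freezing you flag as the main obstacle is only a short bottom-up induction. $\A^{(1)}_t=\A$ is fixed, and with append-only data and monotone coverage, $\mathfrak{I}^{(1)}_t(s)$ freezes $k^{(1)},Q^{(1)},\pi^{(1)}$ and hence $\A^{(2)}$, and so on. It needs no condition on when lower-level datasets grow. Also, you need not tie $\overline L$ to $T$; the algorithm's $\overline L$ is $T$-independent. Splitting at an analysis-only level $L$ with $HD_L\ge T$, as in Lemma~\ref{bound_l}, gives the same balance and the any-$T$ guarantee directly.
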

Theorem~\ref{thm:main_regret} is proved using a series of lemmas similar to those in Section~\ref{sec:pac}, though the proofs here are more complicated due to the use of multiple accuracy levels. See Section~\ref{sec:regret_proof} for the details. 

\paragraph{Time and Space Complexity.} In Algorithm~\ref{alg:regret}, at most one dataset will be updated at each round, so its space complexity is $\mathcal{O}(T\poly(d))$. 
Note that $l_h^{(t)}$ is upper bounded by $t$, so we can replace Step~\ref{step:l_init} with $l\gets\min\{t,\overline{L}\}$, which ensures that $\overline{L}$ does not affect the actual computational time. The time complexity of Algorithm~\ref{alg:regret} is $\widetilde{\mathcal{O}}(HT^2|\A|\poly(d))$.

\paragraph{Extensions.} Algorithm~\ref{alg:regret} also enjoys a Uniform-PAC guarantee of $\widetilde{\mathcal{O}}\left(\frac{d^2H^6}{\varepsilon^2}\right)$. See Section~\ref{sec:uniform_pac} for the details. Moreover, we further extend our results from linear function approximation to function classes with bounded eluder dimension. We provide bounds on $\operatorname{Reg}(T)$ of
$$
\mathcal{O}\left(\sqrt{H^6T\dim_E\left(\F,T^{-1}\right)\left(\log\frac{HT}{\delta}+\log N(\F,T^{-2},\Vert\cdot\Vert_\infty)\right)}\right).
$$
See Section~\ref{sec:eluder} for the details.

\section{Experiments}

We implement Algorithm~\ref{alg:pac} and conduct experiments on simple RL environments from the OpenAI gym. Our experiments are performed on CartPole-v1 and InvertedPendulum-v4.
We use the tile coding method \citep{sutton2018reinforcement} to produce a feature map in a continuous space. The number of tiles per dimension is set to 4. For InvertedPendulum-v4, we discretize the action space into 4 actions and set the number of steps per episode to 60. In the implementation, $\lambda$ is set to $10^{-3}$ and $\varepsilon$ is set to $1$. We maintain the covariance matrix $\Sigma_{h,k}$ to compute policy $\pi_t$. Since storing $\Sigma_{h,k}$ for all $h\in[H]$ and $k\le |\D_{t,h}|$ would lead to a huge memory cost, we only keep the last 20 $\Sigma_{h,k}$'s for each $h\in[H]$.

\makeatletter
\newcommand{\drawrewardplot}[3]{

  \edef\AR@upperpath{#2@upper}
  \edef\AR@lowerpath{#2@lower}
  \edef\AR@fillpath{#2@fill}

  \addplot[name path=\AR@upperpath, draw=none, forget plot] table[x=episode,y=upper]{\datatable};
  \addplot[name path=\AR@lowerpath, draw=none, forget plot] table[x=episode,y=lower]{\datatable};

  \addplot[forget plot, fill=#1!20, fill opacity=0.5, draw=none, forget plot]
    fill between[of={\AR@upperpath} and {\AR@lowerpath}];

  \addplot[thick, #1] table[x=episode,y=mean]{\datatable};

  \addlegendentry{#3}
}
\makeatother

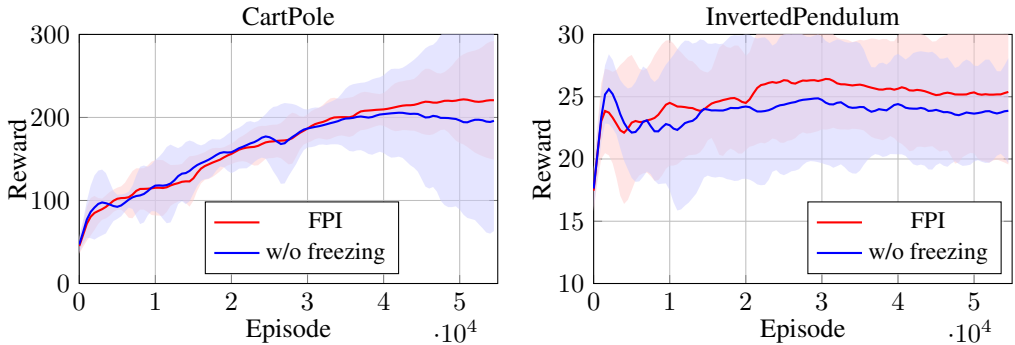
\begin{figure}[htbp]
\centering

\begin{tikzpicture}[scale=0.75]
\begin{axis}[
name=CartPole,
width=9cm, height=6cm,
xlabel=Episode,
ylabel=Reward,
xmin=0,xmax=55000,
ymin=0,ymax=300,
grid=major,
legend style={at={(0.4,0.05)},anchor=south west},
cycle list name=color list
]

\pgfplotstableread[col sep=comma]{data/CartPole.csv}\datatable
\drawrewardplot{red}{CartPole}{\small FPI}

\pgfplotstableread[col sep=comma]{data/CartPole-no_freeze.csv}\datatable
\drawrewardplot{blue}{CartPoleNoFreeze}{\small w/o freezing}

\end{axis}
\node[above] at (CartPole.north) {CartPole};
\end{tikzpicture}
\hfil
\begin{tikzpicture}[scale=0.75]
\begin{axis}[
name=InvertedPendulum,
width=9cm, height=6cm,
xlabel=Episode,
ylabel=Reward,
xmin=0,xmax=55000,
ymin=10,ymax=30,
grid=major,
legend style={at={(1.3,0.05)},anchor=south east},
cycle list name=color list
]

\pgfplotstableread[col sep=comma]{data/InvertedPendulum.csv}\datatable
\drawrewardplot{red}{InvertedPendulum}{\small FPI}

\pgfplotstableread[col sep=comma]{data/InvertedPendulum-no_freeze.csv}\datatable
\drawrewardplot{blue}{InvertedPendulumNoFreeze}{\small w/o freezing}

\end{axis}
\node[above] at (InvertedPendulum.north) {InvertedPendulum};
\end{tikzpicture}

\caption{Learning curves on CartPole-v1 and InvertedPendulum-v4. Results are averaged over 5 seeds. The shaded area captures a 95\% confidence interval around the average performance.}
\label{fig:rewards}
\end{figure}

To ablate the role of freezing, we also implement a version of Algorithm~\ref{alg:pac} without the freezing operation, i.e., using the whole dataset to estimate $Q$-values for each step $h\in[H]$. Figure~\ref{fig:rewards} shows that the freezing operation has indeed improved the performance of our algorithm.

\section{Discussions and Open Problems}

\paragraph{$H$ Factors in the Regret and Uniform-PAC Bounds.} The relatively high polynomial dependence on $H$ in the regret and Uniform-PAC bounds primarily arises from the need for exploration under multiple accuracy level constraints. Ensuring that the optimal action of each state is preserved across all accuracy levels introduces additional $H$ factors compared to the PAC analysis. 
Further improvements on the dependence of $H$ will be an interesting future work.

\paragraph{Reliance on Deterministic Transitions.} Applying our algorithm to MDPs with stochastic transitions is non-trivial, as the current analysis critically relies on the property that the state-action pairs collected in the datasets are well-explored. This property ensures that once a pair $(s,a)$ is added to the dataset, the subsequent trajectory remains within the high-confidence region, preserving the effectively on-policy nature of the data. 
Under stochastic dynamics, observing only one trajectory from a given pair $(s,a)$ does not guarantee that a sufficiently large or high-probability subset of trajectories starting from $(s,a)$ lies within the high-confidence region. 
Extending our algorithm to stochastic transition models hence remains an open problem.


\bibliography{iclr2026_conference}
\bibliographystyle{iclr2026_conference}

\appendix

\section*{The Use of Large Language Models}

In the preparation of this paper, we employed large language models (LLMs) as a general-purpose assistive tool for non-substantive tasks. Specifically, LLMs were used for:
\begin{itemize}
    \item Assisting in editing, proofreading, and formatting of textual content.
    \item Generating code snippets for creating figures and visualizations, which were then verified and customized by the authors.
\end{itemize}

The LLMs did not contribute to the core ideation, analysis, or substantive writing of the research. The authors take full responsibility for the entire content of this paper, including any portions influenced by LLMs, and affirm that the use of LLMs complies with academic integrity standards. LLMs are not considered authors or contributors to this work.
\section{Missing Proofs in Section~\ref{sec:pac}}\label{sec:pac_proof}

The proofs of the lemmas in Section~\ref{sec:pac} is almost the same with the ones in Section~\ref{sec:regret}.

See \cref{D_upper_bound,Q_same,event_high,Q_concentration,Q_acc,Q_near_optim} for the proofs of 
\cref{lem:pac_dataset_size_ub,lem:on_policy,lem:e,lem:tilde_Q,lem:Qt,lem:sub}, respectively.

\section{Missing Proofs in Section~\ref{sec:regret}}\label{sec:regret_proof}

\subsection{Notations}\label{sec:constants}
Given any $\delta\in(0,1)$, define constants
\begin{align*}
	\lambda&=H^{-1}\\
    \overline{L}&=\left\lfloor\ln\frac{\sqrt{H}}{\kappa}\right\rfloor+1\text{ (or $+\infty$ if $\kappa=0$)}\\
	D_l&=2d\cdot 2^{2l}\ln\left(1+\frac{2^{4l+2}}{\lambda^2}\right)&&=\Theta\left(dl\cdot 2^{2l}\right)\\
	\alpha_l&=\sqrt{2H\left(\frac{d}{2}\ln\left(1+\frac{D_l}{\lambda d}\right)+\ln\frac{2Hl^2}{\delta}\right)}+\sqrt{D_l}\kappa+\sqrt{\lambda d}H&&=\mathcal{O}\left(\sqrt{dHl\log\frac{H}{\delta}}+2^l\sqrt{dl}\kappa\right)\\
	\Delta_l&=\alpha_l\cdot 2^{-l}+\kappa&&=\mathcal{O}\left(2^{-l}\sqrt{dHl\log\frac{H}{\delta}}+\sqrt{dl}\kappa\right)
\end{align*}

For every $l\ge 1,h\in[H],i\ge 1$, let $t_{h,i}^{(l)}$ be the round when $(s_{h,i}^{(l)},a_{h,i}^{(l)},q_{h,i}^{(l)})$ is appended to $\D_h^{(l)}$. For every round $t\ge 1$, let $D_{t,h}^{(l)}$ be the size of $\D_{t,h}^{(l)}$ and $\xi_t$ be $\hat{q}_t-Q^{\pi_t^{(l)}}(s_{h_t}^{(t)},a_{h_t}^{(l)})$.

\subsection{Upper bound for the size of datasets}

\begin{lemma}\label{explore_bound}
	For any $l\ge 1,h\in[H],i\ge 1$, it holds that $\left\Vert\phi_{h,i}^{(l)}\right\Vert_{{\Sigma_{h,i-1}^{(l)}}^{-1}}\ge 2^{-l}$.
\end{lemma}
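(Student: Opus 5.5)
The plan is to unwind what happens in the round $t=t_{h,i}^{(l)}$ in which the $i$-th element is appended to $\D_h^{(l)}$, and show that at that moment the appended pair was not covered by the snapshot $\D_{t,h}^{(l)}$ at accuracy $2^{-l}$.

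The key observation is that the snapshot $\D_{t,h}^{(l)}$ taken before round $t$ consists exactly of the first $i-1$ elements of $\D_h^{(l)}$. Datasets are append-only, and at most one element is appended per round. Hence the regularized covariance matrix of $\D_{t,h}^{(l)}$ is precisely $\Sigma_{h,i-1}^{(l)}$. By the definition \eqref{eq:cover} of $\cover$, the statement is therefore equivalent to
\[
(s_{h,i}^{(l)},a_{h,i}^{(l)})\notin\cover(\D_{t,h}^{(l)},2^{-l}),
\]
up to the boundary case of equality, which the non-strict $\ge$ in the statement absorbs.

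Next I trace Algorithm~\ref{alg:regret} in round $t$. An append to $\D_{h}^{(l)}$ happens only when the final value of $l$ satisfies $l<\overline{L}$, and it adds $(s_{h_t}^{(t)},a_{h_t}^{(t)},\hat q_t)$ with $h=h_t$.

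\begin{itemize}
\item The variable $l$ is changed only inside the else-branch (Step~\ref{step:change_l}). Since the final $l$ is below $\overline{L}$, the else-branch was entered at least once. Thus $h_t$ is well-defined and equals the last step at which the else-branch executed.
\item After step $h_t$, the variable $l$ is not modified again. So the level used for the append is exactly the level $l$ assigned at step $h_t$.
\item At step $h_t$ the algorithm picks $a_{h_t}^{(t)}\in\A_t^{(l)}(s_{h_t}^{(t)})$ with $(s_{h_t}^{(t)},a_{h_t}^{(t)})\notin\cover(\D_h^{(l)},2^{-l})$, using the current contents of $\D_h^{(l)}$. Such an action exists because $\mathfrak{I}_t^{(l)}(s_{h_t}^{(t)})$ is false.
\item Within an episode datasets are updated only after the loop ends, so the current $\D_h^{(l)}$ coincides with the snapshot $\D_{t,h}^{(l)}$.
\end{itemize}

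Combining these facts gives $\|\phi_{h,i}^{(l)}\|_{{\Sigma_{h,i-1}^{(l)}}^{-1}}>2^{-l}$, which proves the claim.

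The only step I expect to need care is the bookkeeping that the level recorded for the append equals the level at which the uncovered action was chosen at step $h_t$. Concretely, I must confirm that no later step re-enters the else-branch and resets $l$ without also resetting $h_t$. This holds because $h_t$ and $l$ are assigned together in the same branch.
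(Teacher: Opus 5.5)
Your proposal is correct and follows the same approach as the paper's proof. Both identify the round $t=t_{h,i}^{(l)}$, note that the snapshot $\D_{t,h}^{(l)}$ is exactly the first $i-1$ entries, so its covariance is $\Sigma_{h,i-1}^{(l)}$, and use that at step $h_t$ the algorithm chose an action outside $\cover(\D_{t,h}^{(l)},2^{-l})$; your extra bookkeeping (the append level equals the level set at step $h_t$, and datasets are unchanged mid-episode) only makes explicit what the paper compresses into ``from Algorithm~\ref{alg:regret}''.
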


\begin{proof}
	Denote $t_{h,i}^{(l)}$ by $t$. From \cref{alg:regret}
	$$
	\left(s_h^{(t)},a_h^{(t)}\right)\notin\operatorname{Cover}\left(\D_{t,h}^{(l)},2^{-l}\right)
	$$
	so we have
	$$
	\left\Vert\phi_{h,i}^{(l)}\right\Vert_{{\Sigma_{h,i-1}^{(l)}}^{-1}}=\left\Vert\phi\left(s_h^{(t)},a_h^{(t)}\right)\right\Vert_{{\Sigma_{h,D_{t,h}^{(l)}}}^{-1}}\ge 2^{-l}
	$$
\end{proof}

\begin{lemma}\label{D_upper_bound}
	For any $t\ge 1,l\ge 1,h\in[H]$, it holds that $D_{t,h}^{(l)}\le D_l$.
\end{lemma}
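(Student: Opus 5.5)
Fix $l\ge 1$ and $h\in[H]$, and write $n=D_{t,h}^{(l)}$, the number of points appended to $\D_h^{(l)}$ before round $t$. The plan is a standard elliptical potential argument. We compare a lower bound on $\ln\det\Sigma_{h,n}^{(l)}$, coming from Lemma~\ref{explore_bound}, with an upper bound coming from the trace (AM--GM) inequality. We then solve the resulting inequality for $n$.

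\textbf{Lower bound.} By the matrix determinant lemma,
\[
\det\Sigma_{h,i}^{(l)}=\det\Sigma_{h,i-1}^{(l)}\left(1+\bigl\Vert\phi_{h,i}^{(l)}\bigr\Vert_{{\Sigma_{h,i-1}^{(l)}}^{-1}}^2\right).
\]
Lemma~\ref{explore_bound} gives $\Vert\phi_{h,i}^{(l)}\Vert_{{\Sigma_{h,i-1}^{(l)}}^{-1}}^2\ge 2^{-2l}$ for every $i$. Iterating from $\det\Sigma_{h,0}^{(l)}=\lambda^d$ yields
\[
\ln\det\Sigma_{h,n}^{(l)}\ge d\ln\lambda+n\ln\bigl(1+2^{-2l}\bigr)\ge d\ln\lambda+n\,2^{-2l-1},
\]
where the last step uses $\ln(1+x)\ge x/2$ for $x\le 1$.

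\textbf{Upper bound.} Since $\Vert\phi\Vert_2\le 1$ (Assumption~\ref{assump:bound}), we have $\operatorname{tr}\Sigma_{h,n}^{(l)}\le d\lambda+n$. AM--GM on the eigenvalues then gives
\[
\ln\det\Sigma_{h,n}^{(l)}\le d\ln\bigl(\lambda+n/d\bigr).
\]

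\textbf{Combining.} The two bounds together give
\[
n\le 2d\cdot 2^{2l}\ln\left(1+\frac{n}{\lambda d}\right).
\]
The remaining task is to show this forces $n\le D_l=2d\cdot 2^{2l}\ln\bigl(1+2^{4l+2}/\lambda^2\bigr)$, and this is where I expect the actual work to be. I would argue by contradiction. The function $f(x)=x-2d\,2^{2l}\ln(1+x/(\lambda d))$ is eventually increasing, so it suffices to check $f(D_l)>0$ and that $f$ is increasing beyond $D_l$. That reduces to showing
\[
\ln\left(1+\frac{2^{4l+2}}{\lambda^2}\right)>\ln\left(1+\frac{2\cdot 2^{2l}\ln\bigl(1+2^{4l+2}/\lambda^2\bigr)}{\lambda}\right),
\]
which in turn reduces to showing
\[
\frac{2^{4l+2}}{\lambda^2}>\frac{2^{2l+1}}{\lambda}\ln\left(1+\frac{2^{4l+2}}{\lambda^2}\right).
\]
Write $y=2^{2l+1}/\lambda$. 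The needed inequality becomes $y^2>y\ln(1+y^2)$, that is, $y>\ln(1+y^2)$, which holds for all $y>0$. Monotonicity past $D_l$ holds because $f'(x)=1-2\cdot 2^{2l}/(\lambda+x/d)\ge 0$ once $x\ge 2d\,2^{2l}$, and $D_l$ exceeds this threshold since the logarithm is at least $1$.

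The bound holds for each snapshot $t$, so it holds uniformly over all $t$. One care point is that $\lambda=H^{-1}$ may be less than $1$, so I keep $\lambda$ explicit throughout rather than absorbing it into constants.
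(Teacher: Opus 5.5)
Your proof is correct and follows the same elliptical-potential route as the paper, arriving at the identical intermediate inequality $n\le 2d\cdot 2^{2l}\ln\left(1+\frac{n}{\lambda d}\right)$; you get it directly from the determinant identity rather than from the $\sum_i\min(1,\cdot)$ form of the potential lemma, which is the same argument. The only difference is how you solve that inequality: the paper applies $\ln(1+x)\le\sqrt{x}$ to obtain the crude bound $n\le 2^{4l+2}d/\lambda$, then substitutes it back into the increasing right-hand side, which yields $D_l$ at once and avoids your check that $f(D_l)>0$ and that $f$ is increasing beyond $D_l$.
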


\begin{proof}
	Denote $D_{t,h}^{(l)}$ by $D$. By \cref{explore_bound}, $2^l\left\Vert\phi_{h,i}^{(l)}\right\Vert_{{\Sigma_{h,i-1}^{(l)}}^{-1}}\ge 1$ for all $1\le i\le D$, so we have
	\begin{align*}
		D&\le\sum_{i=1}^D\min\left(1,2^{2l}\left\Vert\phi_{h,i}^{(l)}\right\Vert_{{\Sigma_{h,i-1}^{(l)}}^{-1}}^2\right)\\
		&\le 2^{2l}\sum_{i=1}^D\min\left(1,\left\Vert\phi_{h,i}^{(l)}\right\Vert_{{\Sigma_{h,i-1}^{(l)}}^{-1}}^2\right)\\
		&\le 2^{2l}\cdot 2\ln\left(\frac{\det\Sigma_{h,D}^{(l)}}{\det\Sigma_{h,0}^{(l)}}\right) \tag{elliptical potential lemma~\citep{lattimore2020bandit}}\\
		&\le 2d\cdot 2^{2l}\ln\left(1+\frac{D}{\lambda d}\right) \numberthis\label{D_intermediate_bound}\\
		&\le 2d\cdot 2^{2l}\sqrt{\frac{D}{\lambda d}} \tag{$\ln(1+x)\le\sqrt{x}$ for $x\ge 0$}
	\end{align*}
	Therefore, $D\le\frac{2^{4l+2}\cdot d}{\lambda}$. Plugging it into \eqref{D_intermediate_bound} gives the desired result.
\end{proof}

\subsection{Avoiding resampling}

\begin{lemma}\label{stay_same}
	Given $t\ge 1,l\ge 1,h\in[H],s\in\S_h$, if all of $\mathfrak{I}_t^{(1)}(s),\cdots,\mathfrak{I}_t^{(l)}(s)$ are true, then for any $t^\prime\ge t$,
	\begin{equation}\label{eq_stay_same}
	\begin{aligned}
		Q_{t^\prime}^{(l)}(s,a)&=Q_t^{(l)}(s,a),\forall a\in\A_{t^\prime}^{(l)}(s)\\
		\pi_{t^\prime}^{(l)}(s)&=\pi_t^{(l)}(s)\\
		\A_{t^\prime}^{(l+1)}(s)&=\A_t^{(l+1)}(s)
	\end{aligned}
	\end{equation}
\end{lemma}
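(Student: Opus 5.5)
We argue by induction on $l$. The key structural fact is that datasets only grow: for $t'\ge t$, $\D_{t',h}^{(l)}$ extends $\D_{t,h}^{(l)}$ as an ordered list. Hence the first $k$ entries, and so $\Sigma_{h,k}^{(l)}$ and $\tilde{Q}_k^{(l)}$, are unchanged for every $k\le D_{t,h}^{(l)}$. The induction hypothesis at level $l$ is the full statement \eqref{eq_stay_same} with $l$ replaced by every $l'\le l$. The claim at a level $l'$ only requires $\mathfrak{I}_t^{(1)}(s),\dots,\mathfrak{I}_t^{(l')}(s)$, so it is available for all $l'<l$ under our hypothesis.

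\textbf{Step 1: the action sets agree.} First we show $\A_{t'}^{(l)}(s)=\A_t^{(l)}(s)$. For $l=1$ both sets equal $\A$. For $l>1$, the third line of \eqref{eq_stay_same} at level $l-1$ gives $\A_{t'}^{(l)}(s)=\A_t^{(l)}(s)$. So the set of actions tested in $k^{(l)}$ is the same at rounds $t$ and $t'$.

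\textbf{Step 2: the freezing index is frozen.} Next we show $k_{t'}^{(l)}(s)=k_t^{(l)}(s)$. Since $\mathfrak{I}_t^{(l)}(s)$ is true, every $a\in\A_t^{(l)}(s)$ lies in $\cover(\D_{t,h}^{(l)},2^{-l})$. This means the prefix of length $D_{t,h}^{(l)}$ already covers all of $\A_t^{(l)}(s)$. Therefore the minimal covering $k$, call it $k^\star$, satisfies $k^\star\le D_{t,h}^{(l)}$, so $k_t^{(l)}(s)=k^\star$ rather than being truncated at $|\D_{t,h}^{(l)}|$. The minimal $k$ depends only on the first $k$ entries and on the tested action set. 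Both are identical at round $t'$ by the prefix property and Step 1. Also $k^\star\le D_{t,h}^{(l)}\le D_{t',h}^{(l)}$, so the truncation at $|\D_{t',h}^{(l)}|$ is inactive. Hence $k_{t'}^{(l)}(s)=k^\star$.

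\textbf{Step 3: the three conclusions.} From Step 2, $Q_{t'}^{(l)}(s,a)=\tilde{Q}^{(l)}_{k^\star}(s,a)=Q_t^{(l)}(s,a)$ for every $a$, in particular for $a\in\A_{t'}^{(l)}(s)$. The argmax is taken over the same set with the same values and a consistent tie-breaking rule, so $\pi_{t'}^{(l)}(s)=\pi_t^{(l)}(s)$. Finally, $\A^{(l+1)}$ is defined by \eqref{eq:A_regret} from $\A^{(l)}$, $Q^{(l)}$ and $\pi^{(l)}$, all of which agree. This gives $\A_{t'}^{(l+1)}(s)=\A_t^{(l+1)}(s)$.

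The main obstacle is Step 2. One must check that the minimum defining $k^{(l)}$ is attained within the round-$t$ prefix, so the truncation by dataset size never changes. One must also check that the tested action set is stable, which is exactly why the induction carries the $\A^{(l+1)}$ conclusion forward.
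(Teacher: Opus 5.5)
Your proposal is correct and follows the same route as the paper's proof. Both argue by induction on $l$: the third line at level $l-1$ gives $\A_{t'}^{(l)}(s)=\A_t^{(l)}(s)$, the truth of $\mathfrak{I}_t^{(l)}(s)$ then forces $k_{t'}^{(l)}(s)=k_t^{(l)}(s)\le D_{t,h}^{(l)}$, and $Q^{(l)}$, $\pi^{(l)}$ and $\A^{(l+1)}$ are determined by these unchanged quantities. You also spell out two points the paper leaves implicit: why the minimum is attained within the round-$t$ prefix, so the truncation at the dataset size never becomes active, and why appended entries never change.
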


\begin{proof}
	Assume by induction \eqref{eq_stay_same} holds for $l=l_0-1$. Since $\mathfrak{I}_t^{(l)}(s)$ is true, $k_{t^\prime}^{(l)}(s)=k_t^{(l)}(s)\le D_{t,h}^{(l)}$ for all $t^\prime\ge t,a\in\A_t^{(l)}(s)$. Note that $Q_{t^\prime}^{(l)}(s,a)$ depends only on $k_{t^\prime}^{(l)}(s)$; $\pi_{t^\prime}^{(l)}(s)$ depends only on $\A_{t^\prime}^{(l)}(s)$ and $Q_{t^\prime}^{(l)}(s,a)$; and $\A_{t^\prime}^{(l+1)}(s)$ depends only on $\A_{t^\prime}^{(l)}(s)$ and $Q_{t^\prime}^{(l)}(s,a)$; so \eqref{eq_stay_same} holds also for $l=l_0$.
\end{proof}

\begin{lemma}\label{Q_same}
	For any $l\ge 1,h\in[H],i\ge 1$, $Q^{\pi_t^{(l)}}(s_{h,i}^{(l)},a_{h,i}^{(l)})=Q^{\pi_{t^\prime}^{(l)}}(s_{h,i}^{(l)},a_{h,i}^{(l)})$ for all $t^\prime\ge t$, where $t=t_{h,i}^{(l)}$.
\end{lemma}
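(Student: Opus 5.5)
Since transitions are deterministic, $Q^{\pi}(s_{h,i}^{(l)},a_{h,i}^{(l)})$ is determined by the reward means along the unique trajectory from $(s_{h,i}^{(l)},a_{h,i}^{(l)})$ under $\pi$. So it suffices to show that this trajectory, and the actions on it, are the same under $\pi_t^{(l)}$ and $\pi_{t'}^{(l)}$ for every $t'\ge t$, where $t=t_{h,i}^{(l)}$. The trajectory in question is the suffix of the episode-$t$ trajectory: $s_{h+1}^{(t)},\ldots,s_H^{(t)}$ with actions $a_{h+1}^{(t)},\ldots,a_H^{(t)}$. I interpret the policies as executed by Algorithm~\ref{alg:regret} at the level $l$ assigned to the datum. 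At each step $h'>h$ the algorithm was still at level $l$, so it played $\pi_t^{(l)}(s_{h'}^{(t)})$.

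\textbf{Step 1 (all later steps were exploitative at level $l$).} The datum was appended with $h_t=h$, where $h_t$ is the \emph{last} step at which the else-branch fired. Hence for every $h'>h$ the if-branch fired with the current level. After step $h$ the current level is $l$, and it can only change in the else-branch. So for all $h'>h$, the indicators $\mathfrak{I}_t^{(1)}(s_{h'}^{(t)}),\ldots,\mathfrak{I}_t^{(l)}(s_{h'}^{(t)})$ are all true, $l_{h'}^{(t)}=l$, and $a_{h'}^{(t)}=\pi_t^{(l)}(s_{h'}^{(t)})$.

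\textbf{Step 2 (freezing).} Apply Lemma~\ref{stay_same} to each $s_{h'}^{(t)}$, $h'>h$. This gives $\pi_{t'}^{(l)}(s_{h'}^{(t)})=\pi_t^{(l)}(s_{h'}^{(t)})$ for all $t'\ge t$.

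\textbf{Step 3 (induction along the trajectory).} Induct on $h'=h+1,\ldots,H$. Starting from $(s_{h,i}^{(l)},a_{h,i}^{(l)})=(s_h^{(t)},a_h^{(t)})$, deterministic transitions send both $\pi_t^{(l)}$ and $\pi_{t'}^{(l)}$ to $s_{h+1}^{(t)}$. By Step 2 the two policies pick the same action there, so both reach the same next state, and so on. The two trajectories therefore coincide, and the expected reward sums are equal, giving $Q^{\pi_t^{(l)}}=Q^{\pi_{t'}^{(l)}}$ at this pair.

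\textbf{Main obstacle.} The subtle point is Step 1: making sure the level cannot drop below $l$ after step $h_t$. This holds because level changes occur only in the else-branch, and $h_t$ is the last such step. Consequently the stored level $l$ is exactly the level whose policy governs the suffix, and "all of $\mathfrak{I}^{(1)},\ldots,\mathfrak{I}^{(l)}$ true" is precisely the hypothesis Lemma~\ref{stay_same} needs.
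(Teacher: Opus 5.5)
Your proposal is correct and follows the paper's proof essentially step for step. The paper also observes that $\mathfrak{I}_t^{(1)},\ldots,\mathfrak{I}_t^{(l)}$ all hold at every $s_{h'}^{(t)}$ with $h'>h$, applies Lemma~\ref{stay_same} to freeze $\pi^{(l)}$ on those states, and concludes that the deterministic trajectories from $(s_h^{(t)},a_h^{(t)})$ under $\pi_t^{(l)}$ and $\pi_{t'}^{(l)}$ coincide; your Step~1 additionally spells out why this holds (levels change only in the else-branch, and $h_t$ is the last such step), which the paper asserts directly here and records separately as Lemma~\ref{trajectory_explored}.
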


\begin{proof}
	From \cref{alg:regret} we have $s_h^{(t)}=s_{h,i}^{(l)},a_h^{(t)}=a_{h,i}^{(l)}$, and $\mathfrak{I}_t^{(l^\prime)}(s_{h^\prime}^{(t)})$ is true for all $1\le l^\prime\le l,h<h^\prime\le H$. By \cref{stay_same}, $\pi_{t^\prime}^{(l)}(s_{h^\prime}^{(t)})=\pi_t^{(l)}(s_{h^\prime}^{(t)})$ for all $h<h^\prime\le H$, so the trajectory starting from $(s_h^{(t)},a_h^{(t)})$ using policy $\pi_t^{(l)}$ is the same with the one using policy $\pi_{t^\prime}^{(l)}$.
\end{proof}

\subsection{Concentration bound for Q-values}

Let $\{\F_t\}_{t=0}^\infty$ be a filtration such that $\F_t$ is the $\sigma$-algebra generated by $(r_h^{(i)})_{1\le i\le t,h\in[H]}$. Then $\pi_t^{(l)}(s)$ is $\F_{t-1}$-measurable for all $t\ge 1,l\ge 1,s\in\S$. Denote $\E[\cdot|\F_t]$ by $\E_t[\cdot]$.

\begin{lemma}\label{xi_martingale_subgaussian}
	$\{\xi_t\}_{t=1}^\infty$ is a martingale difference sequence w.r.t. $\{\F_t\}_{t=0}^\infty$.
	Moreover, $\xi_t|\F_{t-1}$ is $\sqrt{H}$-subGaussian for all $t\ge 1$.
\end{lemma}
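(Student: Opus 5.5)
The plan is to condition on $\F_{t-1}$ and use that, given the past, round $t$ is a fixed deterministic system driven only by the stochastic initial state and the rewards. Given $\F_{t-1}$, all policies $\pi_t^{(l)}$, the sets $\A_t^{(l)}$, the indicators $\mathfrak{I}_t^{(l)}$ and the datasets $\D_{t,h}^{(l)}$ are determined. Hence the action selection rule of Algorithm~\ref{alg:regret} at round $t$ is a fixed map from $(h,s)$ to actions. The one caveat is that $l$ evolves along the episode, but only as a deterministic function of the visited states.

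Since transitions are deterministic, the entire trajectory $s_1^{(t)},a_1^{(t)},\ldots,s_H^{(t)},a_H^{(t)}$ is a deterministic function of $s_1^{(t)}$ once $\F_{t-1}$ is fixed. Consequently $h_t$, the final level $l$, and $(s_{h_t}^{(t)},a_{h_t}^{(t)})$ are functions of $s_1^{(t)}$ alone. If one prefers to treat adversarial initial states, one can enlarge the filtration to include $s_1^{(t)}$. Reading $\F_t$ as generated also by the initial states up to round $t+1$ is the cleaner choice, and I would state this convention explicitly. Without it, the random initial state makes "$\xi_t$ given $\F_{t-1}$" a mixture, and subGaussianity of a mixture of centered $\sqrt H$-subGaussians still holds pointwise in $\lambda$. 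So either reading works.

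The key step is the identity $\E[\hat q_t\mid \F_{t-1},s_1^{(t)}]=Q^{\pi_t^{(l)}}(s_{h_t}^{(t)},a_{h_t}^{(t)})$. For $h>h_t$ no exploratory action is taken, so every test $\mathfrak{I}_t^{(1)},\ldots,\mathfrak{I}_t^{(l)}$ passes and the action is $\pi_t^{(l)}(s_h^{(t)})$ with the same $l$; recall that $l$ is only changed at $h_t$, the last exploratory step. Therefore the state-action pairs $(s_h^{(t)},a_h^{(t)})$ for $h\ge h_t$ coincide with the deterministic trajectory of $\pi_t^{(l)}$ started from $(s_{h_t}^{(t)},a_{h_t}^{(t)})$. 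Given this, $\hat q_t=\sum_{h\ge h_t} r_h^{(t)}$ has conditional mean $\sum_{h\ge h_t} r(s_h^{(t)},a_h^{(t)})=Q^{\pi_t^{(l)}}(s_{h_t}^{(t)},a_{h_t}^{(t)})$. This gives $\E[\xi_t\mid\F_{t-1}]=0$, and since $|\xi_t|\le H$ it is integrable, so $\{\xi_t\}$ is a martingale difference sequence. When $l=\overline L$ nothing is added; we set $\xi_t=0$ in that case, which is trivially fine.

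For subGaussianity, conditional on $\F_{t-1}$ and $s_1^{(t)}$, $\xi_t=\sum_{h\ge h_t}(r_h^{(t)}-r(s_h^{(t)},a_h^{(t)}))$ is a sum of at most $H$ centered rewards at fixed state-action pairs. Because the path is fixed, these rewards are independent. Each lies in $[0,1]$ and is therefore $\tfrac12$-subGaussian by Hoeffding's lemma, and in particular $1$-subGaussian. Multiplying the moment generating functions gives $\E[e^{\lambda\xi_t}\mid\cdot]\le e^{\lambda^2H/2}$. Averaging over $s_1^{(t)}$ preserves this bound.

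The main obstacle is the careful justification that the level $l$ and the frozen actions after $h_t$ really make the tail trajectory equal to the $\pi_t^{(l)}$ rollout. This needs the observation that after the final level change no further change occurs; by definition of $h_t$ as the last "else" branch, the "if" branch holds thereafter. A second point of care is independence of the rewards along a path that is itself determined before the rewards are revealed.
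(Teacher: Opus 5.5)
Your proof is correct and follows the same route as the paper. After the last exploratory step $h_t$ the level is no longer changed and the algorithm plays $\pi_t^{(l)}$, so with deterministic transitions the conditional mean of $\hat q_t$ is $Q^{\pi_t^{(l)}}(s_{h_t}^{(t)},a_{h_t}^{(t)})$, and $\xi_t$ is a sum of at most $H$ independent centered bounded rewards, hence $\sqrt{H}$-subGaussian. Putting $s_1^{(t)}$ into $\F_{t-1}$ is more careful than the paper, which calls $s_h^{(t)}$ $\F_{t-1}$-measurable although $\F_t$ is generated by rewards alone; the enlarged filtration is also what the later self-normalized bound needs, since the features $\phi_{h,i}^{(l)}$ must be predictable there.
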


\begin{proof}
	Fix any $t\ge 1$. Denote $l_{h_t}^{(t)}$ by $l$. From \cref{alg:regret} we have $a_h^{(t)}=\pi_t^{(l)}(s_h^{(t)})$ for all $h_t<h\le H$, so
	$$
	\E_{t-1}\left[\left.\sum_{h=h_t}^H r_h^{(t)}\right|h_t\right]=Q^{\pi_t^{(l)}}\left(s_{h_t}^{(t)},a_{h_t}^{(t)}\right)
	$$
	By tower rule, we have $\E_{t-1}[\xi_t]=\E_{t-1}\left[\left.\sum_{h=h_t}^H r_h^{(t)}-Q^{\pi_t^{(l)}}(s_{h_t}^{(t)},a_{h_t}^{(t)})\right|h_t\right]=0$.
	Moreover, since $s_h^{(t)}$ and $a_h^{(t)}$ are $\F_{t-1}$ measurable for all $h\in[H]$ by the determinism of the MDP transitions, $\xi_t|\F_{t-1}$ is the sum of at most $H$ independent $1$-subGaussians, and is therefore $\sqrt{H}$-subGaussian.
\end{proof}

\begin{lemma}[Concentration of Self-Normalized Processes \citep{abbasi2011improved}]\label{concentration_bound}
	Let $\{\varepsilon_t\}_{t=1}^\infty$ be a martingale difference sequence w.r.t. $\{\F_t\}_{t=0}^\infty$. Let $\varepsilon_t|\F_{t-1}$ be $\sigma$-subGaussian. Let $\{\phi_t\}_{t=1}^\infty$ be an $\R^d$-valued stochastic process such that $\phi_t$ is $\F_{t-1}$-measurable. Let $\Lambda_0$ be a $d\times d$ positive definite matrix, and $\Lambda_t$ be $\Lambda_0+\sum_{i=1}^t\phi_i\phi_i^\top$. Then for any $\delta>0$, with probability at least $1-\delta$, we have for all $t\ge 1$,
	$$
	\left\Vert\sum_{i=1}^t\phi_i\varepsilon_i\right\Vert_{\Lambda_t^{-1}}^2\le 2\sigma^2\ln\left(\frac{\det(\Lambda_t)^{1/2}\det(\Lambda_0)^{-1/2}}{\delta}\right)
	$$
\end{lemma}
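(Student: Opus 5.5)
The plan is the method of mixtures of \citet{abbasi2011improved}. Write $S_t=\sum_{i=1}^t\phi_i\varepsilon_i$ and $V_t=\sum_{i=1}^t\phi_i\phi_i^\top$, so that $\Lambda_t=\Lambda_0+V_t$. By rescaling $\varepsilon_i\mapsto\varepsilon_i/\sigma$ we may assume $\sigma=1$; the claimed bound then scales by exactly $\sigma^2$.

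\textbf{Step 1: a supermartingale for each fixed direction.} For each fixed $x\in\R^d$, define
\[
M_t(x)=\exp\Bigl(\langle x,S_t\rangle-\tfrac12\Vert x\Vert_{V_t}^2\Bigr),\qquad M_0(x)=1 .
\]
Since $\phi_t$ is $\F_{t-1}$-measurable and $\varepsilon_t\mid\F_{t-1}$ is $1$-subGaussian, we have $\E_{t-1}\bigl[\exp(\langle x,\phi_t\rangle\varepsilon_t-\tfrac12\langle x,\phi_t\rangle^2)\bigr]\le 1$. Hence $M_t(x)$ is a nonnegative supermartingale with $\E[M_t(x)]\le 1$.

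\textbf{Step 2: mix over directions.} Integrate $x$ against the Gaussian density $h=\mathcal{N}(0,\Lambda_0^{-1})$ and set $\bar M_t=\int M_t(x)\,h(x)\,dx$. By Fubini/Tonelli, $\bar M_t$ is again a nonnegative supermartingale with $\E[\bar M_t]\le1$. Completing the square in the Gaussian integral gives the closed form
\[
\bar M_t=\Bigl(\tfrac{\det\Lambda_0}{\det\Lambda_t}\Bigr)^{1/2}\exp\Bigl(\tfrac12\Vert S_t\Vert_{\Lambda_t^{-1}}^2\Bigr).
\]

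\textbf{Step 3: make the bound uniform in $t$.} Apply Ville's maximal inequality for nonnegative supermartingales, $\P\bigl[\sup_{t\ge0}\bar M_t\ge 1/\delta\bigr]\le\delta\,\E[\bar M_0]\le\delta$. Equivalently, one can define the stopping time $\tau=\min\{t:\bar M_t\ge1/\delta\}$ and use Fatou's lemma to get $\E[\bar M_\tau]\le1$, then apply Markov's inequality. On the complementary event we have $\bar M_t<1/\delta$ for all $t$. Taking logarithms gives
\[
\Vert S_t\Vert_{\Lambda_t^{-1}}^2\le 2\ln\Bigl(\det(\Lambda_t)^{1/2}\det(\Lambda_0)^{-1/2}/\delta\Bigr)
\]
for all $t$, and undoing the rescaling restores the factor $\sigma^2$.

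I expect the main obstacle to be Step 3. A union bound over $t$ is unavailable, so the time-uniformity must come from the supermartingale maximal inequality, and one has to handle $\tau$ possibly being infinite. This is done via the stopped process $\bar M_{\min(\tau,t)}$ and Fatou as $t\to\infty$. The Gaussian integral in Step 2 is routine linear algebra.
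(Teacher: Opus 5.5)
Your proposal is correct. The paper gives no proof of its own and simply imports Theorem~1 of \citet{abbasi2011improved}, whose proof is exactly the method-of-mixtures argument you outline: a per-direction exponential supermartingale, Gaussian mixing with covariance $\Lambda_0^{-1}$ giving the closed form $(\det\Lambda_0/\det\Lambda_t)^{1/2}\exp(\frac12\Vert S_t\Vert_{\Lambda_t^{-1}}^2)$, and a stopping-time/maximal-inequality step for uniformity in $t$ (your appeal to Ville's inequality is just a packaged form of their stopping-time-plus-Fatou argument).
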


\begin{lemma}\label{event_high}
	Define event $\mathfrak{E}_\text{high}$ as
	$$
	\left\{\text{$\forall l\ge 1,h\in[H],k\ge 1$, $\left\Vert\sum_{i=1}^k\phi_{h,i}^{(l)}\xi_{t_{h,i}^{(l)}}\right\Vert_{{\Sigma_{h,k}^{(l)}}^{-1}}^2\le 2H\left(\frac{d}{2}\ln\left(1+\frac{k}{\lambda d}\right)+\ln\frac{2Hl^2}{\delta}\right)$}\right\}
	$$
	Then $\P[\mathfrak{E}_\text{high}]\ge 1-\delta$.
\end{lemma}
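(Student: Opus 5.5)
The plan is to apply Lemma~\ref{concentration_bound} separately to each pair $(l,h)$, and then take a union bound over $h\in[H]$ and $l\ge 1$ with failure probabilities $\delta_l=\delta/(2Hl^2)$. Since $\sum_{l\ge1}\frac{1}{2l^2}=\pi^2/12<1$, the total failure probability is at most $H\cdot\sum_l \delta/(2Hl^2)\le\delta$.

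\emph{Building the process.} Fix $(l,h)$. Index the process by the position $i$ in $\D_h^{(l)}$, with noise $\varepsilon_i=\xi_{t_{h,i}^{(l)}}$ and feature $\phi_{h,i}^{(l)}$. The rounds $t_{h,i}^{(l)}$ are random stopping times, so I define a filtration indexed by $i$: let $\mathcal{G}_i$ be $\F_{t_{h,i+1}^{(l)}-1}$, the information available just before the round at which the $(i+1)$-th element is appended. An equivalent route avoids stopping times. Work on the original time scale $t$ with $\F_t$, and set $\tilde\phi_t=\phi(s_{h_t}^{(t)},a_{h_t}^{(t)})\cdot\mathbb{I}\{\text{round }t\text{ appends to }\D_h^{(l)}\}$.

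\emph{Measurability and noise.} The indicator, $h_t$, $l_{h_t}^{(t)}$ and the trajectory are all determined by $\F_{t-1}$, because the transitions are deterministic and $\pi_t^{(\cdot)}$ is $\F_{t-1}$-measurable. So $\tilde\phi_t$ is $\F_{t-1}$-measurable. Lemma~\ref{xi_martingale_subgaussian} shows that $\xi_t$ is a martingale difference sequence with $\xi_t\mid\F_{t-1}$ being $\sqrt H$-subGaussian. Lemma~\ref{concentration_bound} then applies with $\Lambda_0=\lambda I$ and $\sigma=\sqrt H$. Rounds with $\tilde\phi_t=0$ contribute nothing to either $\sum\tilde\phi_t\xi_t$ or $\Lambda_t$. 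Hence, at the round $t$ just after the $k$-th append, $\Lambda_t=\Sigma^{(l)}_{h,k}$ and $\sum_{s\le t}\tilde\phi_s\xi_s=\sum_{i=1}^k\phi^{(l)}_{h,i}\xi_{t^{(l)}_{h,i}}$. Because the lemma holds uniformly over all $t$, the bound holds for every $k$.

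\emph{Bounding the determinant.} Since $\|\phi\|\le1$, AM--GM on the eigenvalues gives $\det\Sigma_{h,k}^{(l)}\le(\lambda+k/d)^d$. Therefore $\ln(\det(\Lambda_t)^{1/2}\det(\lambda I)^{-1/2})\le\frac d2\ln(1+\frac{k}{\lambda d})$. Together with $\ln(1/\delta_l)=\ln\frac{2Hl^2}{\delta}$, this yields exactly the stated bound $2H(\cdot)$.

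\emph{Main obstacle.} The delicate step is the measurability of the feature. We need $\phi$ and the selection indicator to be $\F_{t-1}$-measurable even though $h_t$ and the level are chosen during the episode. This holds only because rewards do not affect the actions or states: the actions depend only on $\pi_t$ and the indicator sets $\mathfrak{I}_t$, both fixed by past data, and the states are deterministic given the actions.
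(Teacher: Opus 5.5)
Your proposal is correct and follows the paper's proof. Like the paper, you apply the self-normalized bound (Lemma~\ref{concentration_bound}) separately to each pair $(l,h)$ with $\sigma=\sqrt H$, bound $\det\Sigma_{h,k}^{(l)}$ via the trace (AM--GM), and take a union bound with failure probabilities $\delta/(2Hl^2)$. Your zero-padded process $\tilde\phi_t$ on the original time scale is just a more careful justification of the paper's one-line remark that $\{\xi_{t_{h,i}^{(l)}}\}_i$ is a subsequence of a martingale difference sequence. Strictly, both your argument and the paper's also need the initial states $s_1^{(1)},\dots,s_1^{(t)}$ to be included in $\F_{t-1}$ (or conditioned on) for the round-$t$ trajectory to be $\F_{t-1}$-measurable.
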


\begin{proof}
	For any $l\ge 1,h\in[H]$, $\{\xi_{t_{h,i}^{(l)}}\}_{i=1}^\infty$ is a subsequence of $\{\xi_t\}_{t=1}^\infty$. By \cref{xi_martingale_subgaussian,concentration_bound}, with probability at least $1-\frac{\delta}{2Hl^2}$, we have for all $k\ge 1$,
	\begin{align*}
		\left\Vert\sum_{i=1}^k\phi_{h,i}^{(l)}\xi_{t_{h,i}^{(l)}}\right\Vert_{{\Sigma_{h,k}^{(l)}}^{-1}}^2&\le 2H\ln\left(\frac{\det\left(\Sigma_{h,k}^{(l)}\right)^{1/2}\det\left(\Sigma_{h,0}^{(l)}\right)^{-1/2}}{\delta/(2Hl^2)}\right)\\
		&\le 2H\left(\frac{d}{2}\ln\frac{\operatorname{trace}\left(\Sigma_{h,k}^{(l)}\right)}{\lambda d}+\ln\frac{2Hl^2}{\delta}\right)\\
		&\le 2H\left(\frac{d}{2}\ln\left(1+\frac{k}{\lambda d}\right)+\ln\frac{2Hl^2}{\delta}\right)
	\end{align*}
	A union bound over $l$ and $h$ concludes the proof.
\end{proof}

\begin{lemma}[\citet{zanette2020learning}, Lemma 8]\label{elliptical_bias}
	Let $\{\delta_i\}_{i=1}^n$ be a real-valued sequence such that $|\delta_i|\le\kappa$, and $\{\phi_i\}_{i=1}^n$ be an $\R^d$-valued sequence. For $\Lambda=\lambda I+\sum_{i=1}^n\phi_i\phi_i^\top$ we have
	$$
	\left\Vert\sum_{i=1}^n\phi_i\delta_i\right\Vert_{\Lambda^{-1}}^2\le n\kappa^2
	$$
\end{lemma}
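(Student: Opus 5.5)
We need to prove Lemma \ref{elliptical_bias}: $\|\sum_i \phi_i\delta_i\|_{\Lambda^{-1}}^2 \le n\kappa^2$.

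The plan is to write the quantity as a quadratic form in the vector $\delta=(\delta_1,\dots,\delta_n)^\top\in\R^n$ and bound the operator norm of the resulting matrix by $1$. Let $\Phi\in\R^{d\times n}$ be the matrix with columns $\phi_1,\dots,\phi_n$, so that $\sum_{i=1}^n\phi_i\delta_i=\Phi\delta$ and $\Lambda=\lambda I+\Phi\Phi^\top$. Then
\[
\left\Vert\sum_{i=1}^n\phi_i\delta_i\right\Vert_{\Lambda^{-1}}^2=\delta^\top\Phi^\top(\lambda I+\Phi\Phi^\top)^{-1}\Phi\,\delta .
\]

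The key step is to show that $M:=\Phi^\top(\lambda I+\Phi\Phi^\top)^{-1}\Phi$ satisfies $0\preceq M\preceq I_n$. Take the singular value decomposition $\Phi=U S W^\top$. Then $M=W S^\top(\lambda I+SS^\top)^{-1}S W^\top$. This is diagonal in the basis $W$, with entries $\sigma_j^2/(\lambda+\sigma_j^2)\in[0,1)$ for each singular value $\sigma_j$, and entries $0$ in the remaining directions. Since $\lambda>0$, every eigenvalue of $M$ lies in $[0,1)$, so $M\preceq I$.

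Alternatively, one can use the push-through identity $\Phi^\top(\lambda I+\Phi\Phi^\top)^{-1}\Phi=I-\lambda(\lambda I+\Phi^\top\Phi)^{-1}$. This gives $M\preceq I$ immediately, because $(\lambda I+\Phi^\top\Phi)^{-1}\succ 0$.

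The conclusion is then
\[
\delta^\top M\delta\le\|\delta\|_2^2=\sum_{i=1}^n\delta_i^2\le n\kappa^2,
\]
using $|\delta_i|\le\kappa$.

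There is no real obstacle here. The only step requiring care is the eigenvalue bound $M\preceq I$, which the SVD or push-through identity handles cleanly. This is also the step where $\lambda>0$ is used, to make $\Lambda$ invertible.
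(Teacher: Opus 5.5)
Your proof is correct and complete. Writing the quantity as $\delta^\top\Phi^\top\Lambda^{-1}\Phi\,\delta$ and showing $\Phi^\top(\lambda I+\Phi\Phi^\top)^{-1}\Phi=I-\lambda(\lambda I+\Phi^\top\Phi)^{-1}\preceq I$ (or reading off the eigenvalues $\sigma_j^2/(\lambda+\sigma_j^2)$ from the SVD) gives the bound $\|\delta\|_2^2\le n\kappa^2$ immediately.

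The paper gives no proof of this lemma. It imports it from \citet{zanette2020learning}, so there is no in-paper argument to compare against.

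The standard way to prove this bound is the dual argument. One writes $\|\Phi\delta\|_{\Lambda^{-1}}=\max_{\|x\|_{\Lambda}\le 1}\langle\Phi^\top x,\delta\rangle$. Cauchy--Schwarz in $\R^n$ then gives $\langle\Phi^\top x,\delta\rangle^2\le\|\Phi^\top x\|_2^2\,\|\delta\|_2^2=x^\top\Phi\Phi^\top x\,\|\delta\|_2^2\le x^\top\Lambda x\cdot n\kappa^2$, which uses only $\Phi\Phi^\top\preceq\Lambda$.

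Both routes rest on the same fact, since $\Phi\Phi^\top\preceq\Lambda$ is equivalent to $\Phi^\top\Lambda^{-1}\Phi\preceq I$. The variational version avoids any factorization or matrix identity. Yours makes the slack explicit: every eigenvalue is strictly below $1$ when $\lambda>0$. That extra information is harmless but not needed here.
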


\begin{lemma}\label{Q_concentration}
	Under $\mathfrak{E}_\text{high}$, for any $t\ge 1,l\ge 1,h\in[H],0\le k\le D_{t,h}^{(l)},s\in\S_h,a\in\A$, it holds that
	$$
	\left|\tilde{Q}_k^{(l)}(s,a)-Q^{\pi_t^{(l)}}(s,a)\right|\le\alpha_l\Vert\phi(s,a)\Vert_{{\Sigma_{h,k}^{(l)}}^{-1}}+\kappa
	$$
\end{lemma}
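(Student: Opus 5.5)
Fix $t,l,h,k\le D_{t,h}^{(l)}$ and write $\Sigma=\Sigma_{h,k}^{(l)}$, $\phi_i=\phi_{h,i}^{(l)}$, $t_i=t_{h,i}^{(l)}$. The plan is the standard least-squares error decomposition, with Lemma~\ref{Q_same} used to put every label on the same target policy $\pi_t^{(l)}$.

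\emph{Choice of target parameter.} Let $\theta=\theta_h^{\pi_t^{(l)}}$ be the parameter from Assumption~\ref{assump:realizability}. Each data point $i\le k$ was appended at round $t_i\le t-1<t$, since $k\le D_{t,h}^{(l)}$ and $\D_{t,h}^{(l)}$ is the snapshot before round $t$. Lemma~\ref{Q_same} therefore gives
\[
Q^{\pi_{t_i}^{(l)}}(s_{h,i}^{(l)},a_{h,i}^{(l)})=Q^{\pi_t^{(l)}}(s_{h,i}^{(l)},a_{h,i}^{(l)}).
\]
The level used when appending point $i$ is $l$. Hence $\xi_{t_i}=q_{h,i}^{(l)}-Q^{\pi_{t_i}^{(l)}}(\cdot)$ uses the level-$l$ policy, matching the definition of $\xi$.

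\emph{Decomposition.} Combining the above with realizability,
\[
q_{h,i}^{(l)}=\langle\phi_i,\theta\rangle+\delta_i+\xi_{t_i},\qquad |\delta_i|\le\kappa .
\]
Then
\[
\Sigma^{-1}\sum_i\phi_i q_{h,i}^{(l)}-\theta=-\lambda\Sigma^{-1}\theta+\Sigma^{-1}\sum_i\phi_i\delta_i+\Sigma^{-1}\sum_i\phi_i\xi_{t_i}.
\]
Take the inner product with $\phi(s,a)$ and apply Cauchy--Schwarz in the $\Sigma^{-1}$ norm to each of the three terms.

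\emph{Bounding the three terms.}
\begin{itemize}
\item Regularization term: $\lambda\|\theta\|_{\Sigma^{-1}}\le\sqrt\lambda\|\theta\|_2\le\sqrt{\lambda d}H$, using $\Sigma\succeq\lambda I$ and Assumption~\ref{assump:bound}.
\item Misspecification term: Lemma~\ref{elliptical_bias} gives at most $\sqrt{k}\kappa\le\sqrt{D_l}\kappa$, using Lemma~\ref{D_upper_bound}.
\item Noise term: on $\mathfrak{E}_\text{high}$, it is at most $\sqrt{2H(\frac d2\ln(1+\frac{k}{\lambda d})+\ln\frac{2Hl^2}{\delta}))}$. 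This is at most the first summand of $\alpha_l$, again since $k\le D_l$.
\end{itemize}
The three bounds sum to $\alpha_l\|\phi(s,a)\|_{\Sigma^{-1}}$. Finally, $|\langle\phi(s,a),\theta\rangle-Q^{\pi_t^{(l)}}(s,a)|\le\kappa$ contributes the additive $\kappa$. For $k=0$ the estimate is $0$ and the same argument applies with only the regularization term.

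\emph{Main obstacle.} The only delicate step is the on-policy identification. The targets $q_{h,i}^{(l)}$ were generated under the earlier policies $\pi_{t_i}^{(l)}$, yet we need their conditional means to be the values under the single current policy $\pi_t^{(l)}$. This is exactly what Lemma~\ref{Q_same} supplies, provided $t\ge t_i$ holds for all $i\le k$, which I have checked above. Everything else is routine bookkeeping.
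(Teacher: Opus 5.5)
Your proposal is correct and follows essentially the same route as the paper's proof. It decomposes $\tilde{Q}_k^{(l)}(s,a)-\langle\phi(s,a),\theta_h^{\pi_t^{(l)}}\rangle$ into noise, misspecification and regularization terms, using Lemma~\ref{Q_same} to replace $Q^{\pi_{t_{h,i}^{(l)}}^{(l)}}$ by $Q^{\pi_t^{(l)}}$, and then bounds each term via Cauchy--Schwarz, $\mathfrak{E}_\text{high}$, Lemma~\ref{elliptical_bias}, Assumption~\ref{assump:bound} and Lemma~\ref{D_upper_bound}. Your explicit checks that $t_{h,i}^{(l)}<t$ for $i\le k$, and your separate treatment of $k=0$ (where the event, stated only for $k\ge 1$, is not needed), are details the paper leaves implicit.
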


\begin{proof}
	Let $\delta_i$ be $Q^{\pi_t^{(l)}}\left(s_{h,i}^{(l)},a_{h,i}^{(l)}\right)-\left\langle\phi_{h,i}^{(l)},\theta_h^{\pi_t^{(l)}}\right\rangle$. Then by assumption $|\delta_i|\le\kappa$.
	\begin{align*}
		&~~~\left|\tilde{Q}_k^{(l)}(s,a)-Q^{\pi_t^{(l)}}(s,a)\right|\\
		&\le\left|\tilde{Q}_k^{(l)}(s,a)-\left\langle\phi(s,a),\theta_h^{\pi_t^{(l)}}\right\rangle\right|+\left|\left\langle\phi(s,a),\theta_h^{\pi_t^{(l)}}\right\rangle-Q^{\pi_t^{(l)}}(s,a)\right|\\
		&\le\left|\tilde{Q}_k^{(l)}(s,a)-\left\langle\phi(s,a),\theta_h^{\pi_t^{(l)}}\right\rangle\right|+\kappa\\
		&~~~\tilde{Q}_k^{(l)}(s,a)-\left\langle\phi(s,a),\theta_h^{\pi_t^{(l)}}\right\rangle\\
		&=\left\langle\phi(s,a),{\Sigma_{h,k}^{(l)}}^{-1}\sum_{i=1}^k\phi_{h,i}^{(l)}q_{h,i}^{(l)}-\theta^{\pi_t^{(l)}}_h\right\rangle\\
		&=\left\langle\phi(s,a),{\Sigma_{h,k}^{(l)}}^{-1}\sum_{i=1}^k\phi_{h,i}^{(l)}\left(Q^{\pi_{t_{h,i}^{(l)}}^{(l)}}\left(s_{h,i}^{(l)},a_{h,i}^{(l)}\right)+\xi_{t_{h,i}^{(l)}}\right)-\theta^{\pi_t^{(l)}}_h\right\rangle\\
		&=\left\langle\phi(s,a),{\Sigma_{h,k}^{(l)}}^{-1}\sum_{i=1}^k\phi_{h,i}^{(l)}\left(Q^{\pi_t^{(l)}}\left(s_{h,i}^{(l)},a_{h,i}^{(l)}\right)+\xi_{t_{h,i}^{(l)}}\right)-\theta^{\pi_t^{(l)}}_h\right\rangle \tag{\cref{Q_same}}\\
		&=\left\langle\phi(s,a),{\Sigma_{h,k}^{(l)}}^{-1}\sum_{i=1}^k\phi_{h,i}^{(l)}\left({\phi_{h,i}^{(l)}}^\top\theta_h^{\pi_t^{(l)}}+\delta_i+\xi_{t_{h,i}^{(l)}}\right)-\theta^{\pi_t^{(l)}}_h\right\rangle\\
		&=\left\langle\phi(s,a),{\Sigma_{h,k}^{(l)}}^{-1}\sum_{i=1}^k\phi_{h,i}^{(l)}\left(\delta_i+\xi_{t_{h,i}^{(l)}}\right)-\lambda{\Sigma_{h,k}^{(l)}}^{-1}\theta_h^{\pi_t^{(l)}}\right\rangle\\
		&\le\Vert\phi(s,a)\Vert_{{\Sigma_{h,k}^{(l)}}^{-1}}\left(\left\Vert\sum_{i=1}^k\phi_{h,i}^{(l)}\xi_{t_{h,i}^{(l)}}\right\Vert_{{\Sigma_{h,k}^{(l)}}^{-1}}+\left\Vert\sum_{i=1}^k\phi_{h,i}^{(l)}\delta_i\right\Vert_{{\Sigma_{h,k}^{(l)}}^{-1}}+\left\Vert\lambda\theta_h^{\pi_t^{(l)}}\right\Vert_{{\Sigma_{h,k}^{(l)}}^{-1}}\right) \tag{Cauchy-Schwartz}\\
		&\le\Vert\phi(s,a)\Vert_{{\Sigma_{h,k}^{(l)}}^{-1}}\left(\sqrt{2H\left(\frac{d}{2}\ln\left(1+\frac{k}{\lambda d}\right)+\ln\frac{2Hl^2}{\delta}\right)}+\sqrt{k}\kappa+\sqrt{\lambda}\left\Vert\theta^{\pi_t^{(l)}}_h\right\Vert_2\right) \tag{\cref{concentration_bound,elliptical_bias}}\\
		&\le\alpha_l\Vert\phi(s,a)\Vert_{{\Sigma_{h,k}^{(l)}}^{-1}} \tag{\cref{assump:bound,D_upper_bound}}
	\end{align*}
\end{proof}

\begin{lemma}\label{Q_acc}
	Under $\mathfrak{E}_\text{high}$, given $t\ge 1,l\ge 1,h\in[H],s\in\S_h$, if $\mathfrak{I}_t^{(l)}(s)$ is true, then for any $a\in\A_t^{(l)}(s)$,
	$$
	\left|Q_t^{(l)}(s,a)-Q^{\pi_t^{(l)}}(s,a)\right|\le\Delta_l
	$$
\end{lemma}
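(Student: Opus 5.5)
The plan is to reduce the claim to \cref{Q_concentration}, applied with the frozen index $k=k_t^{(l)}(s)$, and then use the covering condition to turn the elliptical norm into $2^{-l}$. By definition, $Q_t^{(l)}(s,a)=\tilde{Q}^{(l)}_{k_t^{(l)}(s)}(s,a)$, and $0\le k_t^{(l)}(s)\le D_{t,h}^{(l)}$ because of the truncation $\wedge D_{t,h}^{(l)}$ in the definition of $k_t^{(l)}$. So \cref{Q_concentration}, applied with this $k$, gives
\[
\left|Q_t^{(l)}(s,a)-Q^{\pi_t^{(l)}}(s,a)\right|\le\alpha_l\Vert\phi(s,a)\Vert_{{\Sigma_{h,k_t^{(l)}(s)}^{(l)}}^{-1}}+\kappa .
\]
It therefore suffices to show that $\Vert\phi(s,a)\Vert_{{\Sigma_{h,k_t^{(l)}(s)}^{(l)}}^{-1}}\le 2^{-l}$ for every $a\in\A_t^{(l)}(s)$. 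Once this holds, $\alpha_l2^{-l}+\kappa=\Delta_l$ completes the proof.

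To get this norm bound, I use the hypothesis that $\mathfrak{I}_t^{(l)}(s)$ is true. This means $(s,a)\in\cover(\D_{t,h}^{(l)},2^{-l})$ for all $a\in\A_t^{(l)}(s)$. Since $\D_{t,h}^{(l)}$ consists of the first $D_{t,h}^{(l)}$ entries of $\D_h^{(l)}$, the index $k=D_{t,h}^{(l)}$ belongs to the set over which the minimum in the definition of $k_t^{(l)}(s)$ is taken. The set is therefore nonempty, and its minimum $k^\star$ satisfies $k^\star\le D_{t,h}^{(l)}$. The truncation then does not bind, so $k_t^{(l)}(s)=k^\star$. By the defining property of $k^\star$, $(s,a)\in\cover(\{(s_{h,i}^{(l)},a_{h,i}^{(l)})\}_{i=1}^{k^\star},2^{-l})$ for all $a\in\A_t^{(l)}(s)$. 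By \eqref{eq:cover}, this is exactly $\Vert\phi(s,a)\Vert_{{\Sigma_{h,k^\star}^{(l)}}^{-1}}\le 2^{-l}$, because the covariance matrix of the first $k^\star$ data points is $\Sigma_{h,k^\star}^{(l)}$.

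There is no real obstacle here. The only point that needs care is confirming that the truncation at $D_{t,h}^{(l)}$ is inactive, so that the covering property at index $k_t^{(l)}(s)$ really holds. A second check is that the same action set $\A_t^{(l)}(s)$ is used both in $\mathfrak{I}_t^{(l)}$ and in $k_t^{(l)}$; this is true by definition. The on-policy subtlety, that the targets $q_{h,i}^{(l)}$ for $i\le k$ are consistent with $\pi_t^{(l)}$, has already been absorbed into \cref{Q_concentration} via \cref{Q_same}. Nothing more is needed for it here.
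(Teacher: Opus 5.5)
Your proposal is correct and follows the paper's proof exactly: set $k=k_t^{(l)}(s)\le D_{t,h}^{(l)}$, use $\mathfrak{I}_t^{(l)}(s)$ to get $\Vert\phi(s,a)\Vert_{{\Sigma_{h,k}^{(l)}}^{-1}}\le 2^{-l}$, and apply \cref{Q_concentration} to obtain $\alpha_l 2^{-l}+\kappa=\Delta_l$. The paper simply asserts that the covering condition holds at the frozen index. Your extra argument, that $k=D_{t,h}^{(l)}$ is a feasible index so the truncation does not bind, is valid and fills in that step.
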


\begin{proof}
	Denote $k_t^{(l)}(s)$ by $k$. By definition $k\le D_{t,h}^{(l)}$. Since $\mathfrak{I}_t^{(l)}(s)$ is true, $\left\Vert\phi(s,a)\right\Vert_{{\Sigma_{h,k}^{(l)}}^{-1}}\le 2^{-l}$ for all $a\in\A_t^{(l)}(s)$, so we have
	\begin{align*}
		&~~~\left|Q_t^{(l)}(s,a)-Q^{\pi_t^{(l)}}(s,a)\right|\\
		&=\left|\tilde{Q}_k^{(l)}(s,a)-Q^{\pi_t^{(l)}(s,a)}\right| \tag{by definition}\\
		&\le\alpha_l\Vert\phi(s,a)\Vert_{{\Sigma_{h,k}^{(l)}}^{-1}}+\kappa \tag{\cref{Q_concentration}}\\
		&\le\Delta_l
	\end{align*}
\end{proof}

\subsection{Regret analysis}

\begin{lemma}\label{elliptical_coef}
	Given $\varepsilon>0,v_0,v_1,\cdots,v_n\in\R^d$, let $\Lambda$ be $\lambda I+\sum_{i=1}^n v_iv_i^\top$. If $\Vert v_0\Vert_{\Lambda^{-1}}\le\varepsilon$, then there exist coefficients $c_1,\cdots,c_n\in\R$ such that $\sum_{i=1}^n c_i^2\le\varepsilon^2$ and $\left\Vert v_0-\sum_{i=1}^n c_iv_i\right\Vert_2\le\sqrt{\lambda}\varepsilon$.
\end{lemma}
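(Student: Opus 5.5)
The coefficients will be the ridge-regression coefficients. Let $V\in\R^{d\times n}$ be the matrix whose columns are $v_1,\dots,v_n$, so that $\Lambda=\lambda I+VV^\top$. I take
\[
c=V^\top\Lambda^{-1}v_0\in\R^n, \qquad \text{i.e. } c_i=v_i^\top\Lambda^{-1}v_0 .
\]
Then $\sum_i c_iv_i=VV^\top\Lambda^{-1}v_0=(\Lambda-\lambda I)\Lambda^{-1}v_0=v_0-\lambda\Lambda^{-1}v_0$. Hence the residual is exactly $v_0-\sum_i c_iv_i=\lambda\Lambda^{-1}v_0$.

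For the coefficient bound,
\[
\sum_i c_i^2=v_0^\top\Lambda^{-1}VV^\top\Lambda^{-1}v_0=v_0^\top\Lambda^{-1}(\Lambda-\lambda I)\Lambda^{-1}v_0=\Vert v_0\Vert_{\Lambda^{-1}}^2-\lambda\Vert\Lambda^{-1}v_0\Vert_2^2 .
\]
This is at most $\Vert v_0\Vert_{\Lambda^{-1}}^2\le\varepsilon^2$.

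For the residual,
\[
\Vert\lambda\Lambda^{-1}v_0\Vert_2^2=\lambda^2v_0^\top\Lambda^{-2}v_0\le\lambda\, v_0^\top\Lambda^{-1}v_0\le\lambda\varepsilon^2 .
\]
The middle inequality uses $\Lambda\succeq\lambda I$, which gives $\Lambda^{-1}\preceq\lambda^{-1}I$ and hence $\Lambda^{-2}=\Lambda^{-1/2}\Lambda^{-1}\Lambda^{-1/2}\preceq\lambda^{-1}\Lambda^{-1}$. Taking square roots gives $\Vert v_0-\sum_i c_iv_i\Vert_2\le\sqrt{\lambda}\,\varepsilon$.

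The only step needing any care is this operator inequality $\Lambda^{-2}\preceq\lambda^{-1}\Lambda^{-1}$. It is immediate by diagonalizing $\Lambda$: each eigenvalue $\mu\ge\lambda$ satisfies $\mu^{-2}\le\lambda^{-1}\mu^{-1}$. In fact the identity for $\sum_i c_i^2$ combined with the residual expression gives the sharper statement $\sum_i c_i^2+\lambda^{-1}\Vert v_0-\sum_i c_iv_i\Vert_2^2=\Vert v_0\Vert_{\Lambda^{-1}}^2$, from which both claimed bounds follow at once.
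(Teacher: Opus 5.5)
Your proof is correct and is essentially the paper's argument: the paper also sets $u=\Lambda^{-1}v_0$ and $c_i=v_i^\top u$. It bounds $\sum_i c_i^2\le u^\top\Lambda u=\Vert v_0\Vert_{\Lambda^{-1}}^2$, and controls the residual $\lambda u$ via $\Vert\lambda u\Vert_2\le\sqrt{\lambda}\Vert u\Vert_{\Lambda}$ using $\Lambda\succeq\lambda I$. Your exact identity $\sum_i c_i^2+\lambda^{-1}\Vert v_0-\sum_i c_iv_i\Vert_2^2=\Vert v_0\Vert_{\Lambda^{-1}}^2$ is a slightly sharper packaging of the same two steps.
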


\begin{proof}
	Let $u$ be $\Lambda^{-1}v_0$. Then $v_0=\Lambda u=\lambda u+\sum_{i=1}^n v_iv_i^\top u$. Pick $c_i=v_i^\top u$. We can verify that
	\begin{align*}
		\sum_{i=1}^n c_i^2&=\sum_{i=1}^n\left(v_i^\top u\right)^2\le\lambda u^\top u+\sum_{i=1}^n\left(v_i^\top u\right)^2=u^\top\Lambda u=\Vert v_0\Vert_{\Lambda^{-1}}^2\le\varepsilon^2\\
		\left\Vert v_0-\sum_{i=1}^n c_iv_i\right\Vert_2&=\Vert\lambda u\Vert_2\le\sqrt{\lambda}\Vert u\Vert_{\Lambda}\le\sqrt{\lambda}\varepsilon
	\end{align*}
\end{proof}

\begin{lemma}\label{Q_linear_dependence}
	For $\varepsilon>0,1\le h_1<h_2\le H$, given a policy $\pi_0$ and $n+1$ trajectories
	$$
	\left\{\left(s_{h_1}^{(i)},a_{h_1}^{(i)},\cdots,s_{h_2}^{(i)},a_{h_2}^{(i)}\right)\right\}_{i=0}^n
	$$
	such that $\pi_0(s_h^{(i)})=a_h^{(i)}$ for all $0\le i\le n,h_1+1\le h\le h_2$, if
	$$
	\left(s_{h_1}^{(0)},a_{h_1}^{(0)}\right)\in\operatorname{Cover}\left(\left\{\left(s_{h_1}^{(i)},a_{h_1}^{(i)}\right)\right\}_{i=1}^n,\varepsilon\right)
	$$
	and $s_{h_2}^{(0)}\neq s_{h_2}^{(i)}$ for all $1\le i\le n$, then for any policy $\pi$ and any $a_1,a_2\in\A$, it holds that
	$$
	\left|Q^\pi\left(s_{h_2}^{(0)},a_1\right)-Q^\pi\left(s_{h_2}^{(0)},a_2\right)\right|\le 2\left(\kappa+\sqrt{\lambda d}H\varepsilon+\sqrt{n}\kappa\varepsilon\right)
	$$
\end{lemma}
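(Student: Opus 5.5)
Fix a policy $\pi$ and actions $a_1,a_2$. The idea is to build an auxiliary policy $\pi'$ that agrees with $\pi_0$ on the steps $h_1+1,\dots,h_2-1$ of all the trajectories, and at step $h_2$ plays $a_1$ on $s_{h_2}^{(0)}$ but plays $a_2$ on it in a second auxiliary policy $\pi''$. Both agree with $\pi_0$ on $s_{h_2}^{(i)}$ for $i\ge 1$. After step $h_2$, both follow $\pi$.

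This is legitimate because $s_{h_2}^{(0)}\neq s_{h_2}^{(i)}$ for $i\ge1$. Also, the states $s_h^{(i)}$ for $h_1<h<h_2$ can be assigned $\pi_0$'s actions consistently, since $\pi_0$ is a single policy. Transitions are deterministic, so trajectory $i$ starting from $(s_{h_1}^{(i)},a_{h_1}^{(i)})$ under $\pi'$ or $\pi''$ is exactly the given one up to step $h_2$. Hence, for $i\ge1$, $Q^{\pi'}(s_{h_1}^{(i)},a_{h_1}^{(i)})=Q^{\pi''}(s_{h_1}^{(i)},a_{h_1}^{(i)})$: the rewards before $h_2$ are identical and the continuation from $s_{h_2}^{(i)}$ is identical. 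For $i=0$, the difference equals $Q^{\pi}(s_{h_2}^{(0)},a_1)-Q^{\pi}(s_{h_2}^{(0)},a_2)$, because both policies follow $\pi$ after step $h_2$, and the prefix rewards coincide.

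Next I use realizability at step $h_1$ for $\pi'$ and $\pi''$. Write $w=\theta_{h_1}^{\pi'}-\theta_{h_1}^{\pi''}$, which has $\|w\|_2\le 2\sqrt d H$ by Assumption~\ref{assump:bound}. Let $g(s,a)=Q^{\pi'}(s,a)-Q^{\pi''}(s,a)$. Then $|g(s,a)-\langle\phi(s,a),w\rangle|\le 2\kappa$ everywhere, and $g=0$ on the pairs $i\ge 1$, so $|\langle\phi_i,w\rangle|\le 2\kappa$ for those pairs. By the cover hypothesis and Lemma~\ref{elliptical_coef} with $v_0=\phi_0$, there are coefficients with $\sum c_i^2\le\varepsilon^2$ and $\|\phi_0-\sum c_i\phi_i\|_2\le\sqrt\lambda\varepsilon$. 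Then
\[
|\langle\phi_0,w\rangle|\le \Big|\sum_i c_i\langle\phi_i,w\rangle\Big|+\sqrt\lambda\varepsilon\|w\|_2\le \sqrt{\textstyle\sum c_i^2}\cdot\sqrt n\cdot 2\kappa+2\sqrt{\lambda d}H\varepsilon ,
\]
which is at most $2\sqrt n\kappa\varepsilon+2\sqrt{\lambda d}H\varepsilon$. Adding the $2\kappa$ misspecification at $i=0$ gives $|g(s_{h_1}^{(0)},a_{h_1}^{(0)})|\le 2(\kappa+\sqrt{\lambda d}H\varepsilon+\sqrt n\kappa\varepsilon)$, which is the claimed bound.

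The delicate step is constructing $\pi',\pi''$ as genuine well-defined policies. Several cases must hold simultaneously. First, the intermediate states may coincide across trajectories; this is handled by using the single policy $\pi_0$ there. Second, $s_{h_2}^{(0)}$ must not appear on another trajectory at step $h_2$; this is the hypothesis, and stages are disjoint so no other step conflicts. Third, $s_{h_2}^{(0)}$ itself must be reached from $(s_{h_1}^{(0)},a_{h_1}^{(0)})$; this holds by determinism. I will check that the realizability bounds are applied at step $h_1$ with the same $\theta$ for all $i$, so that a single vector $w$ works for all pairs.
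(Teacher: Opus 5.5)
Your proof is correct and follows the paper's argument. The paper builds the same auxiliary policies $\overline{\pi}_{a}$: these equal $\pi_0$ on stages up to $h_2$, except that they play $a$ at $s_{h_2}^{(0)}$, and they follow $\pi$ afterwards. Like you, it then applies Lemma~\ref{elliptical_coef} together with realizability at step $h_1$, and uses the fact that the $Q$-values of the pairs $i\ge 1$ do not depend on $a$. The only cosmetic difference is the final step. The paper bounds each $Q^\pi(s_{h_2}^{(0)},a)$ against the common anchor $q_0=\sum_i c_iQ^{\overline{\pi}}(s_{h_1}^{(i)},a_{h_1}^{(i)})-R$ and then uses the triangle inequality. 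You instead work directly with the difference vector $w=\theta_{h_1}^{\pi'}-\theta_{h_1}^{\pi''}$, and you reach the identical constant.
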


\begin{proof}
	Fix a policy $\pi$. Define policy $\overline{\pi}$ and $\overline{\pi}_a$ for $a\in\A$ as follows, where $\operatorname{stage}(s)=h$ if $s\in\S_h$, 
	\begin{align*}
		\overline{\pi}(s)&=\begin{cases}
			\pi_0(s) & \operatorname{stage}(s)\le h_2\\
			\pi(s) & \operatorname{stage}(s)>h_2
		\end{cases}\\
		\overline{\pi}_a(s)&=\begin{cases}
			a & s=s_{h_2}^{(0)}\\
			\overline{\pi}(s) & s\neq s_{h_2}^{(0)}
		\end{cases}
	\end{align*}
	Since $s_{h_2}^{(0)}\neq s_{h_2}^{(i)}$ for all $1\le i\le n$, we have $Q^{\overline{\pi}}(s_{h_1}^{(i)},a_{h_1}^{(i)})=Q^{\overline{\pi}_a}(s_{h_1}^{(i)},a_{h_1}^{(i)})$. Let $R$ be
	$$
	\E\left[r_{h_1}+\cdots+r_{h_2-1}\left|s_{h_1}=s_{h_1}^{(0)},a_{h_1}=a_{h_1}^{(0)},\pi_0\right.\right]
	$$
	Then $Q^{\overline{\pi}_a}(s_{h_1}^{(0)},a_{h_1}^{(0)})=R+Q^\pi(s_{h_2}^{(0)},a)$ for all $a\in\A$.

	Denote $\phi(s_{h_1}^{(i)},a_{h_1}^{(i)})$ by $\phi_i$. By \cref{elliptical_coef} there exist coefficients $c_1,\cdots,c_n\in\R$ such that $\sum_{i=1}^n c_i^2\le\varepsilon^2$ and
	$$
	\left\Vert\phi_0-\sum_{i=1}^n c_i\phi_i\right\Vert_2\le\sqrt{\lambda}\varepsilon
	$$
	Let $q_0$ be $\sum_{i=1}^n c_iQ^{\overline{\pi}}(s_{h_1}^{(i)},a_{h_1}^{(i)})-R$. Then we have for any $a\in\A$,
	\begin{align*}
		&~~~\left|Q^\pi\left(s_{h_2}^{(0)},a\right)-q_0\right|\\
		&=\left|Q^\pi\left(s_{h_2}^{(0)},a\right)+R-\sum_{i=1}^n c_iQ^{\overline{\pi}}\left(s_{h_1}^{(i)},a_{h_1}^{(i)}\right)\right|\\
		&=\left|Q^{\overline{\pi}_a}\left(s_{h_1}^{(0)},a_{h_1}^{(0)}\right)-\sum_{i=1}^n c_iQ^{\overline{\pi}_a}\left(s_{h_1}^{(i)},a_{h_1}^{(i)}\right)\right|\\
		&\le\left|Q^{\overline{\pi}_a}\left(s_{h_1}^{(0)},a_{h_1}^{(0)}\right)-\left\langle\phi_0,\theta_{h_1}^{\overline{\pi}_a}\right\rangle\right|+\left|\left\langle\phi_0-\sum_{i=1}^n c_i\phi_i,\theta_{h_1}^{\overline{\pi}_a}\right\rangle\right|+\sum_{i=1}^n c_i\left|\left\langle\phi_i,\theta_{h_1}^{\overline{\pi}_a}\right\rangle-Q^{\overline{\pi}_a}\left(s_{h_1}^{(i)},a_{h_1}^{(i)}\right)\right|\\
		&\le\kappa+\sqrt{\lambda}\varepsilon\left\Vert\theta_{h_1}^{\overline{\pi}_a}\right\Vert_2+\sum_{i=1}^n c_i\kappa\\
		&\le\kappa+\sqrt{\lambda d}H\varepsilon+\sqrt{n}\kappa\varepsilon
	\end{align*}
	which gives the desired result.
\end{proof}

\begin{lemma}\label{trajectory_explored}
	For any $l\ge 1,1\le h<h^\prime\le H,i\ge 1$, all of $\mathfrak{I}_t^{(1)}(s_{h^\prime}^{(t)}),\cdots,\mathfrak{I}_t^{(l)}(s_{h^\prime}^{(t)})$ are true, where $t=t_{h,i}^{(l)}$.
\end{lemma}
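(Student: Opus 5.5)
The claim should follow directly from the control flow of Algorithm~\ref{alg:regret}, by looking at which step and level produced the appended data point. Fix $l$, $h$, $i$ and let $t=t_{h,i}^{(l)}$. By construction, the tuple $(s_{h,i}^{(l)},a_{h,i}^{(l)},q_{h,i}^{(l)})$ is appended to $\D_h^{(l)}$ in round $t$. This happens only at the end of the episode, when $h_t=h$ and the final value of the running variable $l$ equals $l$ (with $l<\overline{L}$). Two facts need to be pinned down: $h_t$ is the \emph{last} step in episode $t$ at which the else-branch (the exploratory branch) was executed, and the level variable after step $h_t$ stays equal to $l$ for the rest of the episode.

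The key observation is that the level variable is changed only inside the else-branch, at Step~\ref{step:change_l}, and every such execution also overwrites $h_t$. So if the else-branch were entered at some step $h'>h$, then $h_t$ would be reset to $h'\neq h$, contradicting $h_t=h$. Hence for every $h'$ with $h<h'\le H$, the if-branch is taken at step $h'$. During those steps the level variable is never modified, so it keeps the value it had right after step $h$. That value is the final level, namely $l$. The if-condition at step $h'$ is exactly that all of $\mathfrak{I}_t^{(1)}(s_{h'}^{(t)}),\cdots,\mathfrak{I}_t^{(l)}(s_{h'}^{(t)})$ are true, where the $l$ appearing there is the current level variable, which we have just shown equals $l$. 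This gives the statement.

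The main obstacle is bookkeeping rather than mathematics. The indicators $\mathfrak{I}_t^{(\cdot)}$ must be evaluated with respect to the snapshots $\D_{t,h'}^{(\cdot)}$ taken before round $t$; this holds because datasets are appended only after the inner loop finishes. The index $l$ in the statement must also be identified with the level variable's value at the moment of appending. I would state explicitly that the variable $l$ in the algorithm at append time equals $l_h^{(t)}$, and that $l_{h'}^{(t)}=l_h^{(t)}$ for $h'>h$. I would also remark that $h<h'$ ranges only over steps after $h_t$, so the exploratory step itself is not included in the claim.
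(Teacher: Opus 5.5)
Your proposal is correct and follows the same control-flow argument as the paper: because $h=h_t$ is the last step at which the exploratory branch fires, every step $h'>h$ takes the if-branch with the level variable fixed at $l$. Taking that branch means precisely that $\mathfrak{I}_t^{(1)}(s_{h'}^{(t)}),\dots,\mathfrak{I}_t^{(l)}(s_{h'}^{(t)})$ are all true. Your extra bookkeeping spells out what the paper's one-line proof leaves implicit: the level changes only in the else-branch, and the indicators are evaluated on the pre-round snapshots.
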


\begin{proof}
	From the algorithm we have $h=h_t$, so policy $\pi_t^{(l)}$ is used at stage $h+1,\cdots,H$, which means all of $\mathfrak{I}_t^{(1)}(s_{h^\prime}^{(t)}),\cdots,\mathfrak{I}_t^{(l)}(s_{h^\prime}^{(t)})$ are true.
\end{proof}

\begin{lemma}\label{Q_near_optim}
	Under $\mathfrak{E}_\text{high}$, given $t\ge 1,l\ge 1,h\in[H],s\in\S_h$, if all of $\mathfrak{I}_t^{(1)}(s),\cdots,\mathfrak{I}_t^{(l)}(s)$ are true, then for any $a\in\A_t^{(l)}(s)$,
	$$
	Q^{\pi_t^{(l)}}(s,a)\ge Q^{\pi^*}(s,a)-2(H-h)\Delta_l
	$$
	Furthermore, $\pi^*(s)\in\A_t^{(l+1)}(s)$.
\end{lemma}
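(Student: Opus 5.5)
The plan is an outer induction on $l$ and, for fixed $l$, a performance-difference argument along the $\pi^*$-trajectory. The induction on $l$ is needed because the "furthermore" part at levels $1,\dots,l-1$ is what guarantees $\pi^*(s')\in\A_t^{(l)}(s')$ at well-explored states $s'$.

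\textbf{Setup.} Assume the full statement (both parts) holds for all levels $l'<l$. Then at any $s'$ where $\mathfrak{I}_t^{(1)}(s'),\dots,\mathfrak{I}_t^{(l-1)}(s')$ all hold, $\pi^*(s')\in\A_t^{(2)}(s')\cap\dots\cap\A_t^{(l)}(s')$. Call $s'\in\S_{h'}$ \emph{explored} if $\mathfrak{I}_t^{(1)}(s'),\dots,\mathfrak{I}_t^{(l)}(s')$ are all true.

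Fix explored $s\in\S_h$ and $a\in\A_t^{(l)}(s)$. Let $s^*_{h+1},\dots,s^*_H$ be the deterministic trajectory obtained by taking $a$ at $s$ and then following $\pi^*$. By the performance difference lemma,
\[
Q^{\pi^*}(s,a)-Q^{\pi_t^{(l)}}(s,a)=\sum_{h'=h+1}^H\Big(Q^{\pi_t^{(l)}}(s^*_{h'},\pi^*(s^*_{h'}))-Q^{\pi_t^{(l)}}(s^*_{h'},\pi_t^{(l)}(s^*_{h'}))\Big).
\]
It then suffices to bound each of the $H-h$ summands by $2\Delta_l$.

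\textbf{Case 1: $s^*_{h'}$ is explored.} By the induction hypothesis, $\pi^*(s^*_{h'})\in\A_t^{(l)}(s^*_{h'})$, and $\pi_t^{(l)}$ is the argmax of $Q_t^{(l)}$ over this set. Two applications of Lemma~\ref{Q_acc} give
\[
Q^{\pi_t^{(l)}}(s^*_{h'},\pi^*)\le Q_t^{(l)}(s^*_{h'},\pi^*)+\Delta_l\le Q_t^{(l)}(s^*_{h'},\pi_t^{(l)})+\Delta_l\le Q^{\pi_t^{(l)}}(s^*_{h'},\pi_t^{(l)})+2\Delta_l,
\]
where $\pi^*$ and $\pi_t^{(l)}$ abbreviate the actions they choose at $s^*_{h'}$.

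\textbf{Case 2: $s^*_{h'}$ is not explored.} This is the main obstacle. Let $h_1<h'$ be the last explored step on the path; it exists because $s$ itself is explored. Let $a_{h_1}$ be the action taken there: it is $a$ if $h_1=h$, and $\pi^*(s^*_{h_1})$ otherwise. In both cases $a_{h_1}\in\A_t^{(l)}(s^*_{h_1})$, so $(s^*_{h_1},a_{h_1})$ is covered by the first $k=k_t^{(l)}(s^*_{h_1})\le D_l$ entries of $\D^{(l)}_{h_1}$ with radius $2^{-l}$.

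I will apply Lemma~\ref{Q_linear_dependence} with $h_2=h'$, $\varepsilon=2^{-l}$, and $n=k$. The data trajectories are the rollouts at the rounds $t^{(l)}_{h_1,i}$, $i\le k$. To check its hypotheses:
\begin{itemize}
\item By Lemma~\ref{trajectory_explored}, every state these rollouts visit after $h_1$ was explored at its insertion round.
\item By Lemma~\ref{stay_same}, explored states stay explored at round $t$, since $k$, $\A$ and $\pi$ are frozen. The same lemma shows the rollouts follow $\pi_t^{(l)}$ there.
\item The states $s^*_{h_1+1},\dots,s^*_{h'}$ are unexplored. Hence they are distinct from all data-trajectory states, and in particular $s^*_{h'}\ne s^{(i)}_{h'}$.
\item Define $\pi_0=\pi^*$ on the unexplored states $s^*_{h_1+1},\dots,s^*_{h'}$ and $\pi_0=\pi_t^{(l)}$ elsewhere. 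This is consistent, and it satisfies the lemma's requirement on all $n+1$ trajectories.
\end{itemize}
The lemma, applied with $\pi=\pi_t^{(l)}$, then bounds the action gap at $s^*_{h'}$ by
\[
2\bigl(\kappa+\sqrt{\lambda d}H2^{-l}+\sqrt{D_l}\kappa 2^{-l}\bigr)\le 2\Delta_l .
\]
The last inequality holds because $\alpha_l\ge\sqrt{\lambda d}H+\sqrt{D_l}\kappa$, and $\kappa 2^{-l}\le\kappa$.

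\textbf{Conclusion and the "furthermore" part.} Summing the $H-h$ terms gives the first claim. For the second, $\pi^*(s)\in\A_t^{(l)}(s)$ by induction (for $l=1$, $\A_t^{(1)}(s)=\A$). Chain Lemma~\ref{Q_acc}, the first claim at $a=\pi^*(s)$, optimality of $\pi^*$, and $Q^{\pi^*}\ge Q^{\pi_t^{(l)}}$:
\begin{align*}
Q_t^{(l)}(s,\pi^*(s))&\ge Q^{\pi^*}(s,\pi^*(s))-(2(H-h)+1)\Delta_l\\
&\ge Q^{\pi_t^{(l)}}(s,\pi_t^{(l)}(s))-(2(H-h)+1)\Delta_l\\
&\ge Q_t^{(l)}(s,\pi_t^{(l)}(s))-(2(H-h)+2)\Delta_l .
\end{align*}
Since $h\ge1$, $2(H-h)+2\le 2H$, which is exactly the membership condition for $\A_t^{(l+1)}(s)$ in \eqref{eq:A_regret}. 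Lemma~\ref{Q_concentration} and the event $\mathfrak{E}_\text{high}$ are used only through Lemma~\ref{Q_acc}.
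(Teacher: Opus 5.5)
Your proposal is correct and follows essentially the same route as the paper: induction on $l$, the performance difference identity along the $\pi^*$-trajectory, Lemma~\ref{Q_acc} twice at explored states, and Lemma~\ref{Q_linear_dependence} from the last explored step with a mixed policy $\pi_0$ at unexplored ones, followed by the same chain giving $\pi^*(s)\in\A_t^{(l+1)}(s)$. Your case split on all of $\mathfrak{I}_t^{(1)},\dots,\mathfrak{I}_t^{(l)}$ (the paper's written proof conditions only on $\mathfrak{I}_t^{(l)}$) and your correctly signed performance difference identity are, if anything, slightly more careful than the paper's proof.
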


\begin{proof}
	Assume by induction the result holds for $l=l_0-1$. Let the trajectory starting from $(s,a)$ using policy $\pi^*$ be $(s_h^*,a_h^*,s_{h+1}^*,a_{h+1}^*,\cdots,s_H^*,a_H^*)$, where $s_h^*=s,a_h^*=a$. For each $h_2=h+1,\cdots,H$,\begin{itemize}
		\item if $\mathfrak{I}_t^{(l)}(s_{h_2}^*)$ is true, then
		\begin{align*}
			&~~~Q^{\pi_t^{(l)}}\left(s_{h_2}^*,\pi_t^{(l)}\left(s_{h_2}^*\right)\right)\\
			&\ge Q_t^{(l)}\left(s_{h_2}^*,\pi_t^{(l)}\left(s_{h_2}^*\right)\right)-\Delta_l \tag{\cref{Q_acc}}\\
			&\ge Q_t^{(l)}\left(s_{h_2}^*,\pi^*\left(s_{h_2}^*\right)\right)-\Delta_l \tag{by the definition of $\pi_t^{(l)}$}\\
			&\ge Q^{\pi_t^{(l)}}\left(s_{h_2}^*,\pi^*\left(s_{h_2}^*\right)\right)-2\Delta_l \tag{\cref{Q_acc}}
		\end{align*}
		\item otherwise, let $h_1$ be $\max\left\{h^\prime<h_2:\text{all of $\mathfrak{I}_t^{(1)}(s_{h^\prime}^*),\cdots,\mathfrak{I}_t^{(l)}(s_{h^\prime}^*)$ are true}\right\}$, which always exists since $h$ satisfies the condition. Define policy
		$$
		\pi_0(s)=\begin{cases}
			\pi_t^{(l)}(s) & \text{all of $\mathfrak{I}_t^{(1)}(s),\cdots,\mathfrak{I}_t^{(l)}(s)$ are true}\\
			\pi^*(s) & \text{otherwise}
		\end{cases}
		$$
		By \cref{stay_same,trajectory_explored} we have that for any $1\le i\le D_{t,h}^{(l)}$, the trajectory starting from $(s_{h,i}^{(l)},a_{h,i}^{(l)})$ using policy $\pi_t^{(l)}$ is the same with the one using policy $\pi_0$. Starting from $(s_{h_1}^*,a_{h_1}^*)$ using policy $\pi_0$, the trajectory will reach $s_{h_2}^*$, which is not on any of the trajectories starting from $(s_{h,i}^{(l)},a_{h,i}^{(l)})$ because $\mathfrak{I}_t^{(l)}(s_{h_2}^*)$ is not true.

		Using the induction hypothesis we have $a_{h_1}^*\in\A_t^{(l)}(s_{h_1}^*)$, so $\left\Vert\phi(s_{h_1}^*,a_{h_1}^*)\right\Vert_{{\Sigma_{h,D_{t,h}^{(l)}}}^{-1}}\le 2^{-l}$. By \cref{Q_linear_dependence}, we have
		\begin{align*}
			&~~~Q^{\pi_t^{(l)}}\left(s_{h_2}^*,\pi_t^{(l)}\left(s_{h_2}^*\right)\right)\\
			&\ge Q^{\pi_t^{(l)}}\left(s_{h_2}^*,\pi^*\left(s_{h_2}^*\right)\right)-2\left(\kappa+\left(\sqrt{\lambda d}H+\sqrt{D_{t,h_1}^{(l)}}\kappa\right)\cdot 2^{-l}\right)\\
			&\ge Q^{\pi_t^{(l)}}\left(s_{h_2}^*,\pi^*\left(s_{h_2}^*\right)\right)-2\Delta_l
		\end{align*}
	\end{itemize}

	Therefore, we conclude that $Q^{\pi_t^{(l)}}(s_{h^\prime}^*,\pi_t^{(l)}(s_{h^\prime}^*))\ge Q^{\pi_t^{(l)}}(s_{h^\prime}^*,\pi^*(s_{h^\prime}^*))-2\Delta_l$ for all $h<h^\prime\le H$. By performance difference lemma, we have
	\begin{align*}
		&~~~Q^{\pi_t^{(l)}}(s,a)-Q^{\pi^*}(s,a)\\
		&=\sum_{h^\prime=h+1}^H\left(Q^{\pi_t^{(l)}}(s_{h^\prime}^*,\pi_t^{(l)}\left(s_{h^\prime}^*\right))-Q^{\pi_t^{(l)}}(s_{h^\prime}^*,\pi^*(s_{h^\prime}^*))\right)\\
		&\ge 2(H-h)\Delta_l
	\end{align*}

	Furthermore,
	\begin{align*}
		&~~~Q_t^{(l)}\left(s,\pi^*(s)\right)\\
		&\ge Q^{\pi_t^{(l)}}\left(s,\pi^*(s)\right)-\Delta_l \tag{\cref{Q_acc}}\\
		&\ge Q^{\pi^*}\left(s,\pi^*(s)\right)-(2(H-h)+1)\Delta_l\\
		&\ge Q^{\pi_t^{(l)}}\left(s,\pi_t^{(l)}(s)\right)-(2(H-h)+1)\Delta_l \tag{by the optimality of $\pi^*$}\\
		&\ge Q_t^{(l)}\left(s,\pi_t^{(l)}(s)\right)-2(H-h+1)\Delta_l \tag{\cref{Q_acc}}
	\end{align*}
	so $\pi^*(s)\in\A_t^{(l+1)}(s)$, which concludes the induction.
\end{proof}

\begin{lemma}\label{suboptim_gap}
	Under $\mathfrak{E}_\text{high}$, for any $t\ge 1,h\in[H]$, it holds that
	$$
	\E_{t-1}\left[\sum_{i=h}^H r_i^{(t)}\right]\ge V^{\pi^*}\left(s_h^{(t)}\right)-4H\sum_{i=h}^H\Delta_{l_i^{(t)}-1}
	$$
\end{lemma}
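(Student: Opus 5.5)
We prove the statement by backward induction on $h$, from $h=H+1$ (where both sides vanish) down to $h=1$. Since transitions are deterministic and all $\pi_t^{(l)}$ are $\F_{t-1}$-measurable, the states $s_i^{(t)}$, actions $a_i^{(t)}$ and levels $l_i^{(t)}$ are $\F_{t-1}$-measurable. Hence $\E_{t-1}[\sum_{i=h}^H r_i^{(t)}]=\sum_{i=h}^H r(s_i^{(t)},a_i^{(t)})$, and the claim reduces to the deterministic per-step bound
\[
r(s_h^{(t)},a_h^{(t)})+V^{\pi^*}(s_{h+1}^{(t)})\ge V^{\pi^*}(s_h^{(t)})-4H\Delta_{l_h^{(t)}-1}.
\]
The left-hand side equals $Q^{\pi^*}(s_h^{(t)},a_h^{(t)})$. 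The per-step claim therefore says exactly that the action taken at step $h$ is $4H\Delta_{l_h^{(t)}-1}$-optimal, and summing it over $i=h,\dots,H$ gives the lemma. (We use the convention $\Delta_0\ge H$, which makes the case $l=1$ trivial.)

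To show the action is near-optimal, write $s=s_h^{(t)}$ and $l=l_h^{(t)}$. By construction, $a_h^{(t)}\in\A_t^{(l)}(s)$ in both branches of Algorithm~\ref{alg:regret}. In the exploit branch, $a_h^{(t)}=\pi_t^{(l)}(s)\in\A_t^{(l)}(s)$. In the explore branch, $l$ is the minimal level at which $\mathfrak{I}_t^{(l)}(s)$ fails, and the chosen action lies in $\A_t^{(l)}(s)$. In either case $\mathfrak{I}_t^{(1)}(s),\dots,\mathfrak{I}_t^{(l-1)}(s)$ are all true. In the exploit branch this holds because all levels up to $l$ are true. In the explore branch it follows from minimality of the new $l$; we must check that the new $l$ is at most the previous $l$, which holds because the previous step verified levels up to the old $l$ only through its own state, and the minimum is over all $l\ge1$.

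Now take $a\in\A_t^{(l)}(s)$ with $l\ge2$. By definition of $\A_t^{(l)}$,
\[
Q_t^{(l-1)}(s,a)\ge Q_t^{(l-1)}(s,\pi_t^{(l-1)}(s))-2H\Delta_{l-1}\ge Q_t^{(l-1)}(s,\pi^*(s))-2H\Delta_{l-1}.
\]
The second inequality uses that $\pi^*(s)\in\A_t^{(l-1)}(s)$, which is given by Lemma~\ref{Q_near_optim} applied at level $l-2$ (for $l-1=1$ it is trivial, since $\A_t^{(1)}=\A$). Next apply Lemma~\ref{Q_acc} at level $l-1$ to both actions, converting $Q_t^{(l-1)}$ into $Q^{\pi_t^{(l-1)}}$ at a cost of $2\Delta_{l-1}$. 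Then use $Q^{\pi^*}(s,a)\ge Q^{\pi_t^{(l-1)}}(s,a)$ together with $Q^{\pi_t^{(l-1)}}(s,\pi^*(s))\ge Q^{\pi^*}(s,\pi^*(s))-2(H-h)\Delta_{l-1}$, the latter from Lemma~\ref{Q_near_optim} at level $l-1$. Combining these,
\[
Q^{\pi^*}(s,a)\ge V^{\pi^*}(s)-(2H+2+2(H-h))\Delta_{l-1}\ge V^{\pi^*}(s)-4H\Delta_{l-1}.
\]
This is what we want; the constant is within $4H$ once we use $h\ge1$, or after a mild adjustment of constants.

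The main obstacle is the bookkeeping of levels. We must confirm that the level $l_h^{(t)}$ assigned at step $h$ really has $\mathfrak{I}_t^{(1)},\dots,\mathfrak{I}_t^{(l-1)}$ true at $s_h^{(t)}$, since this is needed to invoke Lemmas~\ref{Q_acc} and~\ref{Q_near_optim} at level $l-1$. We must also confirm that $\pi^*(s)$ survives in $\A_t^{(l-1)}(s)$, which requires applying the "furthermore" part of Lemma~\ref{Q_near_optim} repeatedly at every level below $l-1$. The exploit-branch case with $l=\overline{L}$ is handled identically.
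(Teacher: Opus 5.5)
Your proposal is correct and is essentially the paper's proof. It uses the same backward induction (telescoping your per-step bound $Q^{\pi^*}(s_h^{(t)},a_h^{(t)})\ge V^{\pi^*}(s_h^{(t)})-4H\Delta_{l_h^{(t)}-1}$ is equivalent) and the identical chain: the definition of $\A_t^{(l)}$, greediness of $\pi_t^{(l-1)}$, Lemma~\ref{Q_acc} twice, and Lemma~\ref{Q_near_optim} at level $l-1$, giving $(4H+2-2h)\Delta_{l-1}\le 4H\Delta_{l-1}$.

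You are more explicit than the paper on two points it glosses over. First, the greedy step needs $\pi^*(s)\in\A_t^{(l-1)}(s)$; a single application of Lemma~\ref{Q_near_optim} at level $l-2$ suffices, so no repeated application is needed. Second, the case $l_h^{(t)}=1$ refers to the undefined level $0$; your convention on $\Delta_0$ resolves it, and any $\Delta_0\ge 1/4$ already works.
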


\begin{proof}
	We use induction on $h$ from $H$ to $1$.
	\begin{align*}
		&~~~\E_{t-1}\left[\sum_{i=h}^H r_i^{(t)}\right]\\
		&=\E_{t-1}\left[r_h^{(t)}\right]+\E_{t-1}\left[\sum_{i=h+1}^H r_i^{(t)}\right]\\
		&\ge\E_{t-1}\left[r_h^{(t)}\right]+V^{\pi^*}\left(s_{h+1}^{(t)}\right)-4H\sum_{i=h+1}^H\Delta_{l_i^{(t)}-1} & \text{(by induction)}\\
		&=Q^{\pi^*}\left(s_h^{(t)},a_h^{(t)}\right)-4H\sum_{i=h+1}^H\Delta_{l_i^{(t)}-1}\\
		&~~~Q^{\pi^*}\left(s_h^{(t)},a_h^{(t)}\right)\\
		&\ge Q^{\pi_t^{\left(l_h^{(t)}-1\right)}}\left(s_h^{(t)},a_h^{(t)}\right)\\
		&\ge Q_t^{\left(l_h^{(t)}-1\right)}\left(s_h^{(t)},a_h^{(t)}\right)-\Delta_{l_h^{(t)}-1} & \text{(\cref{Q_acc})}\\
		&\ge Q_t^{\left(l_h^{(t)}-1\right)}\left(s_h^{(t)},\pi_t^{\left(l_h^{(t)}-1\right)}\left(s_h^{(t)}\right)\right)-(2H+1)\Delta_{l_h^{(t)}-1} & \text{(by the definition of $\A_t^{\left(l_h^{(t)}\right)}$)}\\
		&\ge Q_t^{\left(l_h^{(t)}-1\right)}\left(s_h^{(t)},\pi^*\left(s_h^{(t)}\right)\right)-(2H+1)\Delta_{l_h^{(t)}-1} & \text{(by the definition of $\pi_t^{\left(l_h^{(t)}-1\right)}$)}\\
		&\ge Q^{\pi_t^{\left(l_h^{(t)}-1\right)}}\left(s_h^{(t)},\pi^*\left(s_h^{(t)}\right)\right)-(2H+2)\Delta_{l_h^{(t)}-1} & \text{(\cref{Q_acc})}\\
		&\ge Q^{\pi^*}\left(s_h^{(t)},\pi^*\left(s_h^{(t)}\right)\right)-4H\Delta_{l_h^{(t)}-1} & \text{(\cref{Q_near_optim})}\\
		&=V^{\pi^*}\left(s_h^{(t)}\right)-4H\Delta_{l_h^{(t)}-1}
	\end{align*}
\end{proof}

\begin{lemma}\label{bound_big_l}
	For any $1\le k<\overline{L}$, it holds that
	$$
	\sum_{t=1}^\infty\mathds{1}\left\{\exists h\in[H]\text{ s.t. }l_h^{(t)}\le k\right\}\le H\sum_{l=1}^k D_l
	$$
\end{lemma}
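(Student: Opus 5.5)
The plan is to charge every episode counted on the left-hand side to a distinct data point appended to some dataset $\D_h^{(l)}$ with $l\le k$, and then apply \cref{D_upper_bound}. Fix an episode $t$ for which some $h\in[H]$ has $l_h^{(t)}\le k$. I will first record the monotonicity of the running level in \cref{alg:regret}. The variable $l$ starts at $\overline{L}$ and is only ever changed in Step~\ref{step:change_l}. Moreover, it is changed only when some indicator among $\mathfrak{I}_t^{(1)}(s_h^{(t)}),\cdots,\mathfrak{I}_t^{(l)}(s_h^{(t)})$ is false. In that case the new value is the smallest index whose indicator is false, which is at most the current $l$. Hence $l_1^{(t)}\ge l_2^{(t)}\ge\cdots\ge l_H^{(t)}$, and the value of $l$ at the end of the episode equals $l_H^{(t)}\le l_h^{(t)}\le k<\overline{L}$.

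Next I will use the update rule. Since the final $l$ satisfies $l<\overline{L}$, the if-branch at the end of the episode is executed. At least one exploratory step therefore occurred, so $h_t$ is defined, and exactly one tuple is appended to $\D_{h_t}^{(l_H^{(t)})}$. Its level is $l_H^{(t)}\le k$. The exploratory step at $h_t$ is the last time $l$ changes, so $l_{h_t}^{(t)}=l_H^{(t)}$. Consequently every counted episode increases $\sum_{h\in[H]}\sum_{l=1}^k |\D_h^{(l)}|$ by exactly one. Different episodes append different entries, so the map from counted episodes to appended entries is injective.

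Finally, I bound the total. By \cref{D_upper_bound}, $|\D_{t,h}^{(l)}|\le D_l$ for every $t$, so the total number of entries ever appended to level-$l$ datasets is at most $\sum_{h}D_l=HD_l$. Summing over $l\le k$ gives the bound $H\sum_{l=1}^kD_l$. Because this holds for every finite horizon of episodes, it also holds for the infinite sum.

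The only delicate point is the monotonicity claim in the first step. I expect to handle it by noting that, in the else branch, the minimum is taken over indices $l'$ with $\mathfrak{I}_t^{(l')}$ false. At least one such index is $\le$ the current $l$, namely the failing index among the first $l$, so the minimum cannot exceed the current $l$. Once this is established, everything else follows directly from counting.
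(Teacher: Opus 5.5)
Your proof is correct and uses the same charging argument as the paper: each counted episode appends exactly one tuple to some $\D_h^{(l)}$ with $l\le k$, and \cref{D_upper_bound} caps the total number of such appends at $H\sum_{l=1}^k D_l$. The paper states this in one line; your argument that the running level $l$ is nonincreasing within an episode supplies the step the paper leaves implicit. That step is what guarantees the final level satisfies $l_H^{(t)}\le k<\overline{L}$, so the append is actually triggered.
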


\begin{proof}
	For any $t\ge 1$ such that $l_{h_0}^{(t)}\le k$ for some $h_0\in[H]$, the size of exactly one $\D_h^{(l)}$ will increase by 1 after the $t$-the episode for some $l\le k$ and $h\in[H]$. Then by \cref{D_upper_bound} we have for any $T\ge 1$,
	$$
	\sum_{t=1}^T\mathds{1}\left\{\exists h\in[H]\text{ s.t. }l_h^{(t)}\le k\right\}\le\sum_{l=1}^k\sum_{h=1}^H D_{T,h}^{(l)}\le H\sum_{l=1}^k D_l
	$$
\end{proof}

\begin{lemma}\label{bound_l}
	For any $h\in[H]$, it holds that $\sum_{t=1}^T 2^{-l_h^{(t)}}=\widetilde{\mathcal{O}}\left(\sqrt{dHT}+\frac{T\kappa}{\sqrt{H}}\right)$.
\end{lemma}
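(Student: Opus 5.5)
We bound the sum with a layer-cake decomposition driven by \cref{bound_big_l}. Fix $h\in[H]$, and note that $1\le l_h^{(t)}\le\overline{L}$ always holds. For $1\le k<\overline{L}$, let $N_k=\sum_{t=1}^T\mathds{1}\{l_h^{(t)}\le k\}$. Applying \cref{bound_big_l} (the event $l_h^{(t)}\le k$ is contained in $\{\exists h'\text{ s.t. }l_{h'}^{(t)}\le k\}$) gives
\[
N_k\le H\sum_{l=1}^k D_l=\widetilde{\mathcal{O}}(dH\cdot 2^{2k}),
\]
because $D_l=\Theta(dl\,2^{2l})$ grows geometrically and $l\le\overline{L}=\mathcal{O}(\log(H/\kappa))$ is only logarithmic. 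We also have the trivial bound $N_k\le T$.

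Next, sort episodes by the value of $l_h^{(t)}$. For any threshold $1\le k_0<\overline{L}$ we get
\[
\sum_{t=1}^T 2^{-l_h^{(t)}}\le\sum_{l=1}^{k_0}2^{-l}\,\#\{t:l_h^{(t)}=l\}+T\,2^{-k_0-1}.
\]
For the first part we use summation by parts: $\#\{t:l_h^{(t)}=l\}=N_l-N_{l-1}$, so $\sum_{l\le k_0}2^{-l}(N_l-N_{l-1})\le\sum_{l\le k_0}2^{-l-1}N_l+2^{-k_0}N_{k_0}$. Alternatively, we can directly use $N_l\le\widetilde{\mathcal{O}}(dH4^l)$ for each $l$. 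Either way, the first part is $\sum_{l\le k_0}\widetilde{\mathcal{O}}(dH2^{l})=\widetilde{\mathcal{O}}(dH2^{k_0})$. The total is therefore $\widetilde{\mathcal{O}}(dH2^{k_0}+T2^{-k_0})$.

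We choose $k_0$ so that $2^{k_0}\approx\sqrt{T/(dH)}$, which balances the two terms at $\widetilde{\mathcal{O}}(\sqrt{dHT})$. The main obstacle is the constraint $k_0<\overline{L}$, since \cref{bound_big_l} only applies below $\overline{L}$. If the ideal $k_0$ satisfies $k_0\ge\overline{L}$, we instead take $k_0=\overline{L}-1$. Then $2^{-k_0}=\mathcal{O}(2^{-\overline{L}})$, and by the definition $\overline{L}=\lfloor\ln(\sqrt{H}/\kappa)\rfloor+1$ this is at most polynomially related to $\kappa/\sqrt{H}$. Care is needed here because the definition uses $\ln$ rather than $\log_2$, and $e^{-\overline{L}}\le\kappa/\sqrt{H}$ only gives $2^{-\overline{L}}\le(\kappa/\sqrt H)^{\ln 2}$. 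To handle this, we read the intended bound as $2^{-\overline{L}}\lesssim\kappa/\sqrt H$, either interpreting $\ln$ as $\log_2$ or absorbing the discrepancy into constants. Under that reading, the tail term $T2^{-k_0}$ contributes $\mathcal{O}(T\kappa/\sqrt{H})$. The head term $dH2^{k_0}$ stays at most $\widetilde{\mathcal{O}}(\sqrt{dHT})$ because $k_0$ is smaller than the balancing choice. Combining the two cases gives $\widetilde{\mathcal{O}}(\sqrt{dHT}+T\kappa/\sqrt{H})$.

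Finally, we check that the logarithmic factors hidden in $D_l$ (namely $l\le\overline{L}$ and $\ln(1+2^{4l+2}H^2)$) are polylogarithmic in $T,H,1/\kappa$. With the replacement $l\gets\min\{t,\overline{L}\}$ they are at most $\mathrm{polylog}(T)$, so they are absorbed into $\widetilde{\mathcal{O}}$.
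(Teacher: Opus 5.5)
Your proposal is correct and is essentially the paper's argument: a dyadic layer-cake bound in which \cref{bound_big_l} controls $\#\{t:l_h^{(t)}\le k\}$ up to a level of order $\frac{1}{2}\log_2\frac{T}{dH}$, with the remaining episodes contributing at most $T2^{-\overline{L}}$ (the paper implements your hard cutoff $k_0$ through $\min\{T,H\sum_{l\le k}D_l\}\le H\sum_{l\le\min\{k,L\}}D_l$ with $HD_L\ge T$). You are also right that the step $T2^{-\overline{L}}\le T\kappa/\sqrt{H}$, which the paper uses without comment, requires $\overline{L}$ to be defined with $\log_2$ rather than $\ln$; the gap between $(\kappa/\sqrt{H})^{\ln 2}$ and $\kappa/\sqrt{H}$ is polynomial and cannot be absorbed into constants, so the $\log_2$ reading is the one to adopt, and it is harmless elsewhere because $\overline{L}$ otherwise enters only logarithmically.
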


\begin{proof}
	Let $L$ be $\left\lceil\frac{1}{2}\log_2\frac{T}{2Hd}\right\rceil$ so that $HD_L\ge T$. By \cref{bound_big_l} we have for any $1\le k<\overline{L}$,
	$$
	\sum_{t=1}^{T}\mathds{1}\left\{l_h^{(t)}\le k\right\}\le\min\left\{T,H\sum_{l=1}^k D_l\right\}\le H\min\left\{D_L,\sum_{l=1}^k D_l\right\}\le H\sum_{l=1}^{\min\{k,L\}}D_l
	$$
	Therefore,
	\begin{align*}
		\sum_{t=1}^T 2^{-l_h^{(t)}}&\le\sum_{t=1}^T\left(2^{-\overline{L}}+\sum_{k=1}^{\overline{L}-1}\left(2^{-k}-2^{-(k+1)}\right)\mathds{1}\left\{l_h^{(t)}\le k\right\}\right)\\
		&\le\frac{T\kappa}{\sqrt{H}}+H\sum_{l=1}^L D_l\cdot 2^{-l}\\
		&=\widetilde{\mathcal{O}}\left(\frac{T\kappa}{\sqrt{H}}+\sqrt{dHT}\right)
	\end{align*}
\end{proof}

\begin{proof}[Proof of Theorem~\ref{thm:main_regret}]
	By \cref{event_high}, $\mathfrak{E}_\text{high}$ holds with probability at least $1-\delta$, so we have
	\begin{align*}
		\operatorname{Reg}(T)&\le 4H\sum_{h=1}^H\sum_{t=1}^T\Delta_{l_h^{(t)}-1} \tag{\cref{suboptim_gap}}\\
		&\le H\sum_{h=1}^H\sum_{t=1}^T\widetilde{\mathcal{O}}\left(\sqrt{dH}\left(2^{-l_h^{(t)}}+\frac{\kappa}{\sqrt{H}}\right)\right)\\
		&\le H\sum_{h=1}^H\widetilde{\mathcal{O}}\left(\sqrt{dH}\left(\sqrt{dHT}+\frac{T\kappa}{\sqrt{H}}\right)\right) \tag{\cref{bound_l}}\\
		&=\widetilde{\mathcal{O}}\left(\sqrt{d^2H^6T}+\sqrt{d}H^2T\kappa\right)
	\end{align*}
\end{proof}

\subsection{Uniform-PAC}\label{sec:uniform_pac}

\begin{theorem}\label{thm:uniform_pac}
	Let $N(\varepsilon)$ be the number of episodes whose suboptimality gap is greater than $\varepsilon$. There exists a constant $C$ such that with probability at least $1-\delta$, it holds for all $\varepsilon>0$ that
	$$
	N\left(\varepsilon+C\sqrt{d}H^2\kappa\log\frac{H}{\kappa}\right)=\widetilde{\mathcal{O}}\left(\frac{d^2H^6}{\varepsilon^2}\right)
	$$
\end{theorem}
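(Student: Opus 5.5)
We work throughout on the event $\mathfrak{E}_\text{high}$ from \cref{event_high}, which holds with probability at least $1-\delta$ and does not depend on $\varepsilon$. Every bound below is then deterministic on this event, so it holds simultaneously for all $\varepsilon>0$.

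\textbf{Per-episode gap.} The starting point is \cref{suboptim_gap} with $h=1$. It shows that the suboptimality gap of episode $t$ is at most $4H\sum_{h=1}^H\Delta_{l_h^{(t)}-1}$. Let $m_t=\min_h l_h^{(t)}$. Since $\Delta_l$ is essentially decreasing in $2^{-l}$ up to the $\kappa$ term, each summand is bounded in terms of $m_t$. Using $\Delta_l=\mathcal{O}\bigl(2^{-l}\sqrt{dHl\log(H/\delta)}+\sqrt{dl}\kappa\bigr)$, the gap is at most
\[
\mathcal{O}\Bigl(H^2\,2^{-m_t}\sqrt{dHm_t\log(H/\delta)}\Bigr)+\mathcal{O}\bigl(\sqrt d H^2\sqrt{\overline{L}}\kappa\bigr).
\]
We have $l\le\overline{L}=\mathcal{O}(\log(\sqrt H/\kappa))$, so $\sqrt{l}\le\log(H/\kappa)$ up to constants. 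The $\kappa$-part is therefore absorbed into the shift $C\sqrt{d}H^2\kappa\log\frac{H}{\kappa}$ for a suitable constant $C$.

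Episodes with $m_t=\overline{L}$ (no exploration at any level below $\overline{L}$) need care. For them, $\Delta_{\overline{L}-1}$ contributes $2^{-\overline{L}}\sqrt{dH\overline{L}}\lesssim\kappa\sqrt{d\overline{L}}$, because $2^{-\overline{L}}\lesssim\kappa/\sqrt{H}$ by the choice of $\overline{L}$. So these episodes also have gap at most the shift, and never count toward $N$.

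\textbf{Counting.} If episode $t$ has gap greater than $\varepsilon+C\sqrt d H^2\kappa\log\frac H\kappa$, then
\[
H^2 2^{-m_t}\sqrt{dHm_t\log(H/\delta)}\gtrsim\varepsilon .
\]
Let $k(\varepsilon)$ be the largest level $k$ satisfying this inequality with $m_t=k$. Then $m_t\le k(\varepsilon)$, and $2^{k(\varepsilon)}=\widetilde{\mathcal{O}}(H^{2.5}\sqrt d/\varepsilon)$. Now \cref{bound_big_l} (whose hypothesis $k<\overline{L}$ holds, or else we cap at $\overline{L}-1$) bounds the number of such episodes by
\[
H\sum_{l\le k(\varepsilon)}D_l=\widetilde{\mathcal{O}}\bigl(Hd\,2^{2k(\varepsilon)}\bigr)=\widetilde{\mathcal{O}}\bigl(Hd\cdot dH^5/\varepsilon^2\bigr)=\widetilde{\mathcal{O}}(d^2H^6/\varepsilon^2).
\]

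\textbf{Main obstacle.} The delicate part is the bookkeeping of the $\kappa$ and logarithmic factors. Specifically:
\begin{itemize}
\item $\Delta_l$ is not monotone once the $\sqrt{dl}\kappa$ term dominates, which is why we split $\Delta_l$ into its $2^{-l}$ part and its $\kappa$ part rather than comparing $\Delta_l$ values directly.
\item The level $l-1=0$ arises when $l_h^{(t)}=1$. We handle it by defining $\Delta_0=\mathcal{O}(H)$, consistent with the trivial gap bound, and such episodes are counted at level $k=1$.
\item Inverting $2^{-k}\sqrt{k}\approx\varepsilon/(H^{2.5}\sqrt{d})$ only costs logarithmic factors, which $\widetilde{\mathcal{O}}$ hides.
\end{itemize}
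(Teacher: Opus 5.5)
Your proposal is correct and follows the paper's proof. On $\mathfrak{E}_\text{high}$ you bound each episode's gap by $4H\sum_{h}\Delta_{l_h^{(t)}-1}$ via \cref{suboptim_gap}. You then observe that a gap above $\varepsilon$ plus the $\kappa$-shift forces $l_h^{(t)}\le k$ for some $h$, where $2^{k}=\widetilde{\mathcal{O}}(H^{2.5}\sqrt{d}/\varepsilon)$ capped at $\overline{L}-1$, and you count such episodes by $H\sum_{l\le k}D_l$ using \cref{bound_big_l}.

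Your explicit split of $\Delta_l$ into its $2^{-l}$ and $\kappa$ parts, and your separate handling of $m_t=\overline{L}$ and of $\Delta_0$, just spell out what the paper packs into its choice of $k$. Like the paper in \cref{bound_l}, you rely on $2^{-\overline{L}}\lesssim\kappa/\sqrt{H}$, which requires reading the logarithm in $\overline{L}$ as base $2$.
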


\begin{proof}
	Let $k$ be $\min\left\{\overline{L}-1,\log_2(\frac{4H^2}{\varepsilon}\sqrt{dH})+\Theta\left(\log\log\frac{H}{\delta}\right)\right\}$ so that there exists a constant $C$ such that $\Delta_l\le\frac{\varepsilon}{4H^2}+\frac{C}{4}\sqrt{d}\kappa\log\frac{H}{\kappa}$ for all $k\le l<\overline{L}$. With probability at least $1-\delta$, $\mathfrak{E}_\text{high}$ is true, so we have
	\begin{align*}
		&~~~N\left(\varepsilon+C\sqrt{d}H^2\kappa\log\frac{H}{\kappa}\right)\\
		&\le\sum_{t=1}^\infty\mathds{1}\left\{4H\sum_{h=1}^H\Delta_{l_h^{(t)}-1}>\varepsilon+C\sqrt{d}H^2\kappa\log\frac{H}{\kappa}\right\} \tag{\cref{suboptim_gap}}\\
		&\le\sum_{t=1}^\infty\mathds{1}\left\{\exists h\in[H]\text{ s.t. }\Delta_{l_h^{(t)}-1}>\frac{\varepsilon}{4H^2}+\frac{C}{4}\sqrt{d}\kappa\log\frac{H}{\kappa}\right\}\\
		&\le\sum_{t=1}^\infty\mathds{1}\left\{\exists h\in[H]\text{ s.t. }l_h^{(t)}\le k\right\} \tag{by the definition of $k$}\\
		&\le H\sum_{l=1}^k D_l \tag{\cref{bound_big_l}}\\
		&=\widetilde{\mathcal{O}}\left(\frac{d^2H^6}{\varepsilon^2}\right)
	\end{align*}
\end{proof}

\section{Functoin class with bounded eluder dimension}\label{sec:eluder}

We extend the $Q$-function into a more general function class. Let $\F=\{f_\rho:\S\times\A\to[0,H]|\rho\in\Theta\}$ be a real-valued function class.

\begin{assumption}
	For any policy $\pi$ there exist $\theta_1^{\pi},\cdots,\theta_H^{\pi}\in\Theta$ such that $f_{\theta_h^{\pi}}(s,a)=Q^{\pi}(s,a)$ for all $s\in\S_h,a\in\A$.
\end{assumption}

\begin{definition}
	A state-action pair $(s,a)\in\S\times\A$ is $\varepsilon$-\emph{dependent} on $\{(s_i,a_i)\}_{i=1}^n\subseteq\S\times\A$ w.r.t. $\F$ if any pair of functions $f_1,f_2\in\F$ satisfying $\sum_{i=1}^n(f_1(s_i,a_i)-f_2(s_i,a_i))^2\le\varepsilon^2$ also satisfies $|f_1(s,a)-f_2(s,a)|\le\varepsilon$. Further, $(s,a)$ is $\varepsilon$-independent of $\{(s_i,a_i)\}_{i=1}^n$ w.r.t. $\F$ if $(s,a)$ is not $\varepsilon$-dependent on $\{(s_i,a_i)\}_{i=1}^n$.
\end{definition}

\begin{definition}
	The $\varepsilon$-\emph{eluder dimension} $\dim_E(F, \varepsilon)$ is the length $d$ of the longest sequence of elements in $\S\times\A$ such that, for some $\varepsilon^\prime\ge\varepsilon$, every element is $\varepsilon^\prime$-independent of its predecessors.
\end{definition}

Define the width of a subset $\tilde{\F}\subseteq\F$ at $(s,a)\in\S\times\A$ as
$$
w_{\tilde{\F}}(s,a)=\sup_{\underline{f},\overline{f}\in\tilde{\F}}\left(\overline{f}(s,a)-\underline{f}(s,a)\right).
$$

For a dataset $\D=\left\{(s_i,a_i)\right\}_{i=1}^n$ and $\tilde{\F}\subseteq\F$, define
$$
\operatorname{Cover}(\D,\tilde{\F},\varepsilon)=\left\{(s,a)\in\S\times\A:\text{$(s,a)$ is $\varepsilon$-dependent on $\left\{(s_i,a_i)\right\}_{i=1}^n$ w.r.t. $\F$, and $w_{\tilde{\F}}(s,a)\le\varepsilon$}\right\}.
$$
Let $\beta_t^*(\F,\delta,\alpha)$ be
$$
8H\ln\frac{N\left(\F,\alpha,\Vert\cdot\Vert_\infty\right)}{\delta}+2\alpha t\left(8H+\sqrt{8H\ln\frac{4t^2}{\delta}}\right).
$$
where $N(\F,\alpha,\Vert\cdot\Vert_\infty)$ denotes the $\alpha$-covering number of $\F$ in the sup-norm $\Vert\cdot\Vert_\infty$.

Define least squares estimates and confidence sets as
\begin{align*}
	\hat{f}_{h,k}^{(l)}&=\argmin_{f\in\F}\sum_{i=1}^k\left(f(s_{h,i}^{(l)},a_{h,i}^{(l)})-q_{h,i}^{(l)}\right)^2\\
	\F_{h,k}^{(l)}&=\left\{f\in\F:\sum_{i=1}^k\left(f(s_{h,i}^{(l)},a_{h,i}^{(l)})-\hat{f}_{h,k}^{(l)}(s_{h,i}^{(l)},a_{h,i}^{(l)})\right)^2\le\beta_k^*\left(\F,\frac{\delta}{4Hl^2},\frac{1}{T^2}\right)\right\}
\end{align*}

We use the same procedure as Algorithm~\ref{alg:regret} while for every $h\in[H],s\in\S_h,a\in\A$, re-define
\begin{align*}
	\mathfrak{I}_t^{(l)}(s)&=\mathbb{I}\left\{\exists 1\le k\le D_{t,h}^{(l)}\text{ s.t. $(s,a)\in\operatorname{Cover}\left(\left\{(s_{h,i}^{(l)},a_{h,i}^{(l)})\right\}_{i=1}^k,\F_{h,k}^{(l)},2^{-l}\right)$ for all $a\in\A_t^{(l)}(s)$}\right\}\\
	Q_t^{(l)}(s,a)&=\hat{f}_{h,k_t^{(l)}(s)}(s,a)
\end{align*}
and re-define constants
\begin{align*}
	D_l&=\Theta\left(2^{2l}H\dim_E\left(\F,T^{-1}\right)\left(\log\frac{HT}{\delta}+\log N(\F,T^{-2},\Vert\cdot\Vert_\infty)\right)\right)\\
	\Delta_l&=2^{-l}\\
	\overline{L}&=\left\lfloor\log_2 T\right\rfloor+1
\end{align*}

\begin{lemma}
	For any $t\ge 1,1\le l<\overline{L},h\in[H]$, it holds that $D_{t,h}^{(l)}\le D_l$.
\end{lemma}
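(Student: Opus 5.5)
The plan mirrors the proof of \cref{D_upper_bound}, with the elliptical potential argument replaced by the standard eluder-dimension counting argument of \citet{russo2013eluder}.

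\textbf{Step 1 (what each addition certifies).} Fix $l$ and $h$ with $1\le l<\overline{L}$, and write $n=D_{t,h}^{(l)}$. When the $i$-th element $(s_{h,i}^{(l)},a_{h,i}^{(l)})$ is appended in round $t_{h,i}^{(l)}$, the indicator $\mathfrak{I}$ was false. Since the exploratory action is taken outside the cover, the pair lies outside $\operatorname{Cover}(\{(s_{h,j}^{(l)},a_{h,j}^{(l)})\}_{j=1}^{i-1},\F_{h,i-1}^{(l)},2^{-l})$. By the definition of this cover, one of two things holds. Either the pair is $2^{-l}$-independent of its predecessors w.r.t.\ $\F$, or the width $w_{\F_{h,i-1}^{(l)}}$ at the pair exceeds $2^{-l}$. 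In both cases we obtain $f_1,f_2$ with $|f_1-f_2|>2^{-l}$ at the new point.

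\textbf{Step 2 (reduce to large width).} In the independence case, $f_1,f_2$ are not necessarily in the confidence set. I would therefore treat the two cases separately. Independent steps form a sequence in which each element is $2^{-l}$-independent of a superset of its predecessors, and independence of a superset implies independence of any subset. So the number of independent steps is at most $\dim_E(\F,2^{-l})\le\dim_E(\F,T^{-1})$, using $2^{-l}\ge T^{-1}$ for $l<\overline{L}$.

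For the width steps, I would invoke Proposition~3 of \citet{russo2013eluder}. With confidence radius $\beta=\beta_n^*$, it bounds the number of indices $i\le n$ with $w_{\F_{h,i-1}^{(l)}}(\cdot)>\epsilon$ by
\[
\left(\frac{4\beta}{\epsilon^2}+1\right)\dim_E(\F,\epsilon).
\]
The proof goes as follows. If the width exceeds $\epsilon$, there are $f_1,f_2$ in the confidence set differing by more than $\epsilon$. Their squared distance on the data is at most $4\beta$, so the point is $\epsilon$-dependent on at most $4\beta/\epsilon^2$ disjoint subsequences of the predecessors. A pigeonhole argument then finishes it. Set $\epsilon=2^{-l}$. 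Monotonicity of $\beta_k^*$ in $k$ lets all confidence sets be bounded by $\beta_n^*$.

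\textbf{Step 3 (solve the implicit inequality).} Combining the two cases gives
\[
n\le \left(4\cdot 2^{2l}\beta_n^*+2\right)\dim_E(\F,T^{-1}).
\]
Here $\beta_n^*=O\!\left(H\log\frac{HlN(\F,T^{-2},\|\cdot\|_\infty)}{\delta}+\frac{n}{T^2}\left(H+\sqrt{H\log\frac{n}{\delta}}\right)\right)$. Since $n\le T$, the second term is $O(H\log(HT/\delta))$. Hence $n\le D_l$ for an appropriate constant in $\Theta$. If an a priori bound $n\le T$ is unavailable, I would argue by induction on additions: the first time $n$ would exceed $D_l$, the inequality at $n$ already gives a contradiction, as in \cref{D_upper_bound}.

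\textbf{Main obstacle.} The main difficulty is Step~2. Cover is defined through the nested confidence sets $\F_{h,k}^{(l)}$ at different $k$, and through an existential over $k$ in $\mathfrak{I}$. I need the exploratory point to fail the cover for the full prefix $k=i-1$, which follows from the algorithm's choice of action. I also need to check that Russo--Van Roy's argument tolerates time-varying confidence sets, which it does provided each set's radius is at most $\beta_n^*$.
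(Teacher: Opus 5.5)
Your proposal is correct and is essentially the paper's argument. The appended points split into two groups: those $2^{-l}$-independent of their predecessors, which number at most $\dim_E(\F,2^{-l})\le\dim_E(\F,T^{-1})$ since $l<\overline{L}$, and those where the confidence-set width exceeds $2^{-l}$, which Proposition~3 of \citet{russo2013eluder} controls, after which $\beta^*_n$ is simplified using $n\le T$; your indexing by $\F_{h,i-1}^{(l)}$ is the right alignment with that proposition. The bound $n\le T$ holds outright because at most one point is appended per episode, so drop the fallback induction, which would not close in general: when $2^{l}$ is close to $T$, the term $2^{2l}\dim_E(\F,T^{-1})\cdot n/T^2$ coming from $\beta^*_n$ can have coefficient larger than one in $n$.
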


\begin{proof}
	Denote $D_{t,h}^{(l)}$ by $D$.
	\begin{align*}
		D&\le\sum_{i=1}^D\mathds{1}\left\{w_{\F_{h,i}^{(l)}}\left(s_{h,i}^{(l)},a_{h,i}^{(l)}\right)>2^{-l}\right\}+\dim_E\left(\F,2^{-l}\right)\\
		&\le\left(1+\frac{4\beta_D^*\left(\F,\frac{\delta}{4Hl^2},\frac{1}{T^2}\right)}{2^{-2l}}\right)\dim_E\left(\F,2^{-l}\right)+\dim_E\left(\F,2^{-l}\right) \tag{Proposition 3 from \citet{russo2013eluder}}\\
		&\le\mathcal{O}\left(2^{2l}\left(H\log\frac{Hl\cdot N\left(\F,T^{-2},\Vert\cdot\Vert_\infty\right)}{\delta}+\frac{H}{T}\sqrt{\log\frac{HT}{\delta}}\right)\right)\dim_E\left(\F,T^{-1}\right) \tag{$l<\overline{L}$}\\
		&\le D_l
	\end{align*}
\end{proof}

\begin{lemma}\label{eluder_event_high}
	Define event $\mathfrak{E}_\text{high}$ as
	$$
	\left\{\forall l\ge 1,h\in[H],k\ge 1,f_{\theta_h^{\pi_t^{(l)}}}\in\F_{h,k}^{(l)}\right\}
	$$
	Then $\P[\mathfrak{E}_\text{high}]\ge 1-\delta$.
\end{lemma}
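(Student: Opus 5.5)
The plan is to reduce the event to a family of standard least-squares confidence statements, one for each data stream $(l,h)$, and then take a union bound. I read the intended event as: for every $t$, every $l$, $h$, and every $k\le D_{t,h}^{(l)}$, the true function $f_{\theta_h^{\pi_t^{(l)}}}$ lies in $\F_{h,k}^{(l)}$.

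Fix $l\ge 1$ and $h\in[H]$. The data stream is $\{(s_{h,i}^{(l)},a_{h,i}^{(l)},q_{h,i}^{(l)})\}_{i\ge1}$.

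\textbf{Step 1: the stream is on-policy.} The argument of \cref{Q_same} and \cref{stay_same} uses only the covering/freezing structure and the determinism of transitions, so it carries over to the general-$\F$ definitions unchanged. It gives
\[
Q^{\pi_{t_{h,i}^{(l)}}^{(l)}}(s_{h,i}^{(l)},a_{h,i}^{(l)})=Q^{\pi_t^{(l)}}(s_{h,i}^{(l)},a_{h,i}^{(l)})\quad\text{for all } t\ge t_{h,i}^{(l)}.
\]
For $t\ge t_{h,k}^{(l)}$, the responses therefore satisfy
\[
q_{h,i}^{(l)}=f^\star(s_{h,i}^{(l)},a_{h,i}^{(l)})+\xi_{t_{h,i}^{(l)}},\qquad f^\star=f_{\theta_h^{\pi_t^{(l)}}},
\]
for every $i\le k$.

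The subtlety is that the regression target $f^\star$ depends on $t$, while the confidence set does not. The workaround is that the first $k$ responses are a fixed function of the first $k$ data points. Equivalently, define $g_i:=Q^{\pi_{t_{h,i}^{(l)}}^{(l)}}(s_{h,i}^{(l)},a_{h,i}^{(l)})$. Every $f^\star$ with $t\ge t_{h,k}^{(l)}$ agrees with $g_i$ on the first $k$ points. Since membership in $\F_{h,k}^{(l)}$ depends only on values at those $k$ points, it suffices to show that one such realizing function, say $f_{\theta_h^{\pi^{(l)}_{t_{h,k}^{(l)}}}}$, belongs to $\F_{h,k}^{(l)}$.

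\textbf{Step 2: noise properties.} By \cref{xi_martingale_subgaussian}, $\{\xi_{t_{h,i}^{(l)}}\}_i$ is a martingale difference sequence that is conditionally $\sqrt H$-subGaussian. The covariates $(s_{h,i}^{(l)},a_{h,i}^{(l)})$ are predictable, because deterministic transitions make them $\F_{t-1}$-measurable.

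\textbf{Step 3: confidence bound.} I would invoke the least-squares confidence bound of \citet{russo2013eluder} (their Proposition 6), with subGaussian parameter $\sigma^2=H$, discretization scale $\alpha=T^{-2}$, and failure probability $\delta/(4Hl^2)$. Its radius is exactly $\beta_k^*(\F,\delta/(4Hl^2),T^{-2})$; the factor $8H$ corresponds to $8\sigma^2$. It states that, with that probability, simultaneously for all $k$,
\[
\sum_{i\le k}\bigl(f^\star-\hat f_{h,k}^{(l)}\bigr)^2(s_{h,i}^{(l)},a_{h,i}^{(l)})\le\beta_k^*.
\]

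The main obstacle is that Russo--Van Roy assume a single fixed true function $f_\theta$. Here the target $g$ is only defined on the data points and is realizable up to step $k$ by a $k$-dependent element of $\F$. I would handle this by reproving their argument along the same lines. Their proof bounds, for each fixed $f$ in an $\alpha$-cover, the martingale
\[
\sum_i\Bigl[(f-q_i)^2-(g_i-q_i)^2\Bigr]
\]
from below via an exponential supermartingale. This step only uses that $g_i$ is the conditional mean of $q_i$, and that holds here. The comparison $\hat f$ versus $f^\star$ at step $k$ then uses the realizing element $f_{\theta_h^{\pi^{(l)}_{t_{h,k}^{(l)}}}}$, which attains loss $\sum_i(g_i-q_i)^2$ on the first $k$ points. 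This yields the bound with the same $\beta_k^*$.

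\textbf{Step 4: union bound.} Summing the failure probabilities over $h\in[H]$ and $l\ge1$ gives $\sum_{l,h}\delta/(4Hl^2)\le\delta\pi^2/24<\delta$. This establishes $\P[\mathfrak{E}_\text{high}]\ge1-\delta$.
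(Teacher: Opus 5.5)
Your proposal is correct and follows the same route as the paper: for each stream $(l,h)$, combine \cref{xi_martingale_subgaussian} with the least-squares confidence bound of \citet{russo2013eluder}, then take a union bound over $l$ and $h$. Your extra step makes explicit what the paper's one-line citation glosses over, namely using \cref{Q_same} to replace the $t$-dependent target by the predictable means $g_i$ (realized on the first $k$ points by an element of $\F$ whenever $k\le D_{t,h}^{(l)}$) and noting that the Russo--Van Roy argument needs only this. One bookkeeping correction: that proposition holds with probability $1-2\delta'$, so each stream fails with probability $\delta/(2Hl^2)$, as the paper states, rather than $\delta/(4Hl^2)$; the union bound still closes, since $\sum_{l\ge 1}\delta/(2l^2)=\delta\pi^2/12<\delta$.
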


\begin{proof}
	For every $l\ge 1,h\in[H]$, by \cref{xi_martingale_subgaussian} and Proposition 2 from \citet{russo2013eluder}, with probability at least $1-\frac{\delta}{2Hl^2}$, we have
	$$
	f_{\theta_h^{\pi_t^{(l)}}}\in\F_{h,k}^{(l)}
	$$
	for all $k\ge 1$. A union bound over $l$ and $h$ concludes the proof.
\end{proof}

\begin{lemma}\label{eluder_Q_acc}
	Under $\mathfrak{E}_\text{high}$, given $t\ge 1,l\ge 1,h\in[H],s\in\S_h$, if $\mathfrak{I}_t^{(l)}(s)$ is true, then for any $a\in\A_t^{(l)}(s)$,
	$$
	\left|Q_t^{(l)}(s,a)-Q^{\pi_t^{(l)}}(s,a)\right|\le 2^{-l}
	$$
\end{lemma}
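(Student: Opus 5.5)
Let $k=k_t^{(l)}(s)$ be the freezing index, so that by definition $Q_t^{(l)}(s,a)=\hat{f}_{h,k}^{(l)}(s,a)$. Since $\mathfrak{I}_t^{(l)}(s)$ is true, some $1\le k'\le D_{t,h}^{(l)}$ satisfies the covering condition for all $a\in\A_t^{(l)}(s)$. The plan is to first check that the redefined $k_t^{(l)}(s)$ is the least such index, so that $k\le D_{t,h}^{(l)}$ and, for every $a\in\A_t^{(l)}(s)$,
\[
(s,a)\in\operatorname{Cover}\left(\left\{(s_{h,i}^{(l)},a_{h,i}^{(l)})\right\}_{i=1}^k,\F_{h,k}^{(l)},2^{-l}\right).
\]
In particular $w_{\F_{h,k}^{(l)}}(s,a)\le 2^{-l}$.

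Next we show that both functions being compared lie in $\F_{h,k}^{(l)}$. The least squares estimate $\hat{f}_{h,k}^{(l)}$ belongs to $\F_{h,k}^{(l)}$ trivially, since its own empirical discrepancy is $0\le\beta_k^*$. The true function $f_{\theta_h^{\pi_t^{(l)}}}$ belongs to $\F_{h,k}^{(l)}$ under $\mathfrak{E}_\text{high}$ by Lemma~\ref{eluder_event_high}. Realizability then gives $f_{\theta_h^{\pi_t^{(l)}}}(s,a)=Q^{\pi_t^{(l)}}(s,a)$.

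Two points in this step need justification. First, the labels $q_{h,i}^{(l)}$ for $i\le k$ are unbiased for $Q^{\pi_t^{(l)}}$ and not merely for $Q^{\pi_{t_{h,i}^{(l)}}^{(l)}}$. This is exactly Lemma~\ref{Q_same}, whose proof only uses Lemma~\ref{stay_same} and the structure of the algorithm, so it carries over unchanged to the redefined $\mathfrak{I}$ and $Q_t^{(l)}$. I must still verify that Lemma~\ref{stay_same} holds with the new freezing rule: once $\mathfrak{I}_t^{(l)}(s)$ is true, the minimal index $k$ and the set $\F_{h,k}^{(l)}$ depend only on the first $k$ data points. Second, the confidence event must hold for the current $t$; Lemma~\ref{eluder_event_high} is stated for all $t$ via the martingale structure of Lemma~\ref{xi_martingale_subgaussian}.

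Finally, the width definition gives
\[
\left|Q_t^{(l)}(s,a)-Q^{\pi_t^{(l)}}(s,a)\right|=\left|\hat{f}_{h,k}^{(l)}(s,a)-f_{\theta_h^{\pi_t^{(l)}}}(s,a)\right|\le w_{\F_{h,k}^{(l)}}(s,a)\le 2^{-l}.
\]
The main obstacle is the membership of the true $Q^{\pi_t^{(l)}}$ in the frozen confidence set $\F_{h,k}^{(l)}$ built from data collected at earlier rounds. This requires the on-policy property (Lemma~\ref{Q_same}) to make all labels consistent with the single target $f_{\theta_h^{\pi_t^{(l)}}}$, so that the confidence result of \citet{russo2013eluder} applies uniformly over $k$.
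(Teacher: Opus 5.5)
Your proof is correct and is the paper's argument: with $k=k_t^{(l)}(s)\le D_{t,h}^{(l)}$, the truth of $\mathfrak{I}_t^{(l)}(s)$ gives $w_{\F_{h,k}^{(l)}}(s,a)\le 2^{-l}$, and both $\hat f_{h,k}^{(l)}$ and $f_{\theta_h^{\pi_t^{(l)}}}$ lie in $\F_{h,k}^{(l)}$ (the latter by $\mathfrak{E}_\text{high}$), so their difference at $(s,a)$ is at most the width. Your extra remarks on the on-policy property (Lemma~\ref{Q_same}) and on needing $k\le D_{t,h}^{(l)}$ bear on why $\mathfrak{E}_\text{high}$ holds (Lemma~\ref{eluder_event_high}), not on this lemma, which simply conditions on that event, so the point you leave unverified does not create a gap here.
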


\begin{proof}
	Denote $k_t^{(l)}(s,a)$ by $k$. Since $\mathfrak{I}_t^{(l)}(s)$ is true, $w_{\F_{h,k}^{(l)}}(s,a)\le 2^{-l}$ for all $a\in\A_t^{(l)}(s)$, so we have
	\begin{align*}
		&~~~\left|Q_t^{(l)}(s,a)-Q^{\pi_t^{(l)}}(s,a)\right|\\
		&=\left|\hat{f}_{h,k}^{(l)}(s,a)-f_{\theta_h^{\pi_t^{(l)}}}(s,a)\right|\\
		&\le w_{\F_{h,k}^{(l)}}(s,a)\le 2^{-l}
	\end{align*}
\end{proof}

\begin{lemma}\label{eluder_dependence}
	For $\varepsilon>0,\beta>0,1\le h_1<h_2\le H$, given a policy $\pi_0$, and $n+1$ trajectories
	$$
	\left\{\left(s_{h_1}^{(i)},a_{h_1}^{(i)},\cdots,s_{h_2}^{(i)},a_{h_2}^{(i)}\right)\right\}_{i=0}^n
	$$
	such that $\pi_0(s_h^{(i)})=a_h^{(i)}$ for all $0\le i\le n,h_1+1\le h\le h_2$, if $(s_{h_1}^{(0)},a_{h_1}^{(0)})$ is $\varepsilon$-dependent on $\left\{(s_{h_1}^{(i)},a_{h_1}^{(i)})\right\}_{i=1}^n$, and $s_{h_2}^{(0)}\neq s_{h_2}^{(i)}$ for all $1\le i\le n$, then for any policy $\pi$ and any $a_1,a_2\in\A$, it holds that
	$$
	\left|Q^\pi\left(s_{h_2}^{(0)},a_1\right)-Q^\pi\left(s_{h_2}^{(0)},a_2\right)\right|\le 2\varepsilon
	$$
\end{lemma}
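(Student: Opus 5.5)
The plan is to mirror the proof of \cref{Q_linear_dependence}, replacing the linear-algebraic step (\cref{elliptical_coef}) by the definition of $\varepsilon$-dependence. Fix $\pi$ and define the spliced policies exactly as before:
\[
\overline{\pi}(s)=\begin{cases}\pi_0(s) & \operatorname{stage}(s)\le h_2\\ \pi(s) & \operatorname{stage}(s)>h_2\end{cases},\qquad
\overline{\pi}_a(s)=\begin{cases}a & s=s_{h_2}^{(0)}\\ \overline{\pi}(s) & s\neq s_{h_2}^{(0)}\end{cases}.
\]
Transitions are deterministic, and $\pi_0$ agrees with the given actions at stages $h_1+1,\dots,h_2$. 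Hence the trajectory from $(s_{h_1}^{(i)},a_{h_1}^{(i)})$ under $\overline{\pi}$ or $\overline{\pi}_a$ is exactly the given one up to stage $h_2$.

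For $i\ge 1$, the trajectory avoids $s_{h_2}^{(0)}$ at stage $h_2$. At other stages it cannot meet $s_{h_2}^{(0)}$ at all, since the $\S_h$ are disjoint. Therefore
\[
Q^{\overline{\pi}_{a_1}}(s_{h_1}^{(i)},a_{h_1}^{(i)})=Q^{\overline{\pi}_{a_2}}(s_{h_1}^{(i)},a_{h_1}^{(i)})\quad\text{for all } 1\le i\le n.
\]
For $i=0$, the trajectory reaches $s_{h_2}^{(0)}$. With $R$ the expected reward over stages $h_1,\dots,h_2-1$ along this fixed path, we get
\[
Q^{\overline{\pi}_a}(s_{h_1}^{(0)},a_{h_1}^{(0)})=R+Q^{\pi}(s_{h_2}^{(0)},a).
\]
Here we use that after stage $h_2$ both policies follow $\pi$, so the continuation is $Q^\pi$.

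By the realizability assumption of this section, there are parameters with $f_1:=f_{\theta_{h_1}^{\overline{\pi}_{a_1}}}$ and $f_2:=f_{\theta_{h_1}^{\overline{\pi}_{a_2}}}$ in $\F$ that equal the corresponding $Q$-functions exactly on $\S_{h_1}\times\A$. Then $\sum_{i=1}^n(f_1-f_2)^2(s_{h_1}^{(i)},a_{h_1}^{(i)})=0\le\varepsilon^2$. Since $(s_{h_1}^{(0)},a_{h_1}^{(0)})$ is $\varepsilon$-dependent on the others, $|f_1-f_2|(s_{h_1}^{(0)},a_{h_1}^{(0)})\le\varepsilon$. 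Subtracting the two decompositions cancels $R$, which gives
\[
|Q^\pi(s_{h_2}^{(0)},a_1)-Q^\pi(s_{h_2}^{(0)},a_2)|\le\varepsilon\le 2\varepsilon.
\]

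The only delicate point is the bookkeeping that the $i\ge1$ trajectories are unaffected by the modification at $s_{h_2}^{(0)}$. This is where determinism, stage-disjointness, and the hypothesis $s_{h_2}^{(0)}\neq s_{h_2}^{(i)}$ are used. Since realizability here is exact, no $\kappa$ or $\lambda$ terms appear. The slack factor $2$ is harmless; it would be needed only if one argued through an intermediate $q_0$ as in \cref{Q_linear_dependence}.
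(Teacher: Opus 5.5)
Your proof is correct and follows the paper's argument. It uses the same spliced policies $\overline{\pi}$ and $\overline{\pi}_a$, the same observation that the data trajectories never reach $s_{h_2}^{(0)}$ (so the realizing functions agree on the data), and the same appeal to $\varepsilon$-dependence. The only difference is that you pair $\overline{\pi}_{a_1}$ with $\overline{\pi}_{a_2}$ directly, whereas the paper compares each $\overline{\pi}_a$ to $\overline{\pi}$ through $q_0$ and then applies the triangle inequality; this is why you get $\varepsilon$ instead of $2\varepsilon$.
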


\begin{proof}
	Fix a policy $\pi$. Define policy $\overline{\pi}$ and $\overline{\pi}_a$ for $a\in\A$ as follows, where $\operatorname{stage}(s)=h$ if $s\in\S_h$, 
	\begin{align*}
		\overline{\pi}(s)&=\begin{cases}
			\pi_0(s) & \operatorname{stage}(s)\le h_2\\
			\pi(s) & \operatorname{stage}(s)>h_2
		\end{cases}\\
		\overline{\pi}_a(s)&=\begin{cases}
			a & s=s_{h_2}^{(0)}\\
			\overline{\pi}(s) & s\neq s_{h_2}^{(0)}
		\end{cases}
	\end{align*}
	Since $s_{h_2}^{(0)}\neq s_{h_2}^{(i)}$ for all $1\le i\le n$, we have $Q^{\overline{\pi}}(s_{h_1}^{(i)},a_{h_1}^{(i)})=Q^{\overline{\pi}_a}(s_{h_1}^{(i)},a_{h_1}^{(i)})$. Let $R$ be
	$$
	\E\left[r_{h_1}+\cdots+r_{h_2-1}\left|s_{h_1}=s_{h_1}^{(0)},a_{h_1}=a_{h_1}^{(0)},\pi_0\right.\right]
	$$
	Then $Q^{\overline{\pi}_a}(s_{h_1}^{(0)},a_{h_1}^{(0)})=R+Q^\pi(s_{h_2}^{(0)},a)$ for all $a\in\A$.

	Since $\sum_{i=1}^n\left(f_{\theta_{h_1}^{\overline{\pi}}}(s_{h_1}^{(i)},a_{h_1}^{(i)})-f_{\theta_{h_1}^{\overline{\pi}_a}}(s_{h_1}^{(i)},a_{h_1}^{(i)})\right)^2=0\le\varepsilon^2$, we have
	$$
	\left|f_{\theta_{h_1}^{\overline{\pi}}}\left(s_{h_1}^{(0)},a_{h_1}^{(0)}\right)-f_{\theta_{h_1}^{\overline{\pi}_a}}\left(s_{h_1}^{(0)},a_{h_1}^{(0)}\right)\right|\le\varepsilon
	$$
	by the definition of $\varepsilon$-dependence. Let $q_0$ be $Q^{\overline{\pi}}(s_{h_1}^{(0)},a_{h_1}^{(0)})-R$. Then we have for any $a\in\A$,
	\begin{align*}
		&~~~\left|Q^{\pi}\left(s_{h_2}^{(0)},a\right)-q_0\right|\\
		&=\left|Q^{\pi}\left(s_{h_2}^{(0)},a\right)+R-Q^{\overline{\pi}}\left(s_{h_1}^{(0)},a_{h_1}^{(0)}\right)\right|\\
		&=\left|Q^{\overline{\pi}_a}\left(s_{h_1}^{(0)},a_{h_1}^{(0)}\right)-Q^{\overline{\pi}}\left(s_{h_1}^{(0)},a_{h_1}^{(0)}\right)\right|\\
		&=\left|f_{\theta_{h_1}^{\overline{\pi}_a}}\left(s_{h_1}^{(0)},a_{h_1}^{(0)}\right)-f_{\theta_{h_1}^{\overline{\pi}}}\left(s_{h_1}^{(0)},a_{h_1}^{(0)}\right)\right|\\
		&\le\varepsilon
	\end{align*}
	which gives the desired result.
\end{proof}

By \cref{eluder_Q_acc,eluder_dependence}, \cref{Q_near_optim,suboptim_gap} still hold.

\begin{lemma}\label{eluder_bound_l}
	For any $h\in[H]$, it holds that
	$$
	\sum_{t=1}^T 2^{-l_h^{(t)}}=\mathcal{O}\left(\sqrt{H^2T\dim_E\left(\F,T^{-1}\right)\left(\log\frac{HT}{\delta}+\log N(\F,T^{-2},\Vert\cdot\Vert_\infty)\right)}\right)
	$$
\end{lemma}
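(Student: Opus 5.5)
The proof will mirror that of \cref{bound_l}, with the linear-case dataset size replaced by the eluder-based $D_l$. Write $\Gamma=\dim_E(\F,T^{-1})\left(\log\frac{HT}{\delta}+\log N(\F,T^{-2},\Vert\cdot\Vert_\infty)\right)$, so that $D_l=\Theta(2^{2l}H\Gamma)$.

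The first step is to verify that \cref{bound_big_l} still holds in the eluder setting. Its proof only uses two facts. Each episode in which some $l_h^{(t)}\le k$ appends exactly one element to a dataset $\D_{h'}^{(l)}$ with $l\le k$. Each such dataset has size at most $D_l$, which is now given by the eluder analogue of \cref{D_upper_bound} proved just above. Both facts carry over unchanged, so for every $1\le k<\overline{L}$ and every $h$,
\[
\sum_{t=1}^T\mathds{1}\{l_h^{(t)}\le k\}\le\min\Big\{T,\;H\sum_{l=1}^k D_l\Big\}.
\]

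Next, use the same layer-cake decomposition:
\[
2^{-l_h^{(t)}}\le 2^{-\overline{L}}+\sum_{k=1}^{\overline{L}-1}2^{-(k+1)}\mathds{1}\{l_h^{(t)}\le k\}.
\]
Now $\overline{L}=\lfloor\log_2T\rfloor+1$, so $2^{-\overline{L}}\le 1/T$, and the first term contributes at most $1$ after summing over $t$.

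For the second term, choose $L$ with $2^{L}\approx\sqrt{T/(H^2\Gamma)}$, so that $H D_L\gtrsim T$. Since $D_l$ grows geometrically, $\sum_{l\le k}D_l=O(D_k)$. For $k\le L$, bound the count by $O(HD_k)$, which gives a contribution of $\sum_{k\le L}2^{-k}\cdot O(H^2\Gamma 2^{2k})=O(H^2\Gamma 2^{L})$. For $k>L$, bound the count by $T$, which gives $T\sum_{k>L}2^{-k}=O(T2^{-L})$. Balancing the two terms yields $O(\sqrt{H^2T\Gamma})$, which is the claim. If the balancing $L$ is at least $\overline{L}$, only the first regime occurs, and the total is then bounded by $T\cdot 2^{-1}$ as well; this case is trivially consistent with the claim because $\Gamma\ge 1$ forces $\sqrt{H^2T\Gamma}\gtrsim$ the relevant sum.

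The main point needing care is whether the constant hidden in $D_l$ is uniform in $l$. The eluder lemma used $l<\overline{L}$ to replace $\dim_E(\F,2^{-l})$ by $\dim_E(\F,T^{-1})$, and to absorb the $\log l$ and $\alpha t$ terms of $\beta^*$ into $\log\frac{HT}{\delta}$. This uniformity is exactly what makes $\sum_{l\le k}D_l$ a geometric sum. The rest is routine arithmetic.
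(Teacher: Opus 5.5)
Your proof is correct and follows essentially the same route as the paper. Both use the layer-cake decomposition of $2^{-l_h^{(t)}}$, the count bound $\min\{T, H\sum_{l\le k}D_l\}$ from Lemma~\ref{bound_big_l}, the geometric growth of $D_l$, and a cutoff $L$ with $HD_L\approx T$. The only difference is cosmetic: the paper uses $T\le HD_L$ to write the count as $H\sum_{l\le\min\{k,L\}}D_l$ and swaps sums to get $H\sum_{l\le L}D_l2^{-l}$, whereas you bound the $k>L$ tail by $T2^{-L}$ directly and balance the two regimes.
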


\begin{proof}
	Let $L$ be
	$$
	\frac{1}{2}\log_2\frac{T}{H^2\dim_E\left(\F,T^{-1}\right)\left(\log\frac{HT}{\delta}+\log N(\F,T^{-2},\Vert\cdot\Vert_\infty)\right)}+\Theta(1)
	$$
	so that $HD_L\ge T$. By \cref{bound_big_l} we have for any $1\le k<\overline{L}$,
	$$
	\sum_{t=1}^{T}\mathbb{I}\left[l_h^{(t)}\le k\right]\le\min\left\{T,H\sum_{l=1}^k D_l\right\}\le H\min\left\{D_L,\sum_{l=1}^k D_l\right\}\le H\sum_{l=1}^{\min\{k,L\}}D_l
	$$
	Therefore,
	\begin{align*}
		\sum_{t=1}^T 2^{-l_h^{(t)}}&\le\sum_{t=1}^T\left(2^{-\overline{L}}+\sum_{k=1}^{\overline{L}-1}\left(2^{-k}-2^{-(k+1)}\right)\mathds{1}\left\{l_h^{(t)}\le k\right\}\right)\\
		&\le\sqrt{T}+H\sum_{l=1}^L D_l\cdot 2^{-l}\\
		&=\mathcal{O}\left(\sqrt{H^2T\dim_E\left(\F,T^{-1}\right)\left(\log\frac{HT}{\delta}+\log N(\F,T^{-2},\Vert\cdot\Vert_\infty)\right)}\right)
	\end{align*}
\end{proof}

\begin{theorem}\label{thm:eluder}
	With probability at least $1-\delta$, it holds that for any $T\ge 1$,
	$$
	\operatorname{Reg}(T)=\mathcal{O}\left(\sqrt{H^6T\dim_E\left(\F,T^{-1}\right)\left(\log\frac{HT}{\delta}+\log N(\F,T^{-2},\Vert\cdot\Vert_\infty)\right)}\right)
	$$
\end{theorem}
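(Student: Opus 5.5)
The plan mirrors the proof of Theorem~\ref{thm:main_regret}, substituting the eluder-dimension versions of the supporting lemmas. First, I condition on the event $\mathfrak{E}_\text{high}$ of Lemma~\ref{eluder_event_high}, which holds with probability at least $1-\delta$. On this event the paper has already observed that Lemma~\ref{eluder_Q_acc} replaces Lemma~\ref{Q_acc} with accuracy $\Delta_l=2^{-l}$. It has also observed that Lemma~\ref{eluder_dependence} replaces Lemma~\ref{Q_linear_dependence}, again with error $2\cdot 2^{-l}=2\Delta_l$. Hence Lemmas~\ref{Q_near_optim} and~\ref{suboptim_gap} carry over verbatim. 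In particular, for every $t$,
\[
V^{\pi^*}(s_1^{(t)})-\E_{t-1}\Big[\sum_{h=1}^H r_h^{(t)}\Big]\le 4H\sum_{h=1}^H\Delta_{l_h^{(t)}-1}=8H\sum_{h=1}^H 2^{-l_h^{(t)}}.
\]

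Second, I sum this bound over $t\le T$ and turn it into a bound on $\operatorname{Reg}(T)$. The regret is defined through realized rewards, so I would add the martingale term $\sum_t\big(\E_{t-1}[\sum_h r_h^{(t)}]-\sum_h r_h^{(t)}\big)$. Each increment is bounded by $H$, so by Azuma--Hoeffding this term is $O(H\sqrt{T\log(1/\delta)})$, which is dominated by the target. I would split the failure probability between this step and $\mathfrak{E}_\text{high}$. The result is
\[
\operatorname{Reg}(T)\le 8H\sum_{h=1}^H\sum_{t=1}^T 2^{-l_h^{(t)}}+\widetilde O(H\sqrt T).
\]

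Third, I apply Lemma~\ref{eluder_bound_l} to each $h$. It gives $\sum_t 2^{-l_h^{(t)}}=O\big(\sqrt{H^2T\dim_E(\F,T^{-1})(\log\frac{HT}{\delta}+\log N(\F,T^{-2},\Vert\cdot\Vert_\infty))}\big)$. Multiplying by $8H\cdot H$ yields $\sqrt{H^6 T\cdots}$, which is the claimed bound.

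The delicate point is the top level $\overline{L}=\lfloor\log_2 T\rfloor+1$. Episodes that never drop below $\overline{L}$ contribute $2^{-\overline L}\le 1/T$ per step, so these terms are negligible. The ``for any $T$'' uniformity is the other issue, since $\overline{L}$ and the confidence widths depend on $T$. I would handle it with a doubling trick, or accept a fixed-$T$ statement, and I expect this to be the main obstacle. I must also check that Lemma~\ref{bound_big_l}, used inside Lemma~\ref{eluder_bound_l}, still applies with the redefined $D_l$, i.e.\ that it only needs $D_{t,h}^{(l)}\le D_l$ for $l<\overline L$, which the redefined size lemma provides.
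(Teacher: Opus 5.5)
Your proposal is correct and follows the paper's proof. On $\mathfrak{E}_\text{high}$, Lemma~\ref{suboptim_gap} with $\Delta_l=2^{-l}$ gives $\operatorname{Reg}(T)\le 8H\sum_{h=1}^H\sum_{t=1}^T 2^{-l_h^{(t)}}$, and Lemma~\ref{eluder_bound_l} finishes. Two side remarks: your Azuma--Hoeffding step is unnecessary, because the paper defines regret through the mean rewards $r(s_h^{(t)},a_h^{(t)})$, which under deterministic transitions already equal $\E_{t-1}[\sum_h r_h^{(t)}]$ given $s_1^{(t)}$; and the $T$-dependence of $\overline{L}$ and $\beta^*$ that you flag is likewise glossed over in the paper.
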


\begin{proof}
	By \cref{eluder_event_high}, $\mathfrak{E}_\text{high}$ holds with probability at least $1-\delta$, so we have
	\begin{align*}
		\operatorname{Reg}(T)&\le 4H\sum_{t=1}^T\sum_{h=1}^H\Delta_{l_h^{(t)}-1} \tag{\cref{Q_near_optim}}\\
		&=8H\sum_{h=1}^H\sum_{t=1}^T 2^{-l_h^{(t)}}\\
		&=\mathcal{O}\left(\sqrt{H^6T\dim_E\left(\F,T^{-1}\right)\left(\log\frac{HT}{\delta}+\log N(\F,T^{-2},\Vert\cdot\Vert_\infty)\right)}\right) \tag{\cref{eluder_bound_l}}
	\end{align*}
\end{proof}

\end{document}